\def\eqref#1{equation~\ref{#1}}
\def\Eqref#1{Equation~\ref{#1}}
\def\1{\bm{1}}
\def\rd{{\textnormal{d}}}
\def\rl{{\textnormal{l}}}
\def\vzero{{\bm{0}}}
\def\vone{{\bm{1}}}
\def\vmu{{\bm{\mu}}}
\def\vnu{{\bm{\nu}}}
\def\vtheta{{\bm{\theta}}}
\def\va{{\bm{a}}}
\def\vc{{\bm{c}}}
\def\vf{{\bm{f}}}
\def\vk{{\bm{k}}}
\def\vn{{\bm{n}}}
\def\vr{{\bm{r}}}
\def\vs{{\bm{s}}}
\def\vt{{\bm{t}}}
\def\vu{{\bm{u}}}
\def\vv{{\bm{v}}}
\def\vw{{\bm{w}}}
\def\vx{{\bm{x}}}
\def\vy{{\bm{y}}}
\def\mA{{\bm{A}}}
\def\mB{{\bm{B}}}
\def\mH{{\bm{H}}}
\def\mI{{\bm{I}}}
\def\mK{{\bm{K}}}
\def\mL{{\bm{L}}}
\def\mM{{\bm{M}}}
\def\mR{{\bm{R}}}
\def\mS{{\bm{S}}}
\def\mW{{\bm{W}}}
\def\mX{{\bm{X}}}
\def\mLambda{{\bm{\Lambda}}}
\def\mSigma{{\bm{\Sigma}}}
\DeclareMathAlphabet{\mathsfit}{\encodingdefault}{\sfdefault}{m}{sl}
\SetMathAlphabet{\mathsfit}{bold}{\encodingdefault}{\sfdefault}{bx}{n}
\def\gE{{\mathcal{E}}}
\def\gG{{\mathcal{G}}}
\def\gL{{\mathcal{L}}}
\def\gM{{\mathcal{M}}}
\def\gN{{\mathcal{N}}}
\def\gO{{\mathcal{O}}}
\def\gP{{\mathcal{P}}}
\def\gQ{{\mathcal{Q}}}
\def\gX{{\mathcal{X}}}
\def\sD{{\mathbb{D}}}
\def\sP{{\mathbb{P}}}
\def\sQ{{\mathbb{Q}}}
\def\sR{{\mathbb{R}}}
\newcommand{\E}{\mathbb{E}}
\newcommand{\KL}{\sD_{\mathrm{KL}}}
\newcommand{\tran}{^\top}
\newcommand{\inv}{^{-1}}
\DeclareMathOperator*{\argmax}{arg\,max}
\DeclareMathOperator*{\argmin}{arg\,min}
\newtheorem{theorem}{Theorem}
\newtheorem{lemma}{Lemma}
\newtheorem{definition}{Definition}
\newtheorem{example}{Example}
\DeclareRobustCommand{\blackdashed}{\raisebox{2pt}{\protect\tikz{\draw[black,dashed,line width=0.9pt](0,0) -- (5mm,0);}}\xspace}
\DeclareRobustCommand{\reddashed}{\raisebox{2pt}{\protect\tikz{\draw[red,dashed,line width=1.2pt](0,0) -- (5mm,0);}}\xspace}
\DeclareRobustCommand{\cyansolid}{\raisebox{2pt}{\protect\tikz{\draw[cyan,solid,line width=1.2pt](0,0) -- (5mm,0);}}\xspace}
\DeclareRobustCommand{\bluesolid}{\raisebox{2pt}{\protect\tikz{\draw[blue,solid,line width=1.2pt](0,0) -- (5mm,0);}}\xspace}
\newcommand{\ppi}{\textsc{ppi}\xspace}
\newcommand{\mpc}{\textsc{mpc}\xspace}
\newcommand{\reps}{\textsc{reps}\xspace}
\newcommand{\essps}{\textsc{essps}\xspace}
\newcommand{\lbps}{\textsc{lbps}\xspace}
\newcommand{\mppi}{\textsc{mppi}\xspace}
\newcommand{\pitwo}{\textsc{pi}$^2$\xspace}
\newcommand{\stomp}{\textsc{stomp}\xspace}
\newcommand{\gpmp}{\textsc{gpmp}\xspace}
\newcommand{\more}{\textsc{more}\xspace}
\newcommand{\cem}{\textsc{cem}\xspace}
\newcommand{\icem}{i\textsc{cem}\xspace}
\newcommand{\snis}{\textsc{snis}\xspace}
\newcommand{\rbf}{\textsc{rbf}\xspace}
\newcommand{\rff}{\textsc{rff}\xspace}
\newcommand{\qrff}{\textsc{qrff}\xspace}
\newcommand{\ess}{\textsc{ess}\xspace}
\newcommand{\gp}{\textsc{gp}\xspace}
\renewcommand{\rl}{\textsc{rl}\xspace}
\newcommand{\lgds}{\textsc{lgds}\xspace}
\newcommand{\promp}{\textsc{p}ro\textsc{mp}\xspace}
\newcommand{\se}{\textsc{se}\xspace}
\newcommand{\dmd}{\textsc{dmd}\xspace}
\newcommand{\kl}{\textsc{kl}\xspace}
\newcommand{\is}{\textsc{is}\xspace}
\newcommand{\hsu}{\texttt{HumanoidStandup-v2}\xspace}
\newcommand{\door}{\texttt{door-v0}\xspace}
\newcommand{\hammer}{\texttt{hammer-v0}\xspace}
\newcommand{\mujoco}{\texttt{MuJoCo}\xspace}
\definecolor{darkgray176}{RGB}{176,176,176}
\definecolor{darkturquoise0191191}{RGB}{0,191,191}
\definecolor{darkviolet1910191}{RGB}{191,0,191}
\definecolor{goldenrod1911910}{RGB}{191,191,0}
\definecolor{green01270}{RGB}{0,127,0}
\title{
Inferring Smooth Control:
Monte Carlo\\
Posterior Policy Iteration with Gaussian Processes
}
\author{
  Joe Watson$^1$
  \hspace{0.3cm}
  Jan Peters$^{1234}$\\
  $^1$Department of Computer Science,
  Technical University of Darmstadt\\
  $^2$Centre for Cognitive Science,
  Technical University of Darmstadt\\
  $^3$German Research Center for AI
  \hspace{0.3cm}
  $^4$Hessian.AI\\
  \texttt{\{joe,jan\}@robot-learning.de}
  \vspace{-2em}
}
\begin{document}

\maketitle

\begin{abstract}
Monte Carlo methods have become increasingly relevant for control of non-differentiable systems, approximate dynamics models and learning from data.
These methods scale to high-dimensional spaces and are effective at the non-convex optimizations often seen in robot learning.
We look at sample-based methods from the perspective of inference-based control, specifically posterior policy iteration.
From this perspective, we highlight how Gaussian noise priors produce rough control actions that are unsuitable for physical robot deployment.
Considering smoother Gaussian process priors, as used in episodic reinforcement learning and motion planning, we demonstrate how smoother model predictive control can be achieved using online sequential inference. 
This inference is realized through an efficient factorization of the action distribution and a novel means of optimizing the likelihood temperature to improve importance sampling accuracy.
We evaluate this approach on several high-dimensional robot control tasks, matching the sample efficiency of prior heuristic methods while also ensuring smoothness.
Simulation results can be seen at \href{https://monte-carlo-ppi.github.io/}{\ttfamily monte-carlo-ppi.github.io}.
\end{abstract}

\keywords{approximate inference, policy search, model predictive control}

\begin{figure}[!h]
	\centering
	\hspace{-2em}
		\begin{tabular}{c@{\hspace{0cm}}c@{\hspace{0cm}}c@{\hspace{0
						cm}}}

				\hspace{-0.925em}
				\includegraphics[width=.31\textwidth]{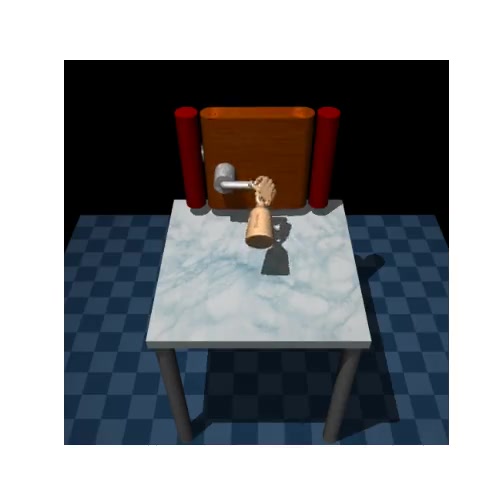} &
				\includegraphics{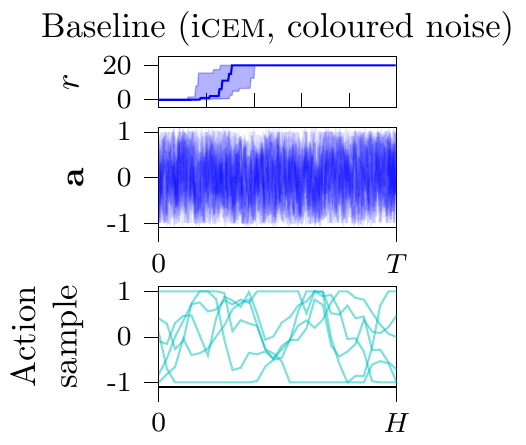} &
				\includegraphics{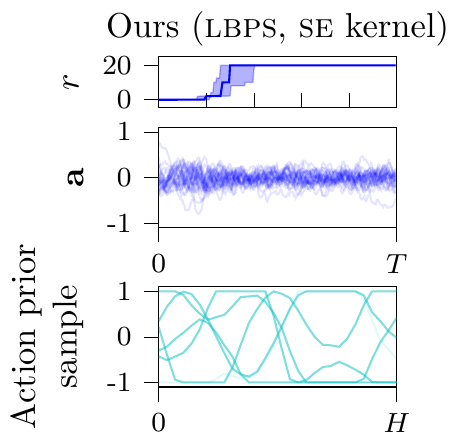}
			\end{tabular}
	\vspace{-1.0em}
	\caption{High-dimensional, contact-rich tasks such as manipulation (left) can be performed effectively using sample-based model predictive control.
	While prior work uses correlated actuator noise to improve sample-efficiency and exploration, these methods do not preserve the smoothness in the downstream actuation $\va$, resulting in aggressive control (center).
	We use smooth Gaussian process priors to infer posterior actions (right), which preserves smoothness while maintaining performance and sample efficiency, as both are using only 32 samples. Rewards $r$ show quartiles over 25 seeds.}
	\label{fig:pull}
	\vspace{-0.5em}
\end{figure}

\section{Introduction}\label{sec:intro}
\vspace{-0.5em}
Learning robot control requires optimization to be performed on sampled transitions of the environment \cite{sutton2018reinforcement}.
Monte Carlo methods \citep{mcbook} provide a principled means to approach such algorithms, bridging black-box optimization and approximate inference techniques.
These methods have been adopted extensively by the community for their impressive simulated \citep{chua2018deep,PinneriiCEM,okada2020variational,lutter2021learning} and real-world \citep{stulp11, stomp, daniel2016hierarchical, williams2018information, nagabandi2020deep,yang2020data, bhardwaj2022storm, lutter2020differentiable, carius2021constrained} robot learning results.
Their appeal includes requiring only function evaluations of the dynamics and objective, so can be applied to complex environments with minimal overhead (Figure \ref{fig:pull}).
Moreover, their stochastic nature also avoids issues with local minima that occur with gradient-based solvers \citep{wierstra14a, abdolmaleki2015model}. 
Finally, while Monte Carlo sampling is expensive, shooting methods can be effectively parallelized across processes and the advent of simulations on \textsc{gpu}s also provides a means of acceleration \citep{williams2017model, bhardwaj2022storm}.
However, some aspects of black-box optimization are open to criticism.
Sample-based solvers such as the cross-entropy method (\cem) \citep{cem} appear `wasteful', ignoring computation by throwing away the majority of samples, while others enforce high-entropy search distributions to avoid premature convergence \citep{williams2017model}. 
Moreover, many design decisions and hyperparameters are heuristic in nature, which is undesirable from both the user- and research perspective when interpreting, tuning or advancing these methods.

In this work, we consider Monte Carlo optimal control through the broader perspective of inference-based control \cite{
Dayan97,
06-toussaint-ICML,
Peters_PICML_2007, toussaint2009robot, kappen2013, MukadamCYB17, levine2018reinforcement, watson2021stochastic}, where optimization is achieved through importance sampling~\cite{kappen2016adaptive}.
This approach covers settings such as policy search \cite{deisenroth2013survey}, motion planning \cite{stomp, Mukadam_mp} and model predictive control (\mpc) \cite{williams2017model}.
From this view point, we highlight two key design decisions: the likelihood temperature and the distribution over action sequences. 
An adaptive temperature scheme is crucial for controlling the optimization behavior across objectives and distributions, but in many methods this aspect is ignored or opaque.
Moreover, correlated action sequences are equally crucial for performing effective exploration and control in practical settings.
\emph{Smoothness}, arising from such correlations, is an aspect of human motion \citep{flash1985coordination}. 
Smooth priors have taken many forms across domains, such as movement primitives \citep{paraschos2013probabilistic}, smoothed- \citep{nagabandi2020deep,yang2020data} or coloured noise \cite{PinneriiCEM}.
We use Gaussian processes \citep{gpml} as action priors and show how they can be scaled to high-dimensional action spaces through factorization of the covariance.
Evaluating on simulated robotic systems, we reproduce prior results on policy search while transferring these ideas to \mpc, matching prior performance with respect to sample efficiency while ensuring smooth actuation.

\textbf{Contribution.}
First, we present a perspective of episodic inference-based control based on Gibbs posteriors.  
Using this view, we then present novel Monte Carlo variants that incorporate the approximate inference error due to importance sampling, simplifying the hyperparameter while providing regularization.  
Thirdly, we demonstrate how richer Gaussian process priors can be combined with these regularized Gibbs posteriors for Monte Carlo \mpc using online sequential inference, which achieves greater smoothness and sample efficiency than standard white noise priors.
We highlight connections between this approach to \mpc and effective prior approaches to episodic policy search.

\section{Monte Carlo Methods for Optimal Control}
\label{sec:soc}
This section outlines the problem setting and introduces variational optimization and posterior policy iteration methods.
We consider the standard (stochastic) optimal control setting in discrete-time, with states $\vs\in\sR^{d_s}$, actions $\va\in\sR^{d_a}$.
Optimization is framed as maximizing a reward $r{\,:\,}\sR^{d_s}{\,\times\,}\sR^{d_a}{\,\rightarrow\,}\sR$ under dynamics $p(\vs_{t+1}\mid\vs_t,\va_t)$ and initial state distribution $p(\vs_1)$,
\begin{align}
    \max_{\va_1,\dots,\va_T}\; \E\left[\textstyle\sum_{t=1}^T r(\vs_t,\va_t)\right]
    \quad
    \text{s.t.}
    \quad
    \vs_{t+1}\sim p(\cdot\mid\vs_t,\va_t),
    \quad
    \vs_1 \sim p(\vs_1).\label{eq:soc}
\end{align}
This work focuses on the episodic setting, where optimization is performed after evaluating the current solution over a finite-time horizon $T$.
We frequently use the episodic return $R$, where $R(\mS,\mA){\,=\,} \sum_{t=1}^T r(\vs_t, \va_t)$, using upper-case to denote sequences, e.g. $\mA{\,:=\,}\{\va_1,\dots,\va_{T}\}$.

\subsection{Variational Optimization with Gibbs Posteriors}
\label{sec:vi}
The optimization outlined above is amenable to gradient-based solvers such as stochastic differential dynamic programming \citep{5530971}. 
However, to aid optimization through exploration and regularization, we can consider optimizing a parametric \emph{belief} over action sequences $q{\,\in\,}\gQ$. 
The variational formulation (\Eqref{eq:ELBO}) generalizes Bayes' rule beyond optimizing likelihoods and resembles many learning algorithms
\cite{zellner1988optimal,khan2021bayesian}.
This work concerns optimizing an open-loop action sequence to maximize an episodic return. 
Bayesian inference of an action sequence from data, known as input estimation, can be performed using message passing of the appropriate probabilistic graphical model, capturing the sequential structure of the problem and necessary priors \citep{watson2021stochastic}.
If the measurement log-likelihood is replaced with the control objective, this inference computation can be shown to have precise dualities with dynamic programming-based optimal control \citep{i2corl}.
While this switch in objective provides a powerful suite of inference tools for efficient computation, it requires treating the control objective as a Markovian log-likelihood, which is not the case for episodic objectives. 
The Gibbs likelihood is a general treatment of the objective-as-likelihood (Definition \ref{def:gibbs})~\citep{guedj2019primer, JMLRAlquier16}.

\newpage

\begin{definition}
(Gibbs likelihoods and posteriors)
For a loss $f$ and prior $p(\vx)$, the Gibbs posterior $q_\alpha$ for parameter $\vx$ is derived by constructing the Gibbs likelihood $\exp(-\alpha\,f(\vx))$ from the loss,
\begin{align}
	q_\alpha(\vx) = \frac{1}{Z_\alpha}
	\exp(-\alpha\,f(\vx))
\,p(\vx),
	\quad
	Z_\alpha = \int \exp(-\alpha\,f(\vx))\,p(\vx)\, \rd\vx,
	\quad
	\alpha \geq 0.
	\label{eq:Gibbs}
\end{align}
This posterior minimizes the following objective
\begin{align}
    q_\alpha = \textstyle\argmin_{q\in\gQ} \E_{\vx\sim q(\cdot)}[f(\vx)] + \textstyle\frac{1}{\alpha}\KL[q(\vx)\mid\mid p(\vx)].
    \label{eq:ELBO}
\end{align}
This objective appears in PAC-Bayes methods \citep{guedj2019primer}, mirror descent methods~\citep{dai2016provable} and Bayesian inference as the evidence lower bound objective when $f(\vx)$ is a negative log-likelihood~\citep{JMLRAlquier16}.
\label{def:gibbs}
\end{definition}

Augmenting the variational optimization objective with prior regularization (\Eqref{eq:ELBO}), we obtain an expression of the optimal belief in the action sequence (\Eqref{eq:Gibbs}).
The parameter $\alpha$ has a range of meanings, depending on context. 
In PAC-Bayes it is the dataset size, in mirror descent it is an update step size and in risk-sensitive control it is the sensitivity \citep{rawlik2013stochastic,i2cacc}. 
Example \ref{ex:newton} in Appendix \ref{sec:discussion} examines a tractable linear-quadratic-Gaussian example of this update, demonstrating its relation to Newton-like optimization and highlighting the effect $\alpha$ has on the regularized update.

\subsection{Posterior Policy Iteration}
The optimal control problem (\Eqref{eq:soc}) is ambiguous regarding whether the action sequence or state-action trajectory is the optimization variable.
Applying the Gibbs posterior to the optimal control setting recovers Rawlik et al.'s posterior policy iteration \cite{rawlik2013stochastic}, which can be implemented using the joint distribution or policy. 
We consider the following joint state-action distribution, that factorizes in the following Markovian fashion
$p(\mS,\mA){\,=\,}p(\vs_1)\prod_{t=1}^T p(\vs_{t+1}\mid\vs_t,\va_t)\,p(\va_t\mid\vs_t)$.
Posterior policy iteration updates the state-action distribution through the policy, constructing a Gibbs likelihood from the reward, as the dynamics and initial state distribution are constant.
\begin{definition} \label{def:ppi}
(Posterior policy iterations (\textsc{ppi}) \citep{rawlik2013probabilistic})
As the initial distribution and dynamics are shared by the prior and posterior joint state-action distribution, the joint Gibbs posterior $q_\alpha(\mS,\mA) \propto \exp(\alpha R(\mS,\mA))\, p(\mS,\mA)$
can be alternatively expressed using the policy posterior update 
$q_\alpha(\mA\mid\mS) \propto \exp(\alpha R(\mS,\mA)) \, p(\mA\mid\mS)$.
\end{definition}
Using this update, the key decisions are choosing $p(\mA\mid\mS)$, $\alpha$ and the inference approximation.
If $p$ and $q_\alpha$ are Gaussian, then \ppi involves iterative refinement of the distribution.
In the Monte Carlo setting, $q_\alpha$ takes the form of an importance-weighted empirical distribution.
To apply iteratively, $p$ is updated using the M-projection, following the objective (\Eqref{eq:ELBO}), i.e. a weighted maximum likelihood fit of the policy parameters \cite{deisenroth2013survey}.
This approach is a stochastic approximate expectation maximization
(\textsc{saem}) method \citep{wirth2016model} and described fully in Algorithm \ref{alg:ppi} in the Appendix.
We argue a key aspect of \ppi methods is how to specify the inverse temperature $\alpha$ during optimization (Section \ref{sec:ppi}), as it has a strong influence on the posterior, which is important when fitting rich distributions such as Gaussian processes (Section~\ref{sec:ppi_mpc}) from samples.
Gaussian process action priors can be applied to several control settings, such as policy search and model predictive control (Section \ref{sec:exp_res}).

\section{Posterior Policy Constraints for Monte Carlo Optimization}
\label{sec:ppi}
The Gibbs posterior in Definition \ref{def:ppi} has been adopted widely in control, albeit from a range of different perspectives, such as Bayesian smoothing \citep{toussaint2009robot}, solutions to the Feynman-Kac equation~\citep{7442792}, maximum entropy~\citep{levine2018reinforcement}, mirror descent~\citep{Boots-RSS-19} and entropy-regularized reinforcement learning~\cite{Peters2010REPS}.
An open question is how best to set $\alpha$ for Monte Carlo optimization?
Relative entropy policy search (Definition \ref{def:ereps}), provides a principled and effective means of deriving $\alpha$ for stochastic optimization, using the constrained optimization view of entropy-regularized optimal control. 

\begin{definition} \label{def:ereps}
(Episodic relative entropy policy search (e\reps) \citep{deisenroth2013survey})
Maximize the expected return, subject to a hard \kl bound $\epsilon$ on the policy update,
\begin{align*}
    \textstyle\max_\vtheta
    \E_{
    \vs_{t+1}\sim p(\cdot|\vs_t,\va_t),
    \va_t\sim q_\vtheta(\cdot|\vs_t),
    \vs_1\sim p(\cdot)
    }
    [R(\vs_t, \va_t)]
    \quad
    \text{s.t.}
    \quad
    \KL[q_\vtheta(\mA|\mS) \mid\mid p(\mA|\mS)] \leq \epsilon.&
\end{align*}
The posterior policy takes the form
${q_\vtheta(\mA|\mS){\,\propto\,}\exp(\alpha R) \,p(\mA|\mS)}$, where $\alpha$ is derived from Lagrange multiplier calculated by minimizing the empirical dual $\gG(\cdot)$ using $N$ samples,
\begin{align*}
\textstyle\min_\alpha \gG(\alpha) = 
\textstyle\frac{\epsilon}{\alpha} + \frac{1}{\alpha}\log\int p(\mS,\mA)\exp(\alpha R(\mS,\mA))\,\rd\mS\,\rd\mA
\approx 
\frac{\epsilon}{\alpha} + \frac{1}{\alpha}\log\frac{1}{N}\sum_{n=1}^N\exp(\alpha R_n).
\end{align*}
\end{definition}
\newpage
While \reps is a principled approach to stochastic optimization, we posit two weaknesses: The hard \kl constraint $\epsilon$ is difficult to specify, as it depends on the optimization problem, distribution family and dimensionality. Secondly, the Monte Carlo approximation of the dual has no regularization and may poorly adhere to the \kl constraint without sufficient samples.
Therefore, we desire an alternative approach that resolves these two issues, capturing the Monte Carlo approximation error with a simpler hyperparameter. 
To tackle this problem, we interpret the \reps update as a pseudo-posterior, where the temperature is calculated using the \kl constraint. 
We make this interpretation concrete by reversing the objective and constraint, switching to an equality constraint for the expectation,
\begin{align*}
    \textstyle\min_\vtheta
    \KL[q_\vtheta(\mA\mid\mS) \mid\mid p(\mA\mid\mS)]
    \quad
    \text{s.t.}
    \quad
    \E_{
    \vs_{t+1}\sim p(\cdot|\vs_t,\va_t),\,
    \va_t\sim q_\vtheta(\cdot|\vs_t),\,
    \vs_1\sim p(\cdot)
    }[\sum_t r(\vs_t, \va_t)]
    = R^*.&
\end{align*}
This objective is a \emph{minimum relative entropy problem}~\cite{1056374}, which yields the same Gibbs posterior as e\reps
(Lemma 1, Appendix \ref{sec:discussion}).
With exact inference, a suitable prior and oracle knowledge of the maximum return, this program computes the optimal policy in a single step by setting $R^*$ to the optimal value. 
However, in this work, the expectation constraint requires self-normalized importance sampling (\snis) on sampled returns $R^{(n)}$ using samples from the current policy prior,
\begin{align*}
    \E_{
    \vs_{t+1}\sim p(\cdot|\vs_t,\va_t),\,
    \va_t\sim q_\vtheta(\cdot\mid\vs_t),\,
    \vs_1\sim p(\cdot)
	}
    [{\textstyle\sum_t r(\vs_t, \va_t)}]
    \approx
    {\textstyle\sum_n} w^{(n)}_{q/p} R^{(n)}
    =
    \frac{\sum_n R^{(n)} \exp(\alpha\,R^{(n)})}{\sum_n \exp(\alpha\,R^{(n)})}
    =
    R^*.
\end{align*}
Rather than specifying $R^*$ here, we identify that this estimator is fundamentally limited by inference accuracy. 
We capture this error by applying an \is-derived concentration inequality to this estimate (Theorem \ref{ref:lower_bound}) \citep{metelli2020importance}.
This lower bound can be used as an objective for optimizing $\alpha$, balancing policy improvement with approximate inference accuracy.

{
\begin{theorem} \label{ref:lower_bound}
(Importance sampling estimator concentration inequality (Theorem 2, \citep{metelli2020importance}))
Let $q$ and $p$ be two probability densities such that $q{\,\ll\,}p$ and
$d_2[q\mid\mid p] < + \infty$.
Let $\vx_1,\vx_2,\dots,\vx_N$ i.i.d. random variables sampled from $p$ and $f{\,:\,}\gX{\,\rightarrow\,}\sR$ be a bounded function $(||f||_\infty<+\infty)$.
Then, for any
$0 < \delta \leq 1$ and 
$N>0$ with probability at least $1{\,-\,}\delta$:
\vspace{-1em}
\begin{align}
    \E_{\vx\sim q(\cdot)}
    [f(\vx)]
    \geq
    {\textstyle\frac{1}{N}
    \textstyle\sum_{i=1}^N}
    w_{q/p}
    (\vx_i)
    f(\vx_i)
    -
    ||f||_\infty
    \sqrt{\frac{(1-\delta)
    d_2[q(\vx)\mid\mid p(\vx)]}{\delta\, N}}
    .\label{eq:lb}
\vspace{-1em}
\end{align}
\end{theorem}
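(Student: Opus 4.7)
The plan is to recognise the importance sampling estimator
$\hat{\mu}_N = \frac{1}{N}\sum_{i=1}^N w_{q/p}(\vx_i)\,f(\vx_i)$
as an unbiased estimator of $\mu = \E_{\vx\sim q}[f(\vx)]$ under $\vx_i\sim p$, bound its variance in terms of $\|f\|_\infty$ and $d_2[q\|p]$, and then apply a one-sided Chebyshev-type (Cantelli) concentration inequality to turn the variance bound into the advertised high-probability lower bound.

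First I would verify unbiasedness, $\E_p[\hat{\mu}_N] = \E_p[w_{q/p}(\vx)f(\vx)] = \E_q[f(\vx)] = \mu$, which uses $q\ll p$ so the importance weight is well-defined $p$-a.s. Next I would bound the per-sample variance by the second moment and then factor out $f$:
\begin{align*}
\Var_p\!\left[w_{q/p}(\vx)f(\vx)\right]
\;\leq\;
\E_p\!\left[w_{q/p}(\vx)^2 f(\vx)^2\right]
\;\leq\;
\|f\|_\infty^2\,\E_p\!\left[w_{q/p}(\vx)^2\right]
\;=\;
\|f\|_\infty^2\,d_2[q\|p],
\end{align*}
where the final equality is the definition of the (exponentiated) Rényi-2 divergence $d_2[q\|p] = \E_p[(q/p)^2]$, finite by hypothesis. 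By independence of the samples, $\Var_p[\hat{\mu}_N] \leq \|f\|_\infty^2\,d_2[q\|p] / N$.

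Now I would invoke Cantelli's inequality, which for a random variable $X$ with finite variance states that $\Pr(X - \E[X] \geq t) \leq \Var[X]/(\Var[X] + t^2)$ for any $t > 0$. Applying this to $\hat{\mu}_N$ and setting the right-hand side equal to $\delta$ gives $t = \sqrt{(1-\delta)\Var_p[\hat{\mu}_N]/\delta}$, so with probability at least $1-\delta$ over the draw of $\vx_1,\dots,\vx_N\sim p$:
\begin{align*}
\mu \;\geq\; \hat{\mu}_N - \sqrt{\frac{(1-\delta)\,\Var_p[\hat{\mu}_N]}{\delta}}
\;\geq\;
\hat{\mu}_N - \|f\|_\infty\sqrt{\frac{(1-\delta)\,d_2[q\|p]}{\delta\,N}},
\end{align*}
which is exactly \eqref{eq:lb}.

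The argument is essentially routine given the right tool, so there is no substantial obstacle; the only subtlety is the choice of Cantelli over a two-sided Chebyshev inequality, which saves a factor of $\sqrt{2}$ and produces the asymmetric $(1-\delta)/\delta$ factor appearing in the bound. A minor bookkeeping point is that the bound $\E_p[w^2 f^2] \leq \|f\|_\infty^2\,\E_p[w^2]$ is slightly loose compared to using $\Var \leq \E[X^2]$ directly on $wf$, but this is the standard form used in \citep{metelli2020importance} and is what produces a clean expression in terms of $d_2[q\|p]$ alone.
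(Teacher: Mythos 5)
Your proof is correct: unbiasedness of the importance sampling estimator, the variance bound $\Var_p[\hat{\mu}_N] \leq \|f\|_\infty^2\, d_2[q\mid\mid p]/N$, and Cantelli's one-sided inequality with $t = \sigma\sqrt{(1-\delta)/\delta}$ together yield exactly the stated bound. The paper itself offers no proof of this theorem --- it is imported verbatim by citation from Metelli et al. --- and your reconstruction follows the same Cantelli-plus-second-moment argument used in that reference, so there is nothing to fault.
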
}
The divergence term $d_2[q||\,p]$ is the exponentiated R\'enyi-2 divergence, $\exp\sD_2[q||\,p]$. 
While this is tractable for the multivariate Gaussian, it is otherwise not available in closed form.
Fortunately, we can use the effective sample size (\ess) \citep{kong1992note} as an approximation, as $\hat{N}_\alpha{\,\approx\,}N \,/\,d_2[q_\alpha||\,p]$ \cite{metelli2020importance, cortes2010learning} (Lemma 2, see Section \ref{sec:discussion} of the Appendix).
Combining \Eqref{eq:lb} with our constraint, instead of setting $R^*$, we maximize the \textsc{is} lower bound $R_{\text{\textsc{lb}}}^*$ to form an objective for the inverse temperature $\alpha$ which incorporates the inference accuracy due to the sampling given inequality probability $1-\delta$,
\begin{align}
    \max_\alpha R_{\text{\textsc{lb}}}^*(\alpha, \delta) 
    &=
    \E_{q_\alpha/p}[R] -
    \gE_R(\delta, \hat{N}_\alpha),
    \quad
    \gE_R(\delta, \hat{N}_\alpha) = ||R||_\infty\sqrt{\frac{(1-\delta)}{\delta}}\frac{1}{\sqrt{\hat{N}_\alpha}}. 
\end{align}
We refer to this approach as \emph{lower-bound policy search} (\lbps).
This objective combines the expected performance of $q_\alpha$,  based on the \is estimate $\E_{q_\alpha/p}[\cdot]$, with regularization $\gE_R$ based on the return and inference accuracy.
Treating $p$, $N$, $||R||_\infty$ as task-specific hyperparameters, the only algorithm hyperparameter $\delta{\,\in\,}[0,1)$ defines the probability of the bound.
In practice, self-normalized importance sampling is used for \ppi, as the normalizing constants of the Gibbs likelihoods are not available. 
While Metelli et al. also derive an \snis lower bound \citep{metelli2020importance}, we found, as they did, that the \textsc{is} lower bound with \snis estimates work better in practice due to the conservatism of the \snis bound.
An interpretation of this approach is that the R\'enyi-2 regularization constrains the Gibbs posterior to be one that can be estimated from the finite samples, as the divergence is used in evaluating \is sample complexity ~\citep{agapiou2017importance, hernandez2020robust}. 
Moreover, the role of the \ess for regularization is similar to the `elite' samples in \cem.
Connecting these two mechanisms as robust maximum estimators (Section \ref{sec:discussion}), we also propose \emph{effective sample size policy search} (\essps), which optimizes $\alpha$ to achieve a desired \ess $N^*$, i.e. a R\'enyi-2 divergence bound, using the objective $\min_\alpha |\hat{N}_\alpha{-}N^*|$. 
More details regarding \ppi (Section \ref{sec:discussion})
and temperature selection methods
(Table \ref{tab:temperature}) are in the Appendix.

This section introduces two methods, \lbps and \essps, for constraining the Gibbs posteriors for Monte Carlo optimization. 
These methods provide statistical regularization through soft and hard constraints involving the effective sample size, which avoids the pitfall of fitting high-dimensional distributions to a few effective samples. 
A popular setting for these methods is \mpc, which performs episodic optimization over short planning horizons while adapting each time step to the current state. 
Moreover, for optimal control, we also need to specify a suitable prior over action sequences.
To apply \ppi to the \mpc setting, we must implement online optimization given this prior over actions.

\newpage

\begin{figure}[tb]
	\vspace{-3em}
	\includegraphics{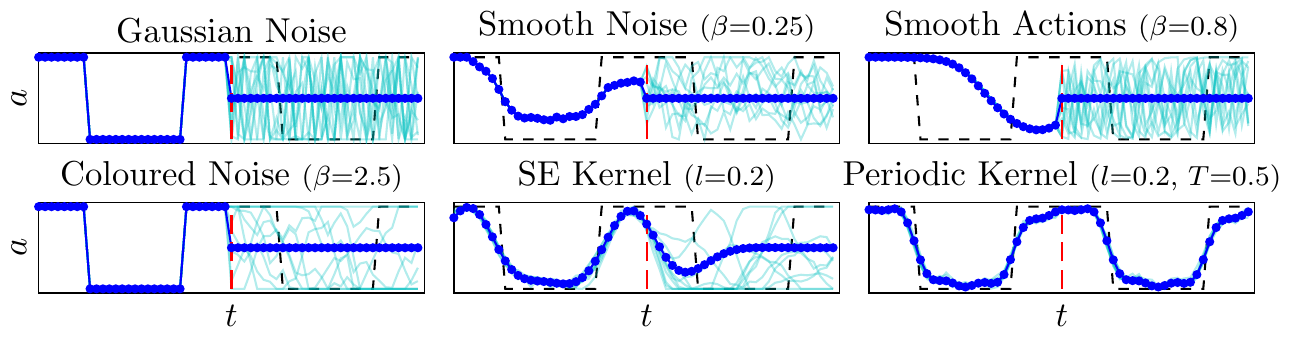}
	\vspace{-1em}
	\caption{A practical aspect of Monte Carlo control methods for robotics is optimizing smooth action sequence. 
	This example shows a non-smooth optimal sequence \blackdashed, which may be undesirable, though optimal, to fit exactly.
	Prior methods struggle at providing both effective smooth solutions in the mean \bluesolid and action samples \cyansolid, as they ultimately fit the action distribution in an independent fashion. Using kernel-derived covariance function provides both. The line \reddashed denotes the optimization horizon, beyond which is exploratory actions derived from both the posterior and prior. 
}
	\label{fig:policy_sample}
	\vspace{-0.5em}
\end{figure}
\section{Online Posterior Policy Iteration \& Prior Design}
\label{sec:ppi_mpc}
In this section, we derive the online realization of posterior policy iteration that uses and maintains correlated action priors, computing the finite-horizon $H$ future actions given a likelihood on a subset of actions from the past.
$R$ represents the return-based Gibbs likelihood term (Definition \ref{def:gibbs}),
\begin{align}
    q_\alpha(\va_{t:t+H}\mid R_{1:\tau})
    &=
    \textstyle\int q_\alpha(\va_{1:t+H}\mid R_{1:\tau})\,\rd\va_{1:t-1}  
    \propto
    \textstyle\int p(R_{1:\tau}\mid\va_{1:\tau})\,p(\va_{1:t+H})\,\rd\va_{1:t-1},
    \label{eq:update_marg}
\end{align}
where $\tau\leq t+H$. 
As an analogy, this is equivalent to combining forecasting with state estimation, i.e. $p(\vx_{t:t+H}|\vy_{1:t})$ for states $\vx$ and measurements $\vy$.
For correlated priors on the action space, this computation is tractable if working with Gaussian processes.
In fact, a recurring aspect across several many posterior policy iteration-like approaches is the use of Gaussian process policies,
\begin{align*}
    p(\mA\mid\mS) =
    \begin{cases}
    \prod_t\gN(\vmu_t, \mSigma_t),
    &\text{(Independent Gaussian noise, e.g. \citep{williams2017model}),}\\
    \gN(\vmu_\vw\tran\phi(t),  \phi(t)\tran\mSigma_\vw\phi(t)),
    &\text{(Bayesian linear regression, e.g. \promp \protect\citep{paraschos2013probabilistic})},\\
	\prod_t\gN(\vk_t + \mK_t \vs, \mSigma_t),
	&\text{(time-varying linear Gaussian e.g. \citep{rawlik2013stochastic,gomez2014policy, i2cacc}),}\\
    \gG\gP(\vmu(\vs), \mSigma(\vs)),
    &\text{(non-parametric Gaussian process \cite{np-reps}).}\\
    \end{cases}
\end{align*}
Despite the simplicity of Gaussian action noise, for robotics, more sophisticated noise is often desired for safety and effective exploration \cite{NIPS2008KoberPeters_54110,deisenroth2013survey}.
Prior work has proposed first-order smoothing \citep{nagabandi2020deep,yang2020data}. 
Using $\vv_t^{(n)} {\,\sim\,} \gN(\vzero, \mI)$, $\beta\in[0,1]$ and $\mSigma_t=\mL_t\mL_t\tran$, actions are sampled using
\begin{align*}
    \va^{(n)}_t &= \vmu_t + \mL_t \vn_t^{(n)},
	\quad
	\vn^{(n)}_t = \beta \vv_t^{(n)} + (1-\beta) \vn^{(n)}_{t-1},
	\;\;\text{or}\;\;
	\vn^{(n)}_t = \beta \vv_t^{(n)} + \sqrt{(1-\beta^2)} \,\vn^{(n)}_{t-1}.
\end{align*}
However, in practice it is also implemented as
$
\va^{(n)}_t = \beta (\vmu_t + \mL_t \vv_t^{(n)}) + (1-\beta)\va^{(n)}_{t-1}$
\footnote{See the source code for Nagabandi et al. \cite{nagabandi2020deep}  and \texttt{MBRL-lib} \cite{Pineda2021MBRL}.}.
While this approach directly smooths the actuation, it also introduces a lag, which may deteriorate performance.
Other approaches have used colored noise for sampling the noise $\vn$ \citep{PinneriiCEM}.
Contrast these approaches to Gibbs sampling a multivariate Gaussian joint distribution with 1-step cross-correlations~\cite{doucet2010note},
which is
$\va^{(n)}_{t|t-1} {\,=\,} \vmu_{t|t-1}^{(n)} + \mL_{t|t-1}\vv_t^{(n)}$,
where
$
    {\vmu_{t|t-1}^{(n)} = \vmu_t + \mSigma_{t,t-1}\mSigma_{t-1}\inv(\va^{(n)}_{t-1} - \vmu_{t-1})},
$
and
$
    {\mSigma_{t|t-1} = \mSigma_t - \mSigma_{t,t-1}\mSigma_{t-1}\inv\mSigma_{t,t-1}\tran}.
$
The differences are subtle, but important. 
The initial proposed sampling scheme essentially adds correlated noise to the mean for exploration, but does not consider the smoothness of the mean itself.
The practical implementation incorporates the previous action, but through exponential smoothing, which introduces a fixed lag that potentially degrades the quality of the mean action sequence. 
Correct sampling of the joint distribution has neither of these issues and naturally extends to correlations over several time steps.
We do this in a general fashion by considering the (continuous time) Gaussian process (see Section \ref{sec:stoch_processes}, Appendix), so
$p(\va_\vt) = 
\gN(\vmu_{t_i:t_j}, \mSigma_{t_i:t_j}) = 
\gG\gP(\vmu(\vt), \mSigma(\vt))$ for a discrete-time sequence $\vt = [t_i,\dots,t_j]$. Proposition \ref{prop:gp} shows how the time shift for \mpc can be implemented in a general fashion when using~\gp{}s.
\nolinebreak
\begin{restatable}{prop}{gpshift}
\label{prop:gp}
Given a Gaussian process prior $\gG\gP(\vmu(t), \mSigma(t))$ and multivariate normal posterior $q_\alpha(\va_{t_1:t_2}){\,=\,}\gN(\vmu_{t_1:t_2|R},\mSigma_{t_1:t_2|R})$ for $t_1$ to $t_2$, the posterior for $t_3$ to $t_4$ is expressed as
{
\setlength{\abovedisplayskip}{0pt}
\setlength{\belowdisplayskip}{0pt}
\setlength{\abovedisplayshortskip}{0pt}
\setlength{\belowdisplayshortskip}{0pt}
\begin{equation}
	\vmu_{t_3:t_4|R} {\,=\,} \vmu_{t_3:t_4}{+\,}\mSigma_{t_3:t_4,t_1:t_2}
	\vnu_{t_1:t_2},\;\;\;
	\mSigma_{t_3:t_4|R}{\,=\,}\mSigma_{t_3:t_4}{-\,} \mSigma_{t_3:t_4,t_1:t_2}
	\mLambda_{t_1:t_2}
	\mSigma_{t_3:t_4,t_1:t_2}\tran{,}\hspace{-0.5em}
	\label{eq:cov_update}
\end{equation}
}
where  $\vnu_{t_1:t_2}{=} 
\mSigma_{t_1:t_2}\inv(\vmu_{t_1:t_2|R}{-}\vmu_{t_1:t_2})$
and 
$\mLambda_{t_1:t_2}{=} 
\mSigma_{t_1:t_2}\inv(\mSigma_{t_1:t_2}{-} \mSigma_{t_1:t_2|R})\mSigma_{t_1:t_2}\inv$.
\end{restatable}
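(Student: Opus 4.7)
\textbf{Proof Proposal for Proposition \ref{prop:gp}.}

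The plan is to reduce the shift to a standard Gaussian marginalization: write the joint prior over $(\va_{t_1:t_2},\va_{t_3:t_4})$ read off from the GP, apply Bayes' rule to pull the return-based likelihood through, and then integrate out $\va_{t_1:t_2}$ using its (given) posterior. The assumption that the posterior on $\va_{t_1:t_2}$ is Gaussian, combined with the fact that the GP prior is jointly Gaussian across disjoint time indices, guarantees the resulting marginal is Gaussian, so only its first two moments need to be computed.

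First, I would note that since the Gibbs likelihood in Definition \ref{def:ppi} only couples $R$ to actions via the rollout, and since $\va_{t_3:t_4}$ does not appear in the return used to form $q_\alpha(\va_{t_1:t_2})$, the joint posterior factorizes as
\begin{equation*}
q_\alpha(\va_{t_1:t_2},\va_{t_3:t_4}\mid R) \;=\; p(\va_{t_3:t_4}\mid\va_{t_1:t_2})\,q_\alpha(\va_{t_1:t_2}\mid R),
\end{equation*}
where $p(\va_{t_3:t_4}\mid\va_{t_1:t_2})$ is the GP conditional determined by the prior alone. The standard Gaussian conditioning formula gives this conditional as $\gN\bigl(\vmu_{t_3:t_4}+\mSigma_{t_3:t_4,t_1:t_2}\mSigma_{t_1:t_2}\inv(\va_{t_1:t_2}-\vmu_{t_1:t_2}),\; \mSigma_{t_3:t_4}-\mSigma_{t_3:t_4,t_1:t_2}\mSigma_{t_1:t_2}\inv\mSigma_{t_3:t_4,t_1:t_2}\tran\bigr)$.

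Second, I would marginalize $\va_{t_1:t_2}$ against $q_\alpha(\va_{t_1:t_2}\mid R)=\gN(\vmu_{t_1:t_2|R},\mSigma_{t_1:t_2|R})$. The mean is obtained by the tower property, substituting $\vmu_{t_1:t_2|R}$ for $\va_{t_1:t_2}$ in the conditional mean; this yields $\vmu_{t_3:t_4}+\mSigma_{t_3:t_4,t_1:t_2}\mSigma_{t_1:t_2}\inv(\vmu_{t_1:t_2|R}-\vmu_{t_1:t_2})$, which matches the stated expression after identifying $\vnu_{t_1:t_2}$. The covariance follows from the law of total covariance: the expected conditional covariance contributes $\mSigma_{t_3:t_4}-\mSigma_{t_3:t_4,t_1:t_2}\mSigma_{t_1:t_2}\inv\mSigma_{t_3:t_4,t_1:t_2}\tran$ (a constant), while the covariance of the conditional mean (linear in $\va_{t_1:t_2}$) contributes $\mSigma_{t_3:t_4,t_1:t_2}\mSigma_{t_1:t_2}\inv\mSigma_{t_1:t_2|R}\mSigma_{t_1:t_2}\inv\mSigma_{t_3:t_4,t_1:t_2}\tran$. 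Adding the two and grouping the bracketed term gives precisely $\mLambda_{t_1:t_2}=\mSigma_{t_1:t_2}\inv(\mSigma_{t_1:t_2}-\mSigma_{t_1:t_2|R})\mSigma_{t_1:t_2}\inv$ sandwiched between the cross-covariance blocks.

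The main obstacle is conceptual rather than algebraic: it is to justify that the return-based Gibbs likelihood can be treated as a factor depending only on $\va_{t_1:t_2}$, so that the GP conditional $p(\va_{t_3:t_4}\mid\va_{t_1:t_2})$ from the prior carries over unchanged to the posterior. Once that factorization is established, the remaining work is purely the tower property plus the law of total covariance on a linear-Gaussian system, and the regrouping that introduces $\vnu_{t_1:t_2}$ and $\mLambda_{t_1:t_2}$ is routine. I would also remark briefly that the formulation covers the \textsc{mpc} time-shift as the special case $[t_3,t_4]\supset[t_2,t_2{+}\Delta]$, which is the use case motivating the proposition.
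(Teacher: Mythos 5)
Your proposal is correct, and the computation it sketches does land exactly on Equation \ref{eq:cov_update}: the tower property reproduces $\vmu_{t_3:t_4}+\mSigma_{t_3:t_4,t_1:t_2}\vnu_{t_1:t_2}$, and the law of total covariance gives $\mSigma_{t_3:t_4}-\mSigma_{t_3:t_4,t_1:t_2}\mSigma_{t_1:t_2}\inv(\mSigma_{t_1:t_2}-\mSigma_{t_1:t_2|R})\mSigma_{t_1:t_2}\inv\mSigma_{t_3:t_4,t_1:t_2}\tran$ after grouping, which is precisely $\mLambda_{t_1:t_2}$ sandwiched between the cross-covariances. The route differs from the paper's in a genuine way. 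The paper first \emph{inverts} the given update to recover the effective Gaussian likelihood that produced $q_\alpha(\va_{t_1:t_2})$ from the prior, encodes it in the information-form quantities $\vnu_{t_1:t_2}$ and $\mLambda_{t_1:t_2}$, and then re-applies that likelihood to the prior over the new window via a Kalman-style update with the cross-covariance playing the role of the gain; the "condition on the objective and marginalize the old timesteps" step is asserted rather than expanded. You instead factorize the joint posterior as $p(\va_{t_3:t_4}\mid\va_{t_1:t_2})\,q_\alpha(\va_{t_1:t_2}\mid R)$ and compute the two moments directly. Your version is more elementary and makes the Gaussianity of the result and the provenance of each covariance term explicit, and you are right that the only non-routine point is justifying that the Gibbs likelihood acts as a factor in $\va_{t_1:t_2}$ alone so the prior conditional survives into the posterior --- a point the paper leaves implicit. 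The paper's message-passing formulation buys a cleaner operational reading (store $\vnu,\mLambda$ as a reusable likelihood summary, independent of which future window you later query), which is what the implementation exploits; the two derivations are otherwise equivalent and yield identical formulas, including in the overlapping-window case relevant to \mpc.
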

This update combines the new sequence prior from $t_3$ to $t_4$ and the previous likelihood used in the update for $t_1$ to $t_2$, obtained from the posterior and prior.
Note, the cross-covariance $\Sigma_{t_3:t_4,t_1:t_2}$ is computed using the covariance function of the prior \gp.
The proof is in Appendix~\ref{sec:discussion}.
\newpage

For a stationary kernel, fixed planning horizon and fixed control frequency, the term $\mSigma_{t_1:t_2}\inv$ is $\mSigma_{t:t+H}\inv$ and is constant, so can be computed at initialization to avoid repeated inversion.
Figure \ref{fig:policy_sample} demonstrates how this update lets us combine our prior with previous posterior in a principled fashion.
Moreover, its continuous-time construction means that the time resolution can be updated, not just the time window, for planning at different timescales \citep{Mukadam_mp}. 

Compared to the independence assumption, modeling correlations between actions introduces complexity. 
The full covariance over (flattened) time and action has a complexity $R(T^3d_a^3)$, which is infeasible to work with.
Assuming independence between actions, a \gp per action has a complexity of $R(T^3 d_a)$, requiring $d_a$ \gp{}s to be fit, which is not desirable for online methods such as \mpc.
To avoid the linear scaling w.r.t. $d_a$, we propose using the \emph{matrix Normal distribution} (Definition \ref{def:mavn}) for scalability, as it is parameterized into single $T$ and $d_a$-dimensional covariances, 
\begin{definition}
	(Matrix Normal Distribution (MaVN) \citep{dawid1981some}) 
	For a random matrix $\mX \in \sR^{n\times p}$,
	it follows a matrix normal distribution
	$\mX{\,\sim\,}\gM\gN(\mM, \mK, \mSigma)$,
	where
	$\mM\in\sR^{n\times p}, \mK\in S^n_+$ and $\mSigma\in S^p_+$, 
	if and only if
	$\text{vec}(\mX){\,\sim\,}\gN(\text{vec}(\mM), \mSigma\otimes\mK)$, where $\otimes$ denotes the Kronecker product.
	\vspace{-0.5em} 
	\label{def:mavn}
\end{definition}
Using the Kronecker-structured covariance provides a useful decomposition of the time-based covariance $\mK$, that defines correlations between time steps, and an action covariance $\mSigma$ that captures correlations between actions. 
Typically we assume actions are independent, but cross-correlations could be learned from experience for richer coordination.
While this Kronecker structure does not fully capture the correlations between time and actions, the structure is very useful for \mpc on robotic systems, where the actions space could be very high but the planning horizon is sufficiently small for covariance estimation using a reasonable number of Monte Carlo rollouts.

\textbf{Feature Approximations.} Despite the matrix Normal factorization, computing the correlations between actions still requires a dense $H{\times}H$ covariance matrix $\mK$ for planning horizon $H$. 
To sparsify this quantity, we consider kernel approximations, such as the canonical basis functions $\sum_n k(\cdot,\vx_n)$ and \emph{spectral} approximations using random features $\sum_n\phi_n(\cdot)$ \cite{rahimi2007random}, for a Bayesian linear model $\bm{\phi}_\vt\tran\mW$.
Focusing on the squared exponential (\se) kernel, this results in radial basis function (\rbf) and random Fourier features (\rff) respectively.
Interestingly, \rbf features are closely related to probabilistic movement primitives, used extensively in policy search for robotics \citep{paraschos2013probabilistic}.  
For one-dimensional inputs, \rff{}s are effectively approximated by applying Gauss-Hermite quadrature \citep{hildebrand1987introduction} to the random weights~\citep{mutny2018efficient}. 
\rbf features and \rff{}s approximate w.r.t. time and frequency respectively and could be combined \citep{wilson2020efficiently}.
Using these continuous-time features, the optimization is now abstracted from planning horizon and control frequency, providing much greater flexibility. 
Secondly, due to the features, a factorized weight covariance approximation does not sacrifice smoothness.
Moreover, the moment updates described above are not needed, as only
$\bm{\phi}_{\vt}$ is updated.

\section{Related Work}\label{sec:relatedwork}

\textbf{Inference-based control.}
Posterior policy iteration was proposed by Rawlik et al. \cite{rawlik2013stochastic} and covers prior methods developed from Bayesian smoothing \citep{toussaint2009robot, i2corl}, expectation maximization \citep{Peters_PICML_2007, NIPS2008KoberPeters_54110}, entropy regularization \citep{Peters2010REPS,daniel2016hierarchical} and path integral \citep{theodorou2010generalized} perspectives.
For \mpc specifically, the path integral-based \mppi was proposed \cite{williams2017model}, with alternative formulations based on mirror descent \citep{Boots-RSS-19} and variational
inference \citep{okada2020variational, lambert2020stein, wang2021variational}.
Mukadam et al. \cite{MukadamCYB17} models the optimal state-action distribution as a sparse Gaussian process and uses linearization for approximate inference.
The same approach is used for Gaussian process motion planning \cite{Mukadam_mp}, which are also optimized using sampling \citep{stomp}.
Gaussian quadrature is also used for inference-based \mpc \citep{watson2021stochastic}.
Concurrent work uses the \ess for a temperature adjusting heuristic for \mppi \cite{carius2021constrained} and also combines policy search with \mpc using \ppi techniques \cite{song2022policy}.
See Section \ref{sec:extended_related_work} for a more in-depth discussion on these related works.

\textbf{Policy design and regularization.}
Smooth actuation is important in robot learning for safety and exploration, having been proposed for Monte Carlo \mpc \citep{nagabandi2020deep, yang2020data, PinneriiCEM} and more broadly incorporated using augmented objectives, parameter sampling and policy design, e.g.~\citep{caps2021, pmlr-v164-raffin22a, korenkevych2019autoregressive}.

\textbf{Stochastic search.} 
Probabilistic interpretations of black-box optimization algorithms are well established \citep{stulp2012path, ollivier2017information, abdolmaleki2017deriving}, however prior work did not connect the \ess and elite samples. 
\cem and extensions have also been adopted widely as a solver for \mpc \citep{chua2018deep,PinneriiCEM,lutter2021learning}.

\textbf{Gaussian processes for control.} This work adopts \gp{}s for correlated action priors. 
This is distinct from prior work which uses \gp{}s to approximate dynamics or value functions, e.g.~\citep{berkenkamp2015safe, 1383790, 5531,MAIWORM2018455}. 
\newpage

\section{Experimental Results}
\label{sec:exp_res}
We assess the Gibbs posterior methods and policy design empirically across various settings.
Black-box optimization (Section \ref{sec:bbo}) considers standard benchmarks, while policy search (Section \ref{sec:ps}) optimizes action sequences for a robotic task.
For \mpc\,(Section \ref{sec:mpc_main}), we evaluate online \ppi approaches with white noise and smooth priors on high-dimensional, contact-rich tasks.
For the code, see
\href{https://github.com/JoeMWatson/monte-carlo-posterior-policy-iteration}{\ttfamily github.com/JoeMWatson/monte-carlo-posterior-policy-iteration}.

\begin{figure}[!t]
	\vspace{-3em}
	\begin{minipage}{\textwidth}
		\centering
		\begin{tikzpicture}
	
	\begin{axis}[
		hide axis,
		height=2cm,
		xmin=10, xmax=50,
		ymin=0, ymax=1.0,
		legend cell align={center},
		legend columns=5,
		legend style={/tikz/every even column/.append style={column sep=0.3cm}, draw=none},
		]
		]
		
		\addlegendimage{semithick, red, mark=*, mark size = 1}
		\addlegendentry{\cem};
		\addlegendimage{semithick, darkturquoise0191191, mark=*, mark size = 1}
		\addlegendentry{\lbps};
		\addlegendimage{semithick, blue, mark=*, mark size = 1}
		\addlegendentry{\essps};
		\addlegendimage{semithick, goldenrod1911910, mark=*, mark size = 1}
		\addlegendentry{\mppi};
		\addlegendimage{semithick, green01270, mark=*, mark size = 1}
		\addlegendentry{\pitwo};
	\end{axis}
	
\end{tikzpicture}
		\vspace{-1em}
	\end{minipage}
	\includegraphics{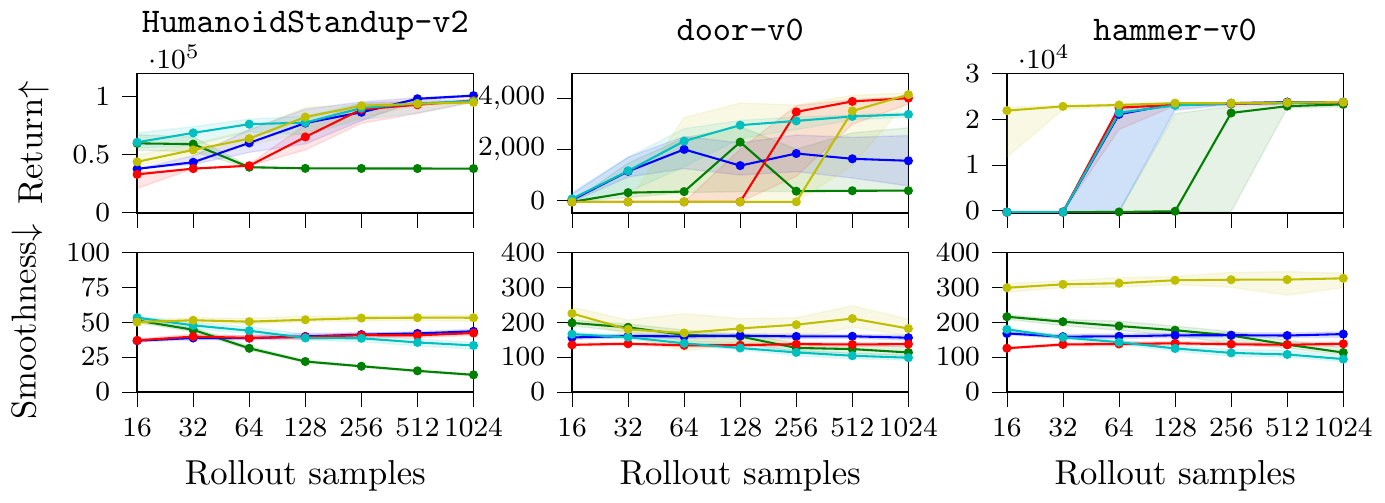}
	\vspace{-1.5em}
	\caption{
	\mpc return and \protect\hyperlink{smooth}{smoothness} with white noise priors. Displaying quartiles over 50 seeds. These priors require a large number of samples for good performance, across methods and tasks.
	}
	\label{fig:wn_mpc}
	\vspace{-0.8em}
\end{figure}

\vspace{-0.5em}
\subsection{Black-box Optimization} \label{sec:bbo}
\vspace{-0.3em}
To understand the behaviour of the proposed \ppi variants, the performance of \lbps and \essps on a range of standard black-box optimization functions over a range of hyperparameters, with e\reps and \cem as baselines, are shown in Appendix \ref{sec:bbo_res}. 
Figures \ref{fig:bbo1} -- \ref{fig:bbo4} show that \ess is a useful metric for these methods, as each solver exhibits consistent \ess for a given hyperparameter value.  
However, the uniform weights used by \cem (Figure \ref{fig:bbo1}) maintain entropy longer than \essps (Figure \ref{fig:bbo_essps}), which can lead to better optima, so the \ess is not sufficient to fully capture the behavior of these solvers.

\vspace{-0.5em}
\subsection{Policy Search} \label{sec:ps}
\vspace{-0.3em}

As \lbps and \essps are closely related to e\reps, we repeat the experiment from prior work performing the `ball in a cup' task using policy search using a Barret \textsc{wam} \citep{lutter2020differentiable}, which has been shown to transfer to the physical system \citep{klink2021probabilistic,lutter2020differentiable,muratore2021data}, as a benchmark task. 
Moreover, we replace \promp{}s with Matrix normal \rbf and \rff policies.
From the kernel perspective, this feature approximation is motivated by the large ($T\simeq 1000$) task horizon.
The results in Appendix \ref{sec:ps_res} confirm that these solvers are all capable of solving the task, based on success rate, where \rbf (Figure \ref{fig:ps_rbf}) and \rff{} features (Figure \ref{fig:ps_rff}) perform equally well w.r.t. the convergence of the success rate for each approach.

\vspace{-0.5em}
\subsection{Model Predictive Control with Oracle Dynamics} \label{sec:mpc_main}
\vspace{-0.3em}
\label{sec:mpc}
We evaluate online \ppi across a range of high-dimensional robotic control tasks in \texttt{MuJoCo} \citep{mujoco}, 
including \hsu in \texttt{Gym} \citep{gym} and
\door, \hammer from \texttt{mj\_envs}, using the Adroit hand (Figure \ref{fig:pull})~\cite{Rajeswaran-RSS-18}. 
\hypertarget{smooth}{
To measure smoothness, we adopt the \textsc{fft}-based score $\frac{2}{N f_s}\sum_{i=1}^N a_i f_i$  \citep{caps2021}, with sampling frequency $f_s$ and $N$ resolvable frequencies $f$ with amplitudes $a$. 
We compute the Euclidean norm of the action sequence over time and apply the smoothness measure to this signal.
}
For the evaluation, we focus on a low computational budget, with 1 or 2 iterations per timestep. 
To assess the impact of approximate inference, we assess performance over an logarithmic range of sample rollouts, following prior work \citep{PinneriiCEM}.
Details may be found in Appendix \ref{sec:experiments}. 

\textbf{White noise priors.}
Figure \ref{fig:wn_mpc} shows \mpc with white noise priors using \lbps and \essps, with \mppi, \cem and \pitwo baselines (see Table \ref{tab:temperature}). 
While each solver performs comparably for 1024 rollouts, the low sample regime shows greater performance variance. 
While \mppi seems particularly effective, Figure \ref{fig:ess_wn} shows that its average \ess is particularly low, ${\,\simeq\,} 1$ for many cases.
Combined with the fixed variances, this suggests optimization is closer to greedy random search than importance sampling.  
The poor \texttt{door-v0} performance of \essps is due to slow opening, rather than task failure.

\textbf{Policy design for smooth control.} Figure \ref{fig:smooth_mpc} shows online \ppi with action priors. \lbps, \essps and \mppi use the \se kernel, with \icem \citep{PinneriiCEM} and \mppi with smooth actions and noise as a baseline. 
The smooth action distributions greatly improves performance across models, tasks and sample sizes, due to effective exploration.  
As desired, the richer \se kernel provides much greater smoothness, by up to a factor of 2 compared to baselines, with limited impact to performance.
It is unsurprising that the smoothness bias reduces performance if optimal behavior is non-smooth, as illustrated in Figure \ref{fig:policy_sample}. 
Appendix \ref{sec:profiles} shows some of the action sequences from Figure \ref{fig:smooth_mpc}, where we see \gp smoothness varies with increasing rollout samples and also results in significant actuator amplitude reduction.

\textbf{Comparing kernel- and feature-based policies.}
To assess feature approximations for smooth \mpc, we replace the \se kernel with \rbf and \rff features, while keeping the lengthscale fixed. 
These policies perform worse given fewer samples, but are comparable to the true kernel with sufficient samples (Figure \ref{fig:mpc_ret_feat}).
We attribute this to the compounding errors of kernel approximation and fewer effective samples.
In contrast to the policy search task, \rff{}s appear superior to \rbf features.

\begin{figure}[!bt]
	\vspace{-1em}
	\begin{minipage}{\textwidth}
		\centering
		\begin{tikzpicture}
	
	\begin{axis}[
		height=2cm,
		hide axis,
		xmin=10, xmax=50,
		ymin=0, ymax=1.0,
		legend cell align={center},
		legend columns=3,
		legend style={/tikz/every even column/.append style={column sep=0.3cm}, draw=none},
		]
		]
		
		\addlegendimage{semithick, darkviolet1910191, mark=*, mark size = 1}
		\addlegendentry{\icem (coloured noise)};
		\addlegendimage{semithick, darkturquoise0191191, mark=*, mark size = 1}
		\addlegendentry{\lbps (\se kernel)};
		\addlegendimage{semithick, blue, mark=*, mark size = 1}
		\addlegendentry{\essps (\se kernel)};
		\addlegendimage{semithick, goldenrod1911910, mark=*, dashed, mark size = 1}
		\addlegendentry{\mppi (smooth actions)};
		\addlegendimage{semithick, goldenrod1911910, mark=*, dash pattern=on 1pt off 3pt on 3pt off 3pt, mark size = 1}
		\addlegendentry{\mppi (smooth noise)};
		\addlegendimage{semithick, goldenrod1911910, mark=*, mark size = 1}
		\addlegendentry{\mppi (\se kernel)};
	\end{axis}
	
\end{tikzpicture}
		\vspace{-1em}
	\end{minipage}
	\includegraphics{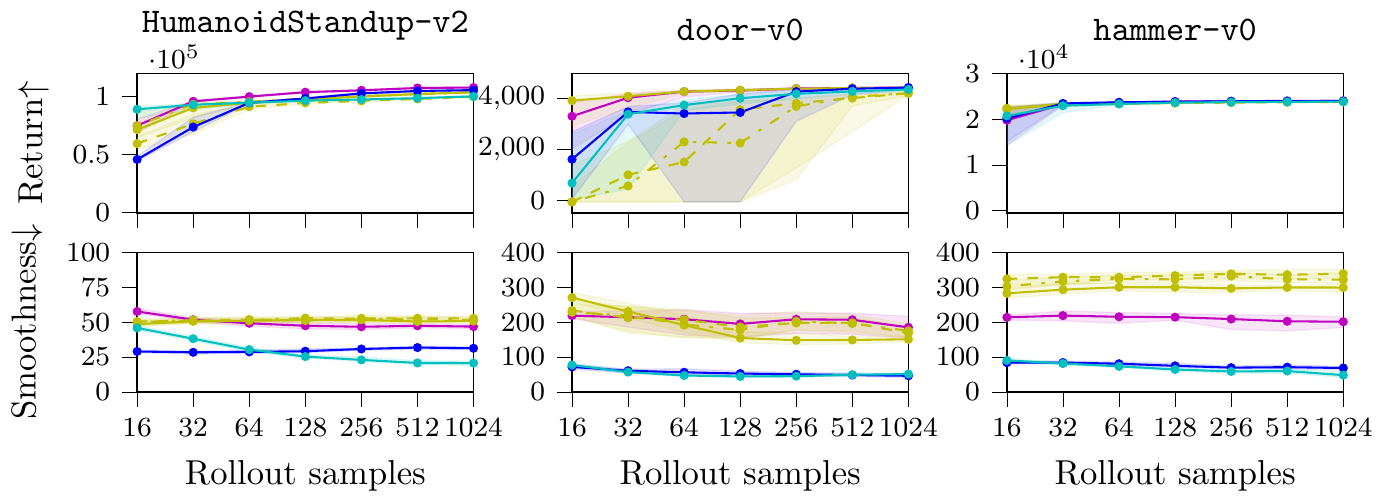}
	\vspace{-1.5em}
	\caption{
	\mpc return and \protect\hyperlink{smooth}{smoothness} with smooth action priors. Displaying quartiles over 50 seeds.
	Compared to white noise priors, smooth action priors improve sample efficiency dramatically, but only \ppi methods (\lbps, \essps) preserve this smoothness in the downstream control.
	}
	\label{fig:smooth_mpc}
	\vspace{-1.7em}
\end{figure}

\textbf{Learning priors from data.}
A benefit of using \gp{} priors is the ability to optimize hyperparameters from expert demonstrations through the likelihood or a divergence.
Moreover, the matrix normal distribution is useful for analyzing high-dimensional action sequences, as it decomposes temporal- and action correlations into viewable covariance matrices.
Section \ref{sec:priors_from_data} shows the matrix normal distributions of expert demonstrations of the tasks, obtained through human and \rl experts.
The results show that, surprisingly, the demonstrations are rougher than the smoothness achievable with \mpc.  
We attribute this to control artifacts from demonstration collection and the use of Gaussian noise by \rl agents. 
Applying the same methodology to the demonstrations of the smooth \mpc agents proposed here extracts the expected action correlations across tasks.
This analysis also raises the question of whether smoothness is an inductive bias we enforce for practicality, or a phenomena we expect to arise from optimality.  
If the latter, it may be that the simulated environments or objectives considered are lacking components that encourage smoothness, such as energy efficiency. 

\vspace{-1em}
\section{Conclusion}
\vspace{-0.9em}

We present a broad perspective on episodic posterior policy iteration method for robotics and new methods for the Monte Carlo setting, based on regularizing the \is approximations.
By considering vector-valued Gaussian processes for action priors, we have demonstrated how sample-efficient \mpc can be performed as online inference and with greater control over actuator smoothness, connecting Monte Carlo \mpc to prior work on policy search. 
This approach was validated on a set of high-dimensional \mpc tasks closely matching baseline performance while achieving greater smoothness.

\textbf{Limitations.}
Much of the prior work is motivated by simplicity, minimizing hyperparameter tuning and numerical procedures such as matrix inversion \citep{theodorou2010generalized}.
In contrast, the contributions of this work introduces complexity, i.e. online temperature optimization and the use of dense covariance matrices in order to perform more sophisticated approximate inference.   
While this additional complexity has an impact on execution time (Table \ref{tab:time}, Appendix), we hope the sample-efficiency when combined with accelerations such as \textsc{gpu}-integration should produce real-time algorithms \citep{bhardwaj2022storm}.
\newpage

\newpage

\clearpage
\acknowledgments{
This work built on prior codebases developed by Johannes Silberbauer, Michael Lutter and Hany Abdulsamad.
The large-scale experiments and ablations were conducted on the Lichtenberg high performance computer of the TU Darmstadt.
The authors wish to thank Hany Abdulsamad, Boris Belousov, Michael Lutter, Fabio Muratore, Pascal Klink,  Georgia Chavalvris, Kay Hansel, Oleg Arenz and the anonymous conference reviewers for helpful feedback on earlier drafts. 
}

\bibliography{lib}  %

\begin{thebibliography}{94}
\providecommand{\natexlab}[1]{#1}
\providecommand{\url}[1]{\texttt{#1}}
\expandafter\ifx\csname urlstyle\endcsname\relax
  \providecommand{\doi}[1]{doi: #1}\else
  \providecommand{\doi}{doi: \begingroup \urlstyle{rm}\Url}\fi

\bibitem[Sutton and Barto(2018)]{sutton2018reinforcement}
R.~S. Sutton and A.~G. Barto.
\newblock \emph{Reinforcement learning: An introduction}.
\newblock MIT Press, 2018.

\bibitem[Owen(2013)]{mcbook}
A.~B. Owen.
\newblock \emph{Monte Carlo theory, methods and examples}.
\newblock 2013.

\bibitem[Chua et~al.(2018)Chua, Calandra, McAllister, and Levine]{chua2018deep}
K.~Chua, R.~Calandra, R.~McAllister, and S.~Levine.
\newblock Deep reinforcement learning in a handful of trials using
  probabilistic dynamics models.
\newblock In \emph{Advances in Neural Information Processing Systems}, 2018.

\bibitem[Pinneri et~al.(2020)Pinneri, Sawant, Blaes, Achterhold, Stueckler,
  Rolinek, and Martius]{PinneriiCEM}
C.~Pinneri, S.~Sawant, S.~Blaes, J.~Achterhold, J.~Stueckler, M.~Rolinek, and
  G.~Martius.
\newblock Sample-efficient cross-entropy method for real-time planning.
\newblock In \emph{Conference on Robot Learning}, 2020.

\bibitem[Okada and Taniguchi(2019)]{okada2020variational}
M.~Okada and T.~Taniguchi.
\newblock Variational inference {MPC} for {B}ayesian model-based reinforcement
  learning.
\newblock In \emph{Conference on Robot Learning}, 2019.

\bibitem[Lutter et~al.(2021)Lutter, Hasenclever, Byravan, Dulac-Arnold,
  Trochim, Heess, Merel, and Tassa]{lutter2021learning}
M.~Lutter, L.~Hasenclever, A.~Byravan, G.~Dulac-Arnold, P.~Trochim, N.~Heess,
  J.~Merel, and Y.~Tassa.
\newblock Learning dynamics models for model predictive agents.
\newblock \emph{arXiv preprint arXiv:2109.14311}, 2021.

\bibitem[Stulp et~al.(2011)Stulp, Theodorou, Buchli, and Schaal]{stulp11}
F.~Stulp, E.~Theodorou, J.~Buchli, and S.~Schaal.
\newblock Learning to grasp under uncertainty.
\newblock In \emph{2011 IEEE International Conference on Robotics and
  Automation}, 2011.

\bibitem[Kalakrishnan et~al.(2011)Kalakrishnan, Chitta, Theodorou, Pastor, and
  Schaal]{stomp}
M.~Kalakrishnan, S.~Chitta, E.~Theodorou, P.~Pastor, and S.~Schaal.
\newblock {STOMP}: Stochastic trajectory optimization for motion planning.
\newblock In \emph{2011 IEEE International Conference on Robotics and
  Automation}, 2011.

\bibitem[Daniel et~al.(2016)Daniel, Neumann, Kroemer, Peters,
  et~al.]{daniel2016hierarchical}
C.~Daniel, G.~Neumann, O.~Kroemer, J.~Peters, et~al.
\newblock Hierarchical relative entropy policy search.
\newblock \emph{Journal of Machine Learning Research}, 2016.

\bibitem[Williams et~al.(2018)Williams, Drews, Goldfain, Rehg, and
  Theodorou]{williams2018information}
G.~Williams, P.~Drews, B.~Goldfain, J.~M. Rehg, and E.~A. Theodorou.
\newblock Information-theoretic model predictive control: Theory and
  applications to autonomous driving.
\newblock \emph{IEEE Transactions on Robotics}, 2018.

\bibitem[Nagabandi et~al.(2019)Nagabandi, Konolige, Levine, and
  Kumar]{nagabandi2020deep}
A.~Nagabandi, K.~Konolige, S.~Levine, and V.~Kumar.
\newblock Deep dynamics models for learning dexterous manipulation.
\newblock In \emph{Conference on Robot Learning}, 2019.

\bibitem[Yang et~al.(2019)Yang, Caluwaerts, Iscen, Zhang, Tan, and
  Sindhwani]{yang2020data}
Y.~Yang, K.~Caluwaerts, A.~Iscen, T.~Zhang, J.~Tan, and V.~Sindhwani.
\newblock Data-efficient reinforcement learning for legged robots.
\newblock In \emph{Conference on Robot Learning}, 2019.

\bibitem[Bhardwaj et~al.(2021)Bhardwaj, Sundaralingam, Mousavian, Ratliff, Fox,
  Ramos, and Boots]{bhardwaj2022storm}
M.~Bhardwaj, B.~Sundaralingam, A.~Mousavian, N.~D. Ratliff, D.~Fox, F.~Ramos,
  and B.~Boots.
\newblock {STORM}: An integrated framework for fast joint-space model
  predictive control for reactive manipulation.
\newblock In \emph{Conference on Robot Learning}, 2021.

\bibitem[Lutter et~al.(2021)Lutter, Silberbauer, Watson, and
  Peters]{lutter2020differentiable}
M.~Lutter, J.~Silberbauer, J.~Watson, and J.~Peters.
\newblock Differentiable physics models for real-world offline model-based
  reinforcement learning.
\newblock In \emph{IEEE International Conference on Robotics and Automation},
  2021.

\bibitem[Carius et~al.(2022)Carius, Ranftl, Farshidian, and
  Hutter]{carius2021constrained}
J.~Carius, R.~Ranftl, F.~Farshidian, and M.~Hutter.
\newblock Constrained stochastic optimal control with learned importance
  sampling: A path integral approach.
\newblock \emph{The International Journal of Robotics Research}, 2022.

\bibitem[Wierstra et~al.(2014)Wierstra, Schaul, Glasmachers, Sun, Peters, and
  Schmidhuber]{wierstra14a}
D.~Wierstra, T.~Schaul, T.~Glasmachers, Y.~Sun, J.~Peters, and J.~Schmidhuber.
\newblock Natural evolution strategies.
\newblock \emph{Journal of Machine Learning Research}, 2014.

\bibitem[Abdolmaleki et~al.(2015)Abdolmaleki, Lioutikov, Peters, Lau,
  Pualo~Reis, and Neumann]{abdolmaleki2015model}
A.~Abdolmaleki, R.~Lioutikov, J.~R. Peters, N.~Lau, L.~Pualo~Reis, and
  G.~Neumann.
\newblock Model-based relative entropy stochastic search.
\newblock In \emph{Advances in Neural Information Processing Systems}, 2015.

\bibitem[Williams et~al.(2017)Williams, Aldrich, and
  Theodorou]{williams2017model}
G.~Williams, A.~Aldrich, and E.~A. Theodorou.
\newblock Model predictive path integral control: From theory to parallel
  computation.
\newblock \emph{Journal of Guidance, Control, and Dynamics}, 2017.

\bibitem[Rubinstein and Kroese(2004)]{cem}
R.~Y. Rubinstein and D.~P. Kroese.
\newblock \emph{The Cross Entropy Method: A Unified Approach To Combinatorial
  Optimization, Monte Carlo Simulation}.
\newblock 2004.

\bibitem[Dayan and Hinton(1997)]{Dayan97}
P.~Dayan and G.~E. Hinton.
\newblock Using expectation-maximization for reinforcement learning.
\newblock \emph{Neural Computation}, 1997.

\bibitem[Toussaint and Storkey(2006)]{06-toussaint-ICML}
M.~Toussaint and A.~Storkey.
\newblock Probabilistic inference for solving discrete and continuous state
  {M}arkov decision processes.
\newblock In \emph{International Conference of Machine Learning}, 2006.

\bibitem[Peters and Schaal(2007)]{Peters_PICML_2007}
J.~Peters and S.~Schaal.
\newblock Reinforcement learning by reward-weighted regression for operational
  space control.
\newblock In \emph{International Conference on Machine Learning}, 2007.

\bibitem[Toussaint(2009)]{toussaint2009robot}
M.~Toussaint.
\newblock Robot trajectory optimization using approximate inference.
\newblock In \emph{International Conference on Machine Learning}, 2009.

\bibitem[Kappen et~al.(2013)Kappen, G{\'{o}}mez, and Opper]{kappen2013}
H.~J. Kappen, V.~G{\'{o}}mez, and M.~Opper.
\newblock Optimal control as a graphical model inference problem.
\newblock In \emph{International Conference on Automated Planning and
  Scheduling}, 2013.

\bibitem[Mukadam et~al.(2017)Mukadam, Cheng, Yan, and Boots]{MukadamCYB17}
M.~Mukadam, C.~Cheng, X.~Yan, and B.~Boots.
\newblock Approximately optimal continuous-time motion planning and control via
  probabilistic inference.
\newblock In \emph{IEEE International Conference on Robotics and Automation},
  2017.

\bibitem[Levine(2018)]{levine2018reinforcement}
S.~Levine.
\newblock Reinforcement learning and control as probabilistic inference:
  Tutorial and review.
\newblock \emph{arXiv preprint arXiv:1805.00909}, 2018.

\bibitem[Watson et~al.(2021)Watson, Abdulsamad, Findeisen, and
  Peters]{watson2021stochastic}
J.~Watson, H.~Abdulsamad, R.~Findeisen, and J.~Peters.
\newblock Efficient stochastic optimal control through approximate {B}ayesian
  input inference.
\newblock \emph{arXiv preprint arXiv:2105.07693}, 2021.

\bibitem[Kappen and Ruiz(2016)]{kappen2016adaptive}
H.~J. Kappen and H.~C. Ruiz.
\newblock Adaptive importance sampling for control and inference.
\newblock \emph{Journal of Statistical Physics}, 2016.

\bibitem[Deisenroth et~al.(2013)Deisenroth, Neumann, Peters,
  et~al.]{deisenroth2013survey}
M.~P. Deisenroth, G.~Neumann, J.~Peters, et~al.
\newblock A survey on policy search for robotics.
\newblock \emph{Foundations and Trends in Robotics}, 2013.

\bibitem[Mukadam et~al.(2018)Mukadam, Dong, Yan, Dellaert, and
  Boots]{Mukadam_mp}
M.~Mukadam, J.~Dong, X.~Yan, F.~Dellaert, and B.~Boots.
\newblock Continuous-time {G}aussian process motion planning via probabilistic
  inference.
\newblock \emph{The International Journal of Robotics Research}, 2018.

\bibitem[Flash and Hogan(1985)]{flash1985coordination}
T.~Flash and N.~Hogan.
\newblock The coordination of arm movements: An experimentally confirmed
  mathematical model.
\newblock \emph{Journal of Neuroscience}, 1985.

\bibitem[Paraschos et~al.(2013)Paraschos, Daniel, Peters, Neumann,
  et~al.]{paraschos2013probabilistic}
A.~Paraschos, C.~Daniel, J.~Peters, G.~Neumann, et~al.
\newblock Probabilistic movement primitives.
\newblock In \emph{Advances in Neural Information Processing Systems}, 2013.

\bibitem[Rasmussen and Williams(2006)]{gpml}
C.~Rasmussen and C.~Williams.
\newblock \emph{Gaussian Processes for Machine Learning}.
\newblock MIT Press, 2006.

\bibitem[{Theodorou} et~al.(2010){Theodorou}, {Tassa}, and {Todorov}]{5530971}
E.~{Theodorou}, Y.~{Tassa}, and E.~{Todorov}.
\newblock Stochastic differential dynamic programming.
\newblock In \emph{American Control Conference}, 2010.

\bibitem[Zellner(1988)]{zellner1988optimal}
A.~Zellner.
\newblock Optimal information processing and {B}ayes's theorem.
\newblock \emph{The American Statistician}, 1988.

\bibitem[Khan and Rue(2021)]{khan2021bayesian}
M.~E. Khan and H.~Rue.
\newblock The {B}ayesian learning rule.
\newblock \emph{arXiv preprint arXiv:2107.04562}, 2021.

\bibitem[Watson et~al.(2019)Watson, Abdulsamad, and Peters]{i2corl}
J.~Watson, H.~Abdulsamad, and J.~Peters.
\newblock Stochastic optimal control as approximate input inference.
\newblock In \emph{Conference on Robot Learning}, 2019.

\bibitem[Guedj(2019)]{guedj2019primer}
B.~Guedj.
\newblock A primer on {PAC-Bayesian} learning.
\newblock In \emph{Proceedings of the second congress of the French
  Mathematical Society}, 2019.

\bibitem[Alquier et~al.(2016)Alquier, Ridgway, and Chopin]{JMLRAlquier16}
P.~Alquier, J.~Ridgway, and N.~Chopin.
\newblock On the properties of variational approximations of {G}ibbs
  posteriors.
\newblock \emph{Journal of Machine Learning Research}, 2016.

\bibitem[Dai et~al.(2016)Dai, He, Dai, and Song]{dai2016provable}
B.~Dai, N.~He, H.~Dai, and L.~Song.
\newblock Provable bayesian inference via particle mirror descent.
\newblock In \emph{International Conference on Artificial Intelligence and
  Statistics}, 2016.

\bibitem[Rawlik et~al.(2012)Rawlik, Toussaint, and
  Vijayakumar]{rawlik2013stochastic}
K.~Rawlik, M.~Toussaint, and S.~Vijayakumar.
\newblock On stochastic optimal control and reinforcement learning by
  approximate inference.
\newblock In \emph{Robotics: Science and Systems}, 2012.

\bibitem[Watson and Peters(2021)]{i2cacc}
J.~Watson and J.~Peters.
\newblock Advancing trajectory optimization with approximate inference:
  Exploration, covariance control and adaptive risk.
\newblock In \emph{American Control Conference}, 2021.

\bibitem[Rawlik(2013)]{rawlik2013probabilistic}
K.~C. Rawlik.
\newblock \emph{On probabilistic inference approaches to stochastic optimal
  control}.
\newblock PhD thesis, The University of Edinburgh, 2013.

\bibitem[Wirth et~al.(2016)Wirth, F{\"u}rnkranz, and Neumann]{wirth2016model}
C.~Wirth, J.~F{\"u}rnkranz, and G.~Neumann.
\newblock Model-free preference-based reinforcement learning.
\newblock In \emph{Conference on Artificial Intelligence}, 2016.

\bibitem[Satoh et~al.(2017)Satoh, Kappen, and Saeki]{7442792}
S.~Satoh, H.~J. Kappen, and M.~Saeki.
\newblock An iterative method for nonlinear stochastic optimal control based on
  path integrals.
\newblock \emph{IEEE Transactions on Automatic Control}, 2017.

\bibitem[Wagener et~al.(2019)Wagener, an~Cheng, Sacks, and Boots]{Boots-RSS-19}
N.~Wagener, C.~an~Cheng, J.~Sacks, and B.~Boots.
\newblock An online learning approach to model predictive control.
\newblock In \emph{Robotics: Science and Systems}, 2019.

\bibitem[Peters et~al.(2010)Peters, M\"{u}lling, and Alt\"{u}n]{Peters2010REPS}
J.~Peters, K.~M\"{u}lling, and Y.~Alt\"{u}n.
\newblock Relative entropy policy search.
\newblock In \emph{Conference on Artificial Intelligence}, 2010.

\bibitem[van Campenhout and Cover(1981)]{1056374}
J.~van Campenhout and T.~Cover.
\newblock Maximum entropy and conditional probability.
\newblock \emph{IEEE Transactions on Information Theory}, 1981.

\bibitem[Metelli et~al.(2020)Metelli, Papini, Montali, and
  Restelli]{metelli2020importance}
A.~M. Metelli, M.~Papini, N.~Montali, and M.~Restelli.
\newblock Importance sampling techniques for policy optimization.
\newblock \emph{Journal of Machine Learning Research}, 2020.

\bibitem[Kong(1992)]{kong1992note}
A.~Kong.
\newblock A note on importance sampling using standardized weights.
\newblock \emph{University of Chicago, Dept. of Statistics, Technical Report},
  1992.

\bibitem[Cortes et~al.(2010)Cortes, Mansour, and Mohri]{cortes2010learning}
C.~Cortes, Y.~Mansour, and M.~Mohri.
\newblock Learning bounds for importance weighting.
\newblock In \emph{Advances in Neural Information Processing systems}, 2010.

\bibitem[Agapiou et~al.(2017)Agapiou, Papaspiliopoulos, Sanz-Alonso, and
  Stuart]{agapiou2017importance}
S.~Agapiou, O.~Papaspiliopoulos, D.~Sanz-Alonso, and A.~M. Stuart.
\newblock Importance sampling: Intrinsic dimension and computational cost.
\newblock \emph{Statistical Science}, 2017.

\bibitem[Hern{\'a}ndez-Gonz{\'a}lez and Cerquides(2020)]{hernandez2020robust}
J.~Hern{\'a}ndez-Gonz{\'a}lez and J.~Cerquides.
\newblock A robust solution to variational importance sampling of minimum
  variance.
\newblock \emph{Entropy}, 2020.

\bibitem[G{\'o}mez et~al.(2014)G{\'o}mez, Kappen, Peters, and
  Neumann]{gomez2014policy}
V.~G{\'o}mez, H.~J. Kappen, J.~Peters, and G.~Neumann.
\newblock Policy search for path integral control.
\newblock In \emph{Joint European Conference on Machine Learning and Knowledge
  Discovery in Databases}, 2014.

\bibitem[van Hoof et~al.(2017)van Hoof, Neumann, and Peters]{np-reps}
H.~van Hoof, G.~Neumann, and J.~Peters.
\newblock Non-parametric policy search with limited information loss.
\newblock \emph{Journal of Machine Learning Research}, 2017.

\bibitem[Kober and Peters(2009)]{NIPS2008KoberPeters_54110}
J.~Kober and J.~Peters.
\newblock Policy search for motor primitives in robotics.
\newblock In \emph{Advances in Neural Information Processing Systems}, 2009.

\bibitem[Pineda et~al.(2021)Pineda, Amos, Zhang, Lambert, and
  Calandra]{Pineda2021MBRL}
L.~Pineda, B.~Amos, A.~Zhang, N.~O. Lambert, and R.~Calandra.
\newblock {MBRL}-lib: A modular library for model-based reinforcement learning.
\newblock \emph{Arxiv}, 2021.

\bibitem[Doucet(2010)]{doucet2010note}
A.~Doucet.
\newblock A note on efficient conditional simulation of {G}aussian
  distributions.
\newblock \emph{Departments of Computer Science and Statistics, University of
  British Columbia}, 2010.

\bibitem[Dawid(1981)]{dawid1981some}
A.~P. Dawid.
\newblock Some matrix-variate distribution theory: notational considerations
  and a {B}ayesian application.
\newblock \emph{Biometrika}, 1981.

\bibitem[Rahimi and Recht(2007)]{rahimi2007random}
A.~Rahimi and B.~Recht.
\newblock Random features for large-scale kernel machines.
\newblock In \emph{Advances in neural information processing systems}, 2007.

\bibitem[Hildebrand(1987)]{hildebrand1987introduction}
F.~B. Hildebrand.
\newblock \emph{Introduction to numerical analysis}.
\newblock Courier Corporation, 1987.

\bibitem[Mutny and Krause(2018)]{mutny2018efficient}
M.~Mutny and A.~Krause.
\newblock Efficient high-dimensional {B}ayesian optimization with additivity
  and quadrature {F}ourier features.
\newblock In \emph{Advances in Neural Information Processing Systems}, 2018.

\bibitem[Wilson et~al.(2020)Wilson, Borovitskiy, Terenin, Mostowsky, and
  Deisenroth]{wilson2020efficiently}
J.~Wilson, V.~Borovitskiy, A.~Terenin, P.~Mostowsky, and M.~Deisenroth.
\newblock Efficiently sampling functions from {G}aussian process posteriors.
\newblock In \emph{International Conference on Machine Learning}, 2020.

\bibitem[Theodorou et~al.(2010)Theodorou, Buchli, and
  Schaal]{theodorou2010generalized}
E.~Theodorou, J.~Buchli, and S.~Schaal.
\newblock A generalized path integral control approach to reinforcement
  learning.
\newblock \emph{The Journal of Machine Learning Research}, 2010.

\bibitem[Lambert et~al.(2020)Lambert, Fishman, Fox, Boots, and
  Ramos]{lambert2020stein}
A.~Lambert, A.~Fishman, D.~Fox, B.~Boots, and F.~Ramos.
\newblock {S}tein variational model predictive control.
\newblock In \emph{Conference on Robot Learning}, 2020.

\bibitem[Wang et~al.(2021)Wang, So, Gibson, Vlahov, Gandhi, Liu, and
  Theodorou]{wang2021variational}
Z.~Wang, O.~So, J.~Gibson, B.~Vlahov, M.~S. Gandhi, G.-H. Liu, and E.~A.
  Theodorou.
\newblock Variational inference {MPC} using {T}sallis divergence.
\newblock In \emph{Robotics: Science and Systems}, 2021.

\bibitem[Song and Scaramuzza(2022)]{song2022policy}
Y.~Song and D.~Scaramuzza.
\newblock Policy search for model predictive control with application to agile
  drone flight.
\newblock \emph{IEEE Transactions on Robotics}, 2022.

\bibitem[Mysore et~al.(2021)Mysore, Mabsout, Mancuso, and Saenko]{caps2021}
S.~Mysore, B.~Mabsout, R.~Mancuso, and K.~Saenko.
\newblock Regularizing action policies for smooth control with reinforcement
  learning.
\newblock In \emph{IEEE International Conference on Robotics and Automation},
  2021.

\bibitem[Raffin et~al.(2021)Raffin, Kober, and Stulp]{pmlr-v164-raffin22a}
A.~Raffin, J.~Kober, and F.~Stulp.
\newblock Smooth exploration for robotic reinforcement learning.
\newblock In \emph{Conference on Robot Learning}, 2021.

\bibitem[Korenkevych et~al.(2019)Korenkevych, Mahmood, Vasan, and
  Bergstra]{korenkevych2019autoregressive}
D.~Korenkevych, A.~R. Mahmood, G.~Vasan, and J.~Bergstra.
\newblock Autoregressive policies for continuous control deep reinforcement
  learning.
\newblock In \emph{International Joint Conference on Artificial Intelligence},
  2019.

\bibitem[Stulp and Sigaud(2012)]{stulp2012path}
F.~Stulp and O.~Sigaud.
\newblock Path integral policy improvement with covariance matrix adaptation.
\newblock In \emph{International Conference on Machine Learning}, 2012.

\bibitem[Ollivier et~al.(2017)Ollivier, Arnold, Auger, and
  Hansen]{ollivier2017information}
Y.~Ollivier, L.~Arnold, A.~Auger, and N.~Hansen.
\newblock Information-geometric optimization algorithms: A unifying picture via
  invariance principles.
\newblock \emph{Journal of Machine Learning Research}, 2017.

\bibitem[Abdolmaleki et~al.(2017)Abdolmaleki, Price, Lau, Reis, and
  Neumann]{abdolmaleki2017deriving}
A.~Abdolmaleki, B.~Price, N.~Lau, L.~P. Reis, and G.~Neumann.
\newblock Deriving and improving {CMA-ES} with information geometric trust
  regions.
\newblock In \emph{Proceedings of the Genetic and Evolutionary Computation
  Conference}, 2017.

\bibitem[Berkenkamp and Schoellig(2015)]{berkenkamp2015safe}
F.~Berkenkamp and A.~P. Schoellig.
\newblock Safe and robust learning control with {G}aussian processes.
\newblock In \emph{European Control Conference}, 2015.

\bibitem[Kocijan et~al.(2004)Kocijan, Murray-Smith, Rasmussen, and
  Girard]{1383790}
J.~Kocijan, R.~Murray-Smith, C.~Rasmussen, and A.~Girard.
\newblock Gaussian process model based predictive control.
\newblock In \emph{American Control Conference}, 2004.

\bibitem[Deisenroth et~al.(2009)Deisenroth, Rasmussen, and Peters]{5531}
M.~Deisenroth, C.~Rasmussen, and J.~Peters.
\newblock Gaussian process dynamic programming.
\newblock \emph{Neurocomputing}, 2009.

\bibitem[Maiworm et~al.(2018)Maiworm, Limon, {Maria Manzano}, and
  Findeisen]{MAIWORM2018455}
M.~Maiworm, D.~Limon, J.~{Maria Manzano}, and R.~Findeisen.
\newblock Stability of {G}aussian process learning based output feedback model
  predictive control.
\newblock In \emph{IFAC Conference on Nonlinear Model Predictive Control},
  2018.

\bibitem[Klink et~al.(2021)Klink, Abdulsamad, Belousov, D'Eramo, Peters, and
  Pajarinen]{klink2021probabilistic}
P.~Klink, H.~Abdulsamad, B.~Belousov, C.~D'Eramo, J.~Peters, and J.~Pajarinen.
\newblock A probabilistic interpretation of self-paced learning with
  applications to reinforcement learning.
\newblock \emph{Journal of Machine Learning Research}, 2021.

\bibitem[Muratore et~al.(2021)Muratore, Eilers, Gienger, and
  Peters]{muratore2021data}
F.~Muratore, C.~Eilers, M.~Gienger, and J.~Peters.
\newblock Data-efficient domain randomization with {B}ayesian optimization.
\newblock \emph{IEEE Robotics and Automation Letters}, 2021.

\bibitem[Todorov et~al.(2012)Todorov, Erez, and Tassa]{mujoco}
E.~Todorov, T.~Erez, and Y.~Tassa.
\newblock {MuJoCo}: A physics engine for model-based control.
\newblock In \emph{IEEE International Conference on Intelligent Robots and
  Systems}, 2012.

\bibitem[Brockman et~al.(2016)Brockman, Cheung, Pettersson, Schneider,
  Schulman, Tang, and Zaremba]{gym}
G.~Brockman, V.~Cheung, L.~Pettersson, J.~Schneider, J.~Schulman, J.~Tang, and
  W.~Zaremba.
\newblock Open{AI} {G}ym, 2016.

\bibitem[Rajeswaran et~al.(2018)Rajeswaran, Kumar, Gupta, Vezzani, Schulman,
  Todorov, and Levine]{Rajeswaran-RSS-18}
A.~Rajeswaran, V.~Kumar, A.~Gupta, G.~Vezzani, J.~Schulman, E.~Todorov, and
  S.~Levine.
\newblock {Learning complex dexterous manipulation with deep reinforcement
  learning and demonstrations}.
\newblock In \emph{Robotics: Science and Systems}, 2018.

\bibitem[Anderson and Moore(2012)]{anderson2012optimal}
B.~D. Anderson and J.~B. Moore.
\newblock \emph{Optimal filtering}.
\newblock 2012.

\bibitem[Barfoot et~al.(2014)Barfoot, Tong, and
  S{\"a}rkk{\"a}]{barfoot2014batch}
T.~D. Barfoot, C.~H. Tong, and S.~S{\"a}rkk{\"a}.
\newblock Batch continuous-time trajectory estimation as exactly sparse
  {G}aussian process regression.
\newblock In \emph{Robotics: Science and Systems}, 2014.

\bibitem[Capp{\'e} et~al.(2007)Capp{\'e}, Godsill, and
  Moulines]{cappe2007overview}
O.~Capp{\'e}, S.~J. Godsill, and E.~Moulines.
\newblock An overview of existing methods and recent advances in sequential
  {Monte Carlo}.
\newblock \emph{Proceedings of the IEEE}, 2007.

\bibitem[Asadi and Littman(2017)]{asadi2017alternative}
K.~Asadi and M.~L. Littman.
\newblock An alternative softmax operator for reinforcement learning.
\newblock In \emph{International Conference on Machine Learning}, 2017.

\bibitem[Chu et~al.(2019)Chu, Blanchet, and Glynn]{pmlr-v97-chu19a}
C.~Chu, J.~Blanchet, and P.~Glynn.
\newblock Probability functional descent: A unifying perspective on {GAN}s,
  variational inference, and reinforcement learning.
\newblock In \emph{International Conference on Machine Learning}, 2019.

\bibitem[Beck and Teboulle(2003)]{beck2003mirror}
A.~Beck and M.~Teboulle.
\newblock Mirror descent and nonlinear projected subgradient methods for convex
  optimization.
\newblock \emph{Operations Research Letters}, 2003.

\bibitem[Williams et~al.(2017)Williams, Wagener, Goldfain, Drews, Rehg, Boots,
  and Theodorou]{williams2017information}
G.~Williams, N.~Wagener, B.~Goldfain, P.~Drews, J.~M. Rehg, B.~Boots, and E.~A.
  Theodorou.
\newblock Information-theoretic {MPC} for model-based reinforcement learning.
\newblock In \emph{IEEE International Conference on Robotics and Automation},
  2017.

\bibitem[{Virtanen} et~al.(2020){Virtanen}, {Gommers}, {Oliphant}, {Haberland},
  {Reddy}, {Cournapeau}, {Burovski}, {Peterson}, {Weckesser}, {Bright}, {van
  der Walt}, {Brett}, {Wilson}, {Jarrod Millman}, {Mayorov}, {Nelson}, {Jones},
  {Kern}, {Larson}, {Carey}, {Polat}, {Feng}, {Moore}, {Vand erPlas},
  {Laxalde}, {Perktold}, {Cimrman}, {Henriksen}, {Quintero}, {Harris},
  {Archibald}, {Ribeiro}, {Pedregosa}, {van Mulbregt}, and
  {Contributors}]{2020SciPy-NMeth}
P.~{Virtanen}, R.~{Gommers}, T.~E. {Oliphant}, M.~{Haberland}, T.~{Reddy},
  D.~{Cournapeau}, E.~{Burovski}, P.~{Peterson}, W.~{Weckesser}, J.~{Bright},
  S.~J. {van der Walt}, M.~{Brett}, J.~{Wilson}, K.~{Jarrod Millman},
  N.~{Mayorov}, A.~R.~J. {Nelson}, E.~{Jones}, R.~{Kern}, E.~{Larson},
  C.~{Carey}, {\.I}.~{Polat}, Y.~{Feng}, E.~W. {Moore}, J.~{Vand erPlas},
  D.~{Laxalde}, J.~{Perktold}, R.~{Cimrman}, I.~{Henriksen}, E.~A. {Quintero},
  C.~R. {Harris}, A.~M. {Archibald}, A.~H. {Ribeiro}, F.~{Pedregosa}, P.~{van
  Mulbregt}, and S.~.~. {Contributors}.
\newblock {{SciPy 1.0}: Fundamental Algorithms for Scientific Computing in
  Python}.
\newblock \emph{Nature Methods}, 2020.

\bibitem[Dutilleul(1999)]{dutilleul1999mle}
P.~Dutilleul.
\newblock The {MLE} algorithm for the matrix normal distribution.
\newblock \emph{Journal of statistical computation and simulation}, 1999.

\bibitem[Peng et~al.(2021)Peng, Qian, Shen, Zhang, and Li]{peng2021generative}
L.~Peng, H.~Qian, Z.~Shen, C.~Zhang, and F.~Li.
\newblock Generative actor-critic: An off-policy algorithm using the
  push-forward model.
\newblock \emph{arXiv preprint arXiv:2105.03733}, 2021.

\bibitem[Boyd et~al.(2004)Boyd, Boyd, and Vandenberghe]{boyd2004convex}
S.~Boyd, S.~P. Boyd, and L.~Vandenberghe.
\newblock \emph{Convex optimization}.
\newblock Cambridge University Press, 2004.

\bibitem[S\"arkk\"a and Solin(2019)]{sarkka_solin_2019}
S.~S\"arkk\"a and A.~Solin.
\newblock \emph{Applied Stochastic Differential Equations}.
\newblock Institute of Mathematical Statistics Textbooks. Cambridge University
  Press, 2019.

\end{thebibliography}

\newpage

\appendix

\section{Extended Discussion Points and Results}
\label{sec:discussion}
This section discusses specific details not covered at length in the main section. 

\textbf{Posterior policy iteration as optimization.}
For clarity, we can ground inference-based optimization clearly by considering Gaussian inference and quadratic optimization (Example \ref{ex:newton}).

\begin{example} \label{ex:newton}
	(Newton method via Gaussian inference).
	Consider minimizing a function $f(\vx)$ from a Gaussian prior $p(\vx) = \gN(\vmu_i, \mSigma_i)$.
	Considering a second-order Taylor expansion about the prior mean,
	$f(\vx)\approx f(\vmu_i) + \nabla_\vx f(\vmu_i)(\vx-\vmu_i) + \frac{1}{2}(\vx-\vmu_i)^\top \nabla_{\vx\vx}f(\vmu_i)(\vx-\vmu_i)$,
	the Gaussian pseudo-posterior $q_\alpha(\vx) = \gN(\vmu_{i+1}, \mSigma_{i+1}) \propto \exp(-\alpha f(\vx))\,p(\vx)$ is 
	\begin{align*}
		\vmu_{i+1} = \vmu_{i} - \alpha\mSigma_{i+1}\nabla_\vx f(\vmu_i),\;
		\mSigma_{i+1} = (\mSigma_{i} + \alpha\nabla_{\vx\vx}^2f(\vmu_i))\inv.
	\end{align*}
	As $\alpha{\,\rightarrow\,}0$, $q_\alpha{\,\rightarrow\,}p$, while
	as $\alpha\rightarrow\infty$, the mean update tends to the Newton step,
	\begin{align*}
	\vmu_{i+1} \rightarrow \vmu_{i} - \nabla_{\vx\vx}^2f(\vmu_i)\inv\nabla_\vx f(\vmu_i).
\end{align*}	
	Therefore, $\alpha$ acts as a learning rate and regularizer, with repeated inference performing regularized Newton decent. 
	This regularization is beneficial for non-convex optimization, as the Hessian may be negative definite.
\end{example}

\begin{figure}[!t]
	\includegraphics{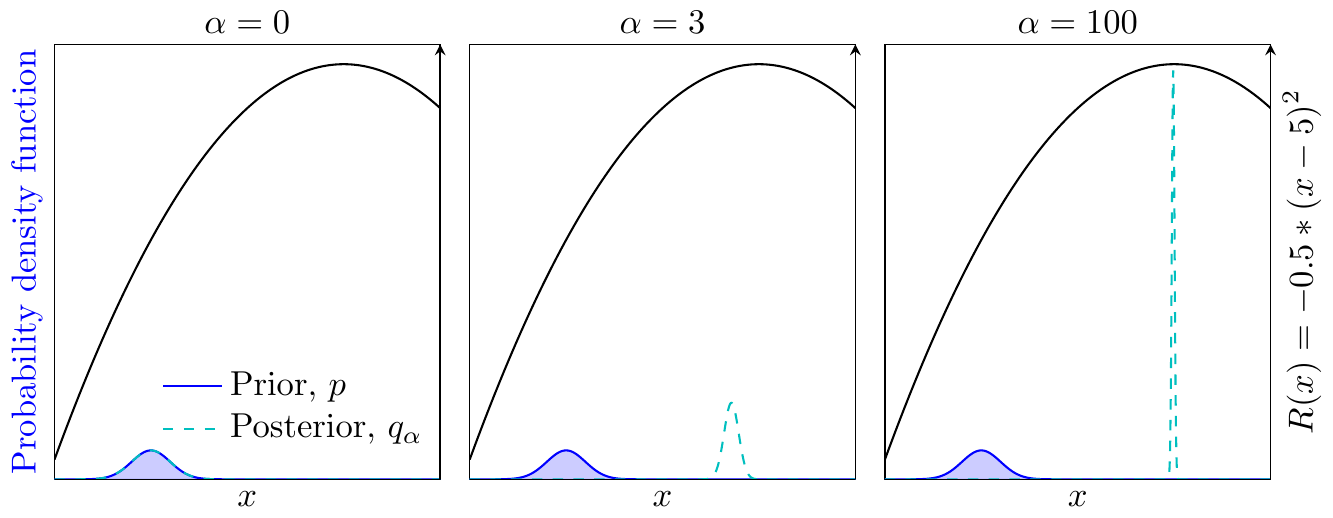}
	\vspace{-0.5em}
	\caption{
	Visualization of Example \ref{ex:newton} for three values of $\alpha$, showing interpolation between no update and towards the Newton update. 	
	}
	\label{fig:gaussian_ppi}
	\vspace{-1em}
\end{figure}

\begin{figure}[!b]
	\includegraphics{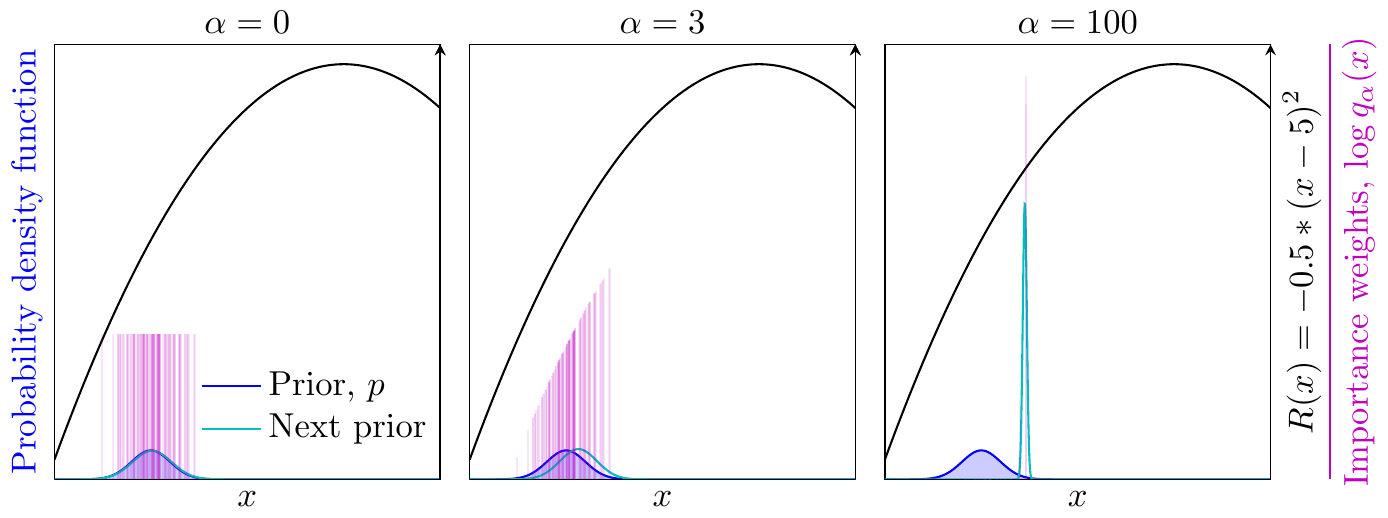}
	\vspace{-1.5em}
	\caption{
		Visualization of Example \ref{ex:newton} for three values of $\alpha$, using self-normalized importance sampling. 
		High values of $\alpha$ lead to the posterior collapsing around a few samples. Moreover, compared to the Gaussian case, the update is limited to the support of the samples.   	
	}
	\label{fig:gaussian_ppi}
	\vspace{-1em}
\end{figure}

\textbf{Minimum relative entropy problems.}
To motivate \reps as a form of constrained Gibbs posterior, we reverse the objective and constraint, which yields the minimum relative entropy problem (Lemma \ref{lemma:mrep}).
The posterior of this problem is the same as \reps and the temperature is also a Lagrangian multiplier, but the constraint is now defined by the expectation value rather than the \kl divergence.
\begin{lemma} \label{lemma:mrep}
	(Minimum relative entropy \cite{1056374})
	Find $q$ that minimizes the relative entropy from $p$, given a function $\vf$ that describes random variable $\vx$,
	$
	\textstyle\min_q \KL[q(\vx)\mid\mid p(\vx)]
	\;
	\text{s.t.}
	\;
	\E_{\vx\sim q(\cdot)}[\vf(\vx)] = \hat{\vf}.
	$
	The solution to this problem is
	$
	q(\vx) \propto \exp(-\bm{\lambda}\tran\vf(\vx))p(\vx),
	$
	where $\bm{\lambda}$ is the Lagrange multiplier required to satisfy the constraint. 
	When $p(\vx)$ is uniform, $q(\vx)$ is a maximum entropy distribution.
\end{lemma}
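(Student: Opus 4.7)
The plan is to treat this as a constrained convex optimization problem in the space of densities and derive the posterior via Lagrange duality / calculus of variations. The KL divergence $\KL[q\mid\mid p]$ is strictly convex in $q$, and both the normalization constraint $\int q(\vx)\,\rd\vx = 1$ and the moment-matching constraint $\E_q[\vf(\vx)] = \hat{\vf}$ are affine in $q$, so the problem is a convex program with a unique minimizer when feasible. I would begin by writing down the Lagrangian
\begin{equation*}
\gL(q, \bm{\lambda}, \nu) = \int q(\vx)\log\frac{q(\vx)}{p(\vx)}\rd\vx + \bm{\lambda}\tran\!\left(\int \vf(\vx)q(\vx)\rd\vx - \hat{\vf}\right) + \nu\!\left(\int q(\vx)\rd\vx - 1\right),
\end{equation*}
and then take the functional derivative with respect to $q(\vx)$.

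Setting $\delta\gL/\delta q(\vx) = 0$ gives the pointwise stationarity condition
\begin{equation*}
\log q(\vx) - \log p(\vx) + 1 + \bm{\lambda}\tran\vf(\vx) + \nu = 0,
\end{equation*}
from which solving algebraically yields $q(\vx) = p(\vx)\exp(-\bm{\lambda}\tran\vf(\vx) - \nu - 1)$. The scalar multiplier $\nu$ (from normalization) is absorbed into a single partition function $Z(\bm{\lambda}) = \int \exp(-\bm{\lambda}\tran\vf(\vx))p(\vx)\rd\vx$, producing the claimed form $q(\vx) = Z(\bm{\lambda})\inv\exp(-\bm{\lambda}\tran\vf(\vx))p(\vx)$. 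The vector $\bm{\lambda}$ is then fixed by substituting this form back into the moment constraint, yielding the implicit equation $\E_{q}[\vf(\vx)] = -\nabla_{\bm{\lambda}}\log Z(\bm{\lambda}) = \hat{\vf}$; by convexity of $\log Z$ (it is a cumulant generating function), this condition determines $\bm{\lambda}$ uniquely whenever $\hat{\vf}$ lies in the relative interior of the convex hull of the support of $\vf$ under $p$. For the special case where $p$ is uniform (on a bounded domain), $\log p$ contributes only a constant to the KL, so the problem reduces to maximizing the differential entropy subject to the moment constraint, recovering the standard maximum-entropy exponential family.

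The main obstacle, and the part I would take most care with, is the justification of dual feasibility and the exchange between the primal and dual, rather than the calculation itself. One needs (i) that the primal is feasible, i.e.\ $\hat{\vf}$ is an attainable mean under some $q\ll p$, (ii) a Slater-type condition so that strong duality holds and the multiplier $\bm{\lambda}$ exists, and (iii) enough regularity on $\vf$ and $p$ (for instance, $\vf$ bounded or with exponential moments under $p$) so that $Z(\bm{\lambda})$ is finite on a nonempty open neighbourhood, making $\log Z$ smooth and strictly convex. Once these are in place, the stationarity condition is both necessary and sufficient, and strict convexity of the KL guarantees the derived $q$ is the unique minimizer. I would either cite Cover and Thomas / Csisz\'ar for the classical minimum relative entropy result, or, if self-contained, verify optimality directly by a one-line convexity argument: for any feasible $q'$, $\KL[q'\mid\mid p] - \KL[q\mid\mid p] = \KL[q'\mid\mid q] + \bm{\lambda}\tran(\E_{q'}[\vf] - \E_q[\vf]) = \KL[q'\mid\mid q] \geq 0$, since both $q$ and $q'$ satisfy the moment constraint.
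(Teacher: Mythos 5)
The paper does not actually prove this lemma: it is stated as a classical result and attributed to the cited reference, so there is no in-paper argument to compare against. Your derivation is the standard and correct one --- Lagrangian stationarity in density space giving $q(\vx)\propto\exp(-\bm{\lambda}\tran\vf(\vx))\,p(\vx)$, the multiplier fixed by $\hat{\vf}=-\nabla_{\bm{\lambda}}\log Z(\bm{\lambda})$, and the Pythagorean-style identity $\KL[q'\mid\mid p]-\KL[q\mid\mid p]=\KL[q'\mid\mid q]$ for feasible $q'$ certifying global optimality without appeal to strong duality. The only nit is the sign of the $\bm{\lambda}\tran(\E_{q'}[\vf]-\E_q[\vf])$ term in that last identity (it should carry a minus sign under the convention $q\propto\exp(-\bm{\lambda}\tran\vf)\,p$), but since that term vanishes under the shared moment constraint it is immaterial; your attention to feasibility and finiteness of $Z$ is more care than the paper itself takes.
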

Interestingly, in maximum entropy problems, function $\vf$ defines the sufficient statistics of the random variable.
For our setting, $\vf$ is based on the reward function (episodic return), which indicates the reward objective is used to `summarize' the random policy samples.  

\textbf{R\'enyi-2 divergence and the effective sample size.}
The \lbps objective uses the effective sample size as a practical estimator for the exponentiated R\'enyi-2 divergence (Lemma \ref{lemma:ess}).
\begin{lemma} \label{lemma:ess}
	(Exponentiated R\'enyi-2 divergence estimator~\citep{cortes2010learning})
	The effective sample size diagnostic ($\hat{N}$) is an estimate of the exponentiated R\'enyi-2 divergence for finite samples, as ${d_2(p|| q)/N{\,\approx\,}\hat{N}^{-1}}$,
	\begin{align*}
		\lim_{N\rightarrow\infty}\frac{1}{N}\frac{(\sum_n w_{q/p}(\vx_n))^2}{\sum_n w_{q/p}(\vx_n)^2} = 
		\int \frac{p(\vx)}{q(\vx)}\rd\vx = \exp \sD_2(p\mid\mid q)^{-1}
		=
		d_2(p\mid\mid q)^{-1}.
	\end{align*}
\end{lemma}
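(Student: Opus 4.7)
The plan is to derive the stated limit by a direct application of the strong law of large numbers (SLLN) to the numerator and denominator of the ESS ratio, and then close the chain of equalities by invoking the definition of the Rényi-2 divergence. The result is essentially a definitional identification once one exposes two empirical means inside the ratio.

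First I would rewrite
\begin{align*}
\frac{1}{N}\frac{\bigl(\sum_{n=1}^N w_{q/p}(\vx_n)\bigr)^2}{\sum_{n=1}^N w_{q/p}(\vx_n)^2}
=
\frac{\bigl(N^{-1}\sum_{n=1}^N w_{q/p}(\vx_n)\bigr)^2}{N^{-1}\sum_{n=1}^N w_{q/p}(\vx_n)^2},
\end{align*}
so that both numerator and denominator are empirical averages of i.i.d.\ random variables. Assuming the samples are drawn from the proposal appearing in the definition of $w_{q/p}$, and imposing the absolute-continuity condition together with $d_2[q\mid\mid p]<\infty$ (the same condition used in Theorem~\ref{ref:lower_bound}), the weight $w_{q/p}$ has finite mean and second moment, so SLLN applies to each term separately.

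Second, I would identify the limits of the two empirical means. The mean of a normalized density ratio under its proposal integrates to one, so $N^{-1}\sum_n w_{q/p}(\vx_n)\to 1$ almost surely; and $N^{-1}\sum_n w_{q/p}(\vx_n)^2$ converges almost surely to $\E[w_{q/p}^2]$, which, after the usual change of variables, equals $\int p(\vx)/q(\vx)\,\rd\vx$ in the notation of the statement. I would then apply the continuous mapping theorem to the map $(a,b)\mapsto a^2/b$, continuous at the limit point since the denominator limit is strictly positive, to conclude that the whole ratio converges almost surely to $1/\!\int p/q\,\rd\vx$.

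Finally, I would close the chain of equalities by invoking the paper's definition $\sD_2[p\mid\mid q]=\log\!\int p/q\,\rd\vx$ and $d_2[p\mid\mid q]=\exp\sD_2[p\mid\mid q]$, giving $\int p/q\,\rd\vx = d_2[p\mid\mid q] = \exp\sD_2[p\mid\mid q]^{-1\cdot(-1)}$ and rearranging yields the claimed approximation $d_2[p\mid\mid q]/N\approx \hat N^{-1}$. The only real obstacle is bookkeeping rather than mathematics: one must be careful to align the proposal/target assignment with the paper's convention for $w_{q/p}$ and to state the integrability hypothesis $d_2<\infty$ under which the second-moment SLLN is valid; without this hypothesis the denominator mean is infinite and $\hat N/N\to 0$, which is exactly the pathological regime the effective-sample-size diagnostic is meant to warn about.
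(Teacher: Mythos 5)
The paper itself supplies no proof of this lemma---it is imported by citation from Cortes et al.\ and Kong---so your argument can only be judged on its own terms. The skeleton you propose is the standard and correct route: rewrite $\hat N/N$ as a ratio of two empirical means of i.i.d.\ variables, apply the strong law to each under the integrability hypothesis $d_2[q\mid\mid p]<\infty$, note $N^{-1}\sum_n w_{q/p}(\vx_n)\to 1$, and finish with the continuous mapping theorem for $(a,b)\mapsto a^2/b$ at a limit point whose denominator is strictly positive (indeed $\E_p[w_{q/p}^2]\geq(\E_p[w_{q/p}])^2=1$). Your closing remark about the pathological regime where the second moment is infinite and $\hat N/N\to 0$ is also exactly the right caveat.

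The gap is in the identification step. With $\vx_n\sim p$ and $w_{q/p}=q/p$, the second empirical mean converges to
\begin{align*}
\E_{\vx\sim p}\bigl[w_{q/p}(\vx)^2\bigr]=\int \frac{q(\vx)^2}{p(\vx)}\,\rd\vx = d_2(q\mid\mid p),
\end{align*}
not to $\int p/q\,\rd\vx$ as you assert; no change of variables turns $\int q^2/p$ into $\int p/q$. The correct limit of $\hat N/N$ is therefore $d_2(q\mid\mid p)^{-1}$, which is precisely what the main text uses ($\hat N_\alpha\approx N/d_2[q_\alpha\mid\mid p]$); the middle expression $\int p/q\,\rd\vx$ and the argument order $d_2(p\mid\mid q)$ in the lemma's display are typos that you should have flagged rather than reproduced, and your invocation of ``the paper's definition $\sD_2[p\mid\mid q]=\log\int p/q\,\rd\vx$'' is not the R\'enyi-2 divergence (which is $\log\int p^2/q$). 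Your write-up also ends internally inconsistent: you derive that the ratio tends to $1/\int p/q\,\rd\vx$ while the statement equates the limit to $\int p/q\,\rd\vx$ itself, and the attempted patch via ``$\exp\sD_2[p\mid\mid q]^{-1\cdot(-1)}$ and rearranging'' does not resolve this. The fix is small but necessary: replace the claimed second-moment value with $\int q^2/p\,\rd\vx$ and carry $d_2(q\mid\mid p)$, with the arguments in that order, through the final chain of equalities.
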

This result connects divergence-based stochastic optimization methods, such as \reps, to `elite'-based stochastic solvers such as \cem, as the number of elites is equal to the effective sample size of the weights, as discussed below.

\textbf{Time Shifting with Gaussian Process}
We derive the multivariate normal posterior shifting update introduced in Section \ref{sec:ppi_mpc}, that is facilitated by continuous-time Gaussian process priors.
\gpshift*
\begin{proof}
For this update, we require the Gaussian likelihood $\gN(\vmu_{R|t_1:t_2}, \mSigma_{R|t_1:t_2})$ that achieves the posterior $q_\alpha(\va_{t_1:t_2})$ given the prior $p(\va_{t_1:t_2})$.
Using $\mK_{t_1:t_2}{\,=\,}\mSigma_{t_1:t_2}
\mSigma_{R|t_1:t_2}\inv$, we write the Gaussian posterior update in the Kalman filter form
\begin{align*}
	\vmu_{t_1:t_2|R} &= \vmu_{t_1:t_2} + \mK_{t_1:t_2}(\vmu_{R|t_1:t_2}-\vmu_{t_1:t_2})
	,\\
	\mSigma_{t_1:t_2|R} &= \mSigma_{t_1:t_2}- \mK_{t_1:t_2}\mSigma_{R|t_1:t_2}\mK_{t_1:t_2}\tran.
\end{align*}
We introduce $\vnu$ and $\mLambda$ to capture the unknown terms involving $\va_{R|t_1:t_2}$ w.r.t. $\va_{t_1:t_2}$ and $\va_{t_1:t_2|R}$, 
\begin{align}
	\hspace{-1em}
	\vnu_{t_1:t_2} &= \mSigma_{R|t_1:t_2}\inv(\vmu_{R|t_1:t_2}-\vmu_{t_1:t_2}) = 
	\mSigma_{t_1:t_2}\inv(\vmu_{t_1:t_2|R} - \vmu_{t_1:t_2})
	,\notag\\
	\mLambda_{t_1:t_2} &=
	\mSigma_{R|t_1:t_2}\inv = 
	\mSigma_{t_1:t_2}\inv(\mSigma_{t_1:t_2} - \mSigma_{t_1:t_2|R})\mSigma_{t_1:t_2}\inv
	.\notag
	\intertext{When extrapolating to the new time sequence, we use the Kalman posterior update again, but with the prior for the new time sequence.
	This step computes the joint over old and new timesteps, conditions on the objective (i.e. Bayes' rule), and marginalizes out the old timesteps (see Equation \protect\ref{eq:update_marg}), so}
	\vmu_{t_3:t_4|R} &= \vmu_{t_3:t_4}{+\,}\mSigma_{t_3:t_4,t_1:t_2}
	\vnu_{t_1:t_2},\notag\\
	\mSigma_{t_3:t_4|R} &= \mSigma_{t_3:t_4}{-\,} \mSigma_{t_3:t_4,t_1:t_2}
	\mLambda_{t_1:t_2}
	\mSigma_{t_3:t_4,t_1:t_2}\tran{.}\hspace{-0.5em}
	\notag
\end{align}
\end{proof}
Using the geometric interpretation of the Kalman filter \cite{anderson2012optimal}, this update can be seen as projecting the solution from the previous sequence into the new time sequence, given the correlation structure defined by the \gp prior.

\textbf{Augmented Cost Design versus Prior Design.}
Prior work enforces smoothness using an augmented reward function term, e.g. $-\lambda_s||\va_t-\va_{t+1}||^2$ per timestep \cite{caps2021}. 
For \ppi, the \ppi objective also augments the reward objective with a \kl term (Equation \ref{eq:ELBO}).
For the Monte Carlo setting here, this \kl term is $\frac{1}{\alpha}\textstyle\sum_n w^{(n)} (\log w^{(n)} - \log p(\mA^{(n)}))$, where $w^{(n)}{\propto}\exp(\alpha R^{(n)})$.
When $p(\mA)$ is a Gaussian process, the log probability has the familiar quadratic form, but this time applies to the whole sequence $\mA$, due to the episodic setting.  
Independent white noise only applies a quadratic penalty per action, due to its factorized covariance.
A first-order Markovian \gp would regularize one-step correlations, as its inverse covariance is banded \cite{barfoot2014batch}. 
The squared exponential kernel, used in this work, has infinite order \cite{gpml} and a dense inverse covariance matrix and therefore regularizes the whole sequence, which is why it was chosen to achieve smoothness.

\textbf{Episodic or sequential inference?}
For the Monte Carlo setting, sequential Monte Carlo (SMC)~\citep{cappe2007overview} is an obvious choice for inference given its connection to Bayesian smoothing methods.
However, we found that an SMC approach provided limited benefits for control. 
The variance reduction when resampling decreases exploration from the optimization viewpoint.

Moreover, SMC smoothing reweighs the particles sampled in the forward pass, with $\gO(N^2T)$ complexity for $N$ particles \citep{cappe2007overview}.
Therefore, no exploration occurs during the backward pass, in contrast to Gaussian message passing where smoothing performs Riccati equation updates on the state-action distribution. 
When considering just sequential importance sampling for trajectories, we arrive at the episodic case, as 
$w^{(n)}_T = \exp\left(\alpha r^{(n)}_T\right) w_{T-1}^{(n)} = \exp\left(\alpha \sum_{\tau=0}^T r^{(n)}_\tau\right)$. 

\textbf{Connection to rare event simulation and maximum estimators.}
The cross entropy method is a popular and effective Monte Carlo optimizer \citep{cem}.
It broadly works by moment matching an exponential family distribution onto `elite' samples.
The elite samples are chosen according to 
${\sP_{q(\vx)}(f(\vx) > f^*)}$, a rare event simulator, where improving upon $f^*$ is treated as the rare event.
In practice, elites are chosen by sorting the top $k$ samples according to the objective.
Our lower-bound reflects \cem in two ways. 
One, the rare event inequality reflects the expected return bound used in the \lbps objective.
Secondly, the $k$ in the top-$k$ estimator matches the effective sample size, used in \lbps and \essps. 
One can view \cem and \is approaches through their maximum estimators
\begin{align*}
\text{Max}_k[\mR] = \frac{\sum_{n=1}^k R_n}{\sum_{n=1}^k 1},
\quad
\text{Max}_\alpha[\mR] = \frac{\sum_{n=1}^N \exp(\alpha R_n)R_n}{\sum_{n=1}^k \exp(\alpha R_n)}.
\end{align*}
We can see that the \snis expectation is equivalent to the Boltzmann softmax, used in reinforcement learning \citep{asadi2017alternative}. 
Using $k$ or $\alpha$ this estimator is bounded between the true maximum over the samples and the mean. 
Therefore the estimators behave in a similar fashion when $\alpha$ is chosen such that the \ess is $k$.
The key distinction is the sparse and uniform weights of \cem vs. the posterior weights of \ppi methods, which seems to have an equal or more important influence on optimization, based on the black-box optimization results.

\textbf{Connection to mirror descent.}
Functional mirror descent for online learning optimizes a distribution $q$ and introduces a Bregman divergence $\sD$ for information-geometric regulation, i.e.
$q_{i+1}=\argmin_{q\in\sQ} \E_{\vx\sim q(\cdot)}[f(\vx)] + \alpha_i\sD[q\mid\mid q_{i}]$ \cite{pmlr-v97-chu19a}.
The temperature typically follows a pre-defined schedule that comes with convergence guarantees, e.g. $\alpha_i{\,\propto\,} \sqrt{i}$, such that $\alpha{\,\rightarrow\,}\infty$ as $i{\,\rightarrow\,}\infty$~\cite{beck2003mirror}.
Considering Equation \ref{eq:ELBO}, the \dmd view applies to our setting when using \kl regularization, so \dmd view opens up alternative Bregman divergences for regularization.

\section{Extended Related Work}
\label{sec:extended_related_work}
The methods presented in the main section have been investigated extensively in the prior literature. 
This section provides a more in-depth discussion of the differences in approach and implementation.

\textbf{Temperature tuning approaches.}
Across control-as-inference methods, the temperature value plays an important role. 
By the construction of the policy updates, mis-specification of the temperature leads overly greedy or conservative optimization.
Table \ref{tab:temp} provides a summary.
A popular and often effective approach is to tune a fixed temperature, as done in \textsc{aico} \cite{toussaint2009robot} and \mppi \citep{williams2017model}.
However, the optimization behavior depends the values of the objective, and nonlinear optimization methods such as Levenberg–Marquardt and mirror descent suggest an adaptive regularization scheme may perform better.
Reward-weighted regression (\textsc{rwr}) \citep{Peters_PICML_2007}, and later input inference for control (\textsc{i2c}) \citep{watson2021stochastic}, take a completely probabilistic view of the Gibbs likelihood and used a closed-form update to optimize the temperature using expectation maximization.
However, this update is motivated by the probabilistic interpretation of the optimization, not the optimization itself, so there is no guarantee this approach improves optimization, beyond the attractive closed-form update.
Follow-up work regulated optimization by normalizing the returns, using the range or standard deviation \citep{theodorou2010generalized, NIPS2008KoberPeters_54110}.
While this resolved the objective dependency, it is a heuristic and not interpretable from the optimization perspective. 
Later, \reps framed the update as a constrained optimization problem, subject to a \kl constraint \cite{Peters2010REPS, daniel2016hierarchical}.
The temperature was them obtained by minimizing the Lagrangian dual function.
However, in practice this dual is approximated using Monte Carlo integration which limits the constraint accuracy.
Moreover, a hard \kl constraint now requires a distribution-dependent hyperparameter. 
This work introduces \lbps and \essps.
\lbps optimizes a lower-bound of the importance-sampled expected return, with the hyperparameter controlling greediness via the confidence of the concentration inequality and therefore is objective- and distribution independent.
\essps bridges \lbps and \cem, optimizing for a desired effective sample size as an analogy to elite samples.
Finally, self-paced contextual episodic policy search has previously adopted the minimum \kl form of the \reps optimization problem, in order to assess the expected performance per context task \cite{klink2021probabilistic}.

\begin{table}[!tb]
	\centering
	\caption{
	A review of different adaptive temperature strategies across episodic control-as-inference methods and settings.
	We propose \lbps and \essps as optimization-driven methods with intuitive hyperparameters.
		}
	\label{tab:temp}
			\begin{tabular}{lll}
					Algorithm & Method & Description
					\\\toprule
					\mppi \cite{williams2017model}, \textsc{aico} \cite{toussaint2009robot} & Constant & $\alpha$\\
					\pitwo \cite{theodorou2010generalized}, \textsc{p}o\textsc{wer} \cite{NIPS2008KoberPeters_54110} & Normalization & $\bar{\alpha}/(\max[\mR] - \min[\mR]),\quad\bar{\alpha}/\sigma_R$ \\
					\textsc{rwr} \cite{Peters_PICML_2007}, \textsc{i2c} \cite{i2corl} & \textsc{em} & $\alpha_{i+1}=\sum_n \exp\left(\alpha_i R_i^{(n)}\right) / \sum_n R_i^{(n)}\exp\left(\alpha_i R_i^{(n)}\right) $ \\
					\reps \cite{Peters2010REPS}, \more \cite{abdolmaleki2015model} & \textsc{kl} bound & $\argmin_\alpha \epsilon/\alpha + \frac{1}{\alpha}\log\sum_n\exp(\alpha R^{(n)})$\\
					\lbps \tiny(this work) & \is lower-bound & $\argmax_\alpha \E_{q_\alpha}[R] - \gE_R(\delta, \hat{N}_\alpha)$ \\
					\essps \tiny(this work) & \ess & $\argmin_\alpha |\hat{N}_\alpha-N^*|$ \\
					\bottomrule     
				\end{tabular}
				\label{tab:temperature}
\end{table}

\textbf{Gaussian message passing methods.}
Using Bayesian smoothing of the state-action distribution, trajectory optimization can be performed by treating a step-based objective analogously to the observation log likelihood in state estimation \cite{toussaint2009robot,watson2021stochastic}.
While the control prior defined as temporally independent in prior work, the smoothing of the state-action distribution imbues an inherent smoothness to the solution (e.g. Figure 6, \citep{watson2021stochastic}).
Approximate inference can be performed using linearization or quadrature, which, while effective are less amenable to parallelization and therefore do not scale so gracefully to high-dimensional state and action spaces \cite{watson2021stochastic}.
Moreover, these methods require access to the objective function for linearization, rather than just rollout evaluations. 

\textbf{Mirror descent model predictive control.}
Mirror descent, used for proximal optimization, introducing information geometric regularization through a chosen Bregman divergence $\sD_\Psi$.
\emph{Dynamic} mirror descent (\dmd), motivates an autonomous update $\Phi$ to the optimization variable to improve performance for online learning.   
\dmd-\mpc \citep{Boots-RSS-19} incorporates this into an \mpc scheme by incorporating an explicit time shift operator $\Phi$,
so
$\vtheta_{t+1} = \Phi(\vtheta)$,
and
$\vtheta =\argmin_{\vtheta\in\Theta} \nabla l_t(\vtheta_t)\tran\vtheta + \sD_\psi(\vtheta||\vtheta_t)$.
Unlike mirror descent, \dmd-\mpc does not consider an adaptive temperature strategy. 
Moreover, it introduces \cem- and \mppi- like updates by explicitly transforming the objective and preforming gradient descent, rather than estimating the posterior. 

\textbf{Path integral control.}
Path integral theory connects Monte Carlo esimations to partial differential equation (PDE) solutions \citep{kappen2016adaptive}.
Using this, path integral control is used to motivate sample-approximation to the continuous-time Hamiltonian-Jacobian PDE.
Crucially, the exponential transform is introduced to the value function term to make the PDE linear, a requirement to path integral theory. 
Path integral theory is limited to estimating a optimal action trajectory, given an initial state, and requires several additional assumptions on the dynamics and disturbance noise \citep{theodorou2010generalized}. 
Later, a divergence-minimization perspective was applied to path integral methods, to produce algorithms for more general settings and discrete time, referred to both as (information theoretic) \textsc{it-mpc} \citep{williams2018information} and also \mppi \citep{williams2017information}.
This alternative view is closer to \ppi and no longer contains the key assumptions that apply to path integral theory.

\textbf{Variational Inference MPC}
Variational \mpc \citep{okada2020variational} is the same posterior policy iteration scheme outlined in the main text. 
The key difference is a posterior entropy bonus, like \more, added to the objective
\begin{align*}
	\min_q \E_q[R] + \frac{1}{\lambda_1}\KL[q\mid\mid p] + \frac{1}{\lambda_2}\mathbb{H}[q].
\end{align*}
This results, once reparameterized, in the usual posterior update with an annealing coefficient $\kappa$ on the prior
\begin{align*}
	w_n \propto \exp(\alpha r_n)\,p(\va_n)^{-\kappa}.
\end{align*}
Due to hyperparameter sensitivity, this entropy regularization required a normalization step itself
\begin{align}
	p(\va)^{-\kappa} = \exp\left(-\kappa\log p(\va)\right)
	\rightarrow
	\exp\left(-\bar{\kappa}\frac{\log p - \max\log p}{\min\log p - \max\log p}\right).
\end{align}
As a result, the entropy regularization is dynamic in practice.
The mechanism of this update is to increase the weight of low-likelihood samples such that the entropy of the posterior increases.  
In the ablation study, the effect of this regularization was relatively small with limited statistical significance. 
The work also uses a Gaussian mixture model as the variational family for multi-modal action sequences. 
As an \mppi baseline, the authors use \pitwo for \mpc with an adaptive temperature through normalization with $\bar{\alpha}$ set to $10$.

\textbf{Variational Inference \mpc using Tsallis Divergence}
Adopting the `generalized' variational inference approach, Wang et al. replace the \kl divergence with the Tsallis divergence \citep{wang2021variational}, resulting in discontinuous, rather than exponential, expression for the posterior weights
\begin{align}
	w_n &\propto \max(0, 1 + (\gamma-1)\,\alpha \,r_n)^{\frac{1}{\gamma-1}},
	\quad
	\gamma > 0.
\intertext{In practice, like in the \cem, $k$ elites are used to define the $r_*$ threshold}
	w_n &\propto
	\begin{cases}
	\exp\left(\frac{1}{\gamma-1}\log\left(1-\frac{r_n}{r_*}\right)\right) &\text{ if } r_n < r_*,\\
	0, &\text{ otherwise}.
	\end{cases}
\end{align}
The resulting parameterization, essentially combines the \cem- and importance sampling approaches, using elites but weighing them through $\gamma$. 
For the implementation, the number of elites and $\gamma$ were tuned but kept constant during \mpc.
From the perspective outlined in this work, this approach limits the effective sample size at a maximum value, but allows it to drop if there is sufficient range in the elite returns. 
As a result, its optimization is greedier than \cem, resulting in better downstream control performance.
In the paper, the authors compare to \mppi with a static temperature and do not consider adaptive schemes.
In theory, the Tsallis divergence could be combined with \lbps to optimize $\gamma$ adaptively.

\textbf{Stein Variational Inference Model Predictive Control}
Stein variational gradient descent (\textsc{svgd}) is an approximate inference method for learning multi-modal posteriors parameterized by particles and a kernel function,
$q(\cdot) = \sum_n k(\cdot,\vx_n)$.
The key quality of \textsc{svgd} is the kernel, which provides a repulsion force during learning to encourage particle diversity.

For control, \textsc{svgd} can be used to infer multi-modal action sequence which exhibit high entropy for exploration, as done in \textsc{sv-mpc} \citep{lambert2020stein}.  
Due to the limitation of kernel methods with high-dimensional inputs, the kernel is designed to be factorized in time and action.
The kernel design is also Markovian, so it considers the temporally adjacent variables as well, encouraging smoothness.
This design decomposes the kernel into the sum of $H(H{-}1)d_a$ one dimensional kernels for $H$ timesteps. 
Moreover, extrapolation into the future is realized by copying the last timestep (i.e. a zero-order hold). 
\textsc{svgd} requires gradients of the loglikelihood, using backpropagation through the dynamics or Monte Carlo estimates. 
\textsc{sv-mpc} has an additional learning rate hyperparameter and uses an independent Gaussian action prior across timestemps.

\newpage 

\section{Implementation Details}

\begin{algorithm}[bt]
	\caption{Open-loop Episodic Monte Carlo Posterior Policy Iteration}
	\label{alg:MCPIP}
	{\textbf{Requires}:\\
	Markov decision process \texttt{MDP},
	initial policy $\pi_1$,
	posterior strategy \texttt{GibbsPosterior},
	initial state $\vs_0$.
	}
	\begin{algorithmic}[1]
		\For{$i \gets 1$ to $N$}
		\State Sample action sequences and associated parameters $\vtheta$, $\mA^{(n)},\,\vtheta^{(n)} \sim \pi_i(\cdot|\vs_0)$
		\State Obtain returns $R^{(n)} \sim \text{\texttt{MDP}}(\mA^{(n)})$
		\State Compute importance weights, $\vw{\,\leftarrow\,} \texttt{GibbsPosterior}(\mR)$ to estimate 
		$q_\alpha(\mA|\gO)$
		\State Update policy $\pi_{i+1} \leftarrow \texttt{MProjection}(\vw, \vtheta)$, 
		performing
		$\min_\pi \KL[q_\alpha\mid\mid\pi]$
		\EndFor
	\end{algorithmic}
	\label{alg:ppi}
\end{algorithm}

Algorithm \ref{alg:ppi} describes the general routine of Monte Carlo \ppi.
The specific \texttt{GibbsPosterior} update must be chosen, e.g. \reps, \pitwo, \mppi, \lbps or \essps.
\cem corresponds to uniform importance weights applied to the elite samples in \texttt{GibbsPosterior}.
By definition, \mppi has a fixed covariance.
For \mpc, \cem methods reset their covariance each timestep.  
For actuator limits, we treat them as properties of the dynamics, and later fit the posterior using the \emph{clipped} actions.
This recognizes that the applied policy is not necessarily the same as the sampled one and is observed along with the reward.
For feature regression weights, clipping is applied but cannot be incorporated into the fitting, as the model weights are being fit, which may explain its worse performance for \mpc.    

As done in \reps and \more, the temperature optimization uses off-the-shelf minimizers, such as those found in \texttt{scipy.opt.minimize} \citep{2020SciPy-NMeth}. 
While \reps and \more use gradient-based solvers such as \textsc{lbfgs-b}, for \lbps and \essps, the \texttt{brent} method in \texttt{scipy.opt.minimize\_scalar} was found to be slighly faster, presumably due to its quasi-convex objective (see Section \ref{sec:lower_bound}). 

For the matrix normal distribution, a weighted maximum likelihood fit and sampling is very similar to the normal distribution. 
Sampling a matrix normal $\gM\gN(\mM,\mK,\mSigma)$ is achieved using
$$
\mX^{(n)} = \mM + \mA\mW^{(n)}\mB,
\quad
\mK = \mA\mA\tran,
\quad
\mSigma = \mB\tran\mB,
\quad
W_{ij}^{(n)}\sim\gN(0, 1).
$$
The weighted maximum likelihood fit of the input covariance is computed using \citep{dutilleul1999mle} 
$$\mK = \textstyle\sum_n w_n(\mX^{(n)}-\mM)(\mX^{(n)}-\mM)\tran\mSigma\inv,
\quad
\text{where }
\sum_n w_n = 1.$$

For the feature approximation of the squared exponential kernel, we used \rbf features and quadrature \rff{}s (\qrff).
\rbf features require careful normalization in order to approximate the \se kernel at the limiting case~\citep{gpml}. 
A $d$-dimensional \rbf feature for a \se kernel approximation with lengthscale $l$ is defined as
\begin{align*}
	\bm{\phi}(t) = \frac{1}{\sqrt{\sqrt{\pi}\,d \lambda}}\exp\left(-\frac{(t-\vc)^2}{2\lambda^2}\right),
\end{align*}
where $\lambda = l/\sqrt{2}$.
The $d$-dimensional centers $\vc$ are linearly placed along the task horizon.

Quadrature random Fourier features require $d = (2\nu)^k$ features for order $\nu$ and input dimension $k$.
Therefore, for time-series we require $2\nu$ features where
\begin{align*}
	\phi_j(t) = \begin{cases}
		w_j\cos(\omega_j t) \quad j \leq \nu\\
		w_{j-\nu}\sin(\omega_{j-\nu} t) \quad \nu < j \leq 2\nu
	\end{cases}
\end{align*}
Gauss-Hermite quadrature provides points $\vu$ and weights $\vv$ for a given order, to be used form approximating integrals.
In \qrff{}s, the Monte Carlo integration with Gaussian frequencies \citep{rahimi2007random} is replaced with quadrature.
As a results, $w_i = 2v_j/\sqrt{\pi}$
Incorporating the lengthscale, $\omega_i = \sqrt{2}u_i/l$.
See Mutn\'y et al. for a more in-depth description of \qrff{}s, including for higher-dimensional inputs~\citep{mutny2018efficient}.

A practical issue in stochastic search methods is maintaining a sufficiently exploratory search distribution.
In accordance with Bayesian methods, \ppi methods maintain a belief and do not keep a fixed covariance like \mppi.
However, for some tasks it was found that the search distribution had insufficient variance to solve the task effecively, therefore, we introduced an adjustment to 
\Eqref{eq:cov_update} to `anneal' the covariance, recognizing the the update subtracts a likelihood-based term from the prior.
Using
\begin{align*}
\mSigma_{t_3:t_4,t_3:t_4|\gO} &= \mSigma_{t_3:t_4,t_3:t_4} - \gamma\mSigma_{t_3:t_4,t_1:t_2}
	\mLambda_{t_1:t_2}
	\mSigma_{t_3:t_4,t_1:t_2}\tran, 
	\quad
	0 \leq \gamma \leq 1,
\end{align*}
we perform standard \ppi for $\gamma{\,=\,}1$, but adopt an \mppi-like approach for $\gamma{\,=\,}0$.
For \texttt{HumanoidStandup-v2}, $\gamma{\,=\,}0.5$ was required to transition from standing to stabilization effectively. 
Appendix \ref{sec:mpc_ablation} provides ablation studies of this aspect. 

Investigating the runtime of these methods, comparing \icem, \mppi and \lbps in Table \ref{tab:time}, \ppi methods are slower than \cem/\icem, increasing with sample size. 
Rather than due to the $\alpha$ optimization or kernel-based prior construction, the bottleneck is using all the samples in the matrix normal \textsc{mle} step for the covariance, computed using \texttt{einsum} operations on the sampled parameters. 
Based on this study, one approximation to speed up \ppi methods is use the \ess to reduce the number of samples used in the matrix normal \textsc{mle} step, pruning samples that have negligible weight, like in \cem.
\begin{table}[!tb]
	\centering
		\begin{tabular}{llll}
			Algorithm & \multicolumn{3}{c}{\texttt{n\_samples}}\\\
			& 16 & 128 & 1024
			\\\toprule
			\mppi (white noise) & 0.06 & 0.37 & 2.63 \\
			\icem (coloured noise) & 0.07 & 0.18 & 1.08 \\
			\lbps (\se kernel) & 0.06 & 0.36 & 2.74 \\
			\mppi (\se kernel) & 0.06 & 0.36 & 2.74 \\
			\lbps (\rbf features) & 0.06 & 0.37 & 2.79 \\
			\bottomrule     
		\end{tabular}
		\vspace{0.5em}
		\caption{
			Wall-clock time (s) of one \mpc calculation for \hsu.
			Averaged over 10 timesteps and 5 seeds. 
			Computation was performed on a AMD Ryzen 9 3900X 12-Core Processor @ 3.8GHz, parallelized across 24 processes.
		}
		\label{tab:time}
	\end{table}

\newpage

\section{Extended Experimental Results}
\vspace{-0.5em}
\subsection{Black-box Optimization}
\label{sec:bbo_res}
\begin{figure}[!h]
	\begin{minipage}{\textwidth}
	\centering
	\vspace{-1em}
	\begin{tikzpicture}
	
	\begin{axis}[
		height=2cm,
		hide axis,
		xmin=10, xmax=50,
		ymin=0, ymax=1.0,
		legend cell align={center},
		legend columns=3,
		legend style={/tikz/every even column/.append style={column sep=0.3cm}, draw=none},
		]
		]
		
		\addlegendimage{semithick, darkviolet1910191, mark=-}
		\addlegendentry{$k=3$};
		\addlegendimage{semithick, darkturquoise0191191, mark=-}
		\addlegendentry{$k=8$};
		\addlegendimage{semithick, blue, mark=-}
		\addlegendentry{$k=16$};
	\end{axis}
	
\end{tikzpicture}
	\vspace{-1em}
	\end{minipage}
	\includegraphics{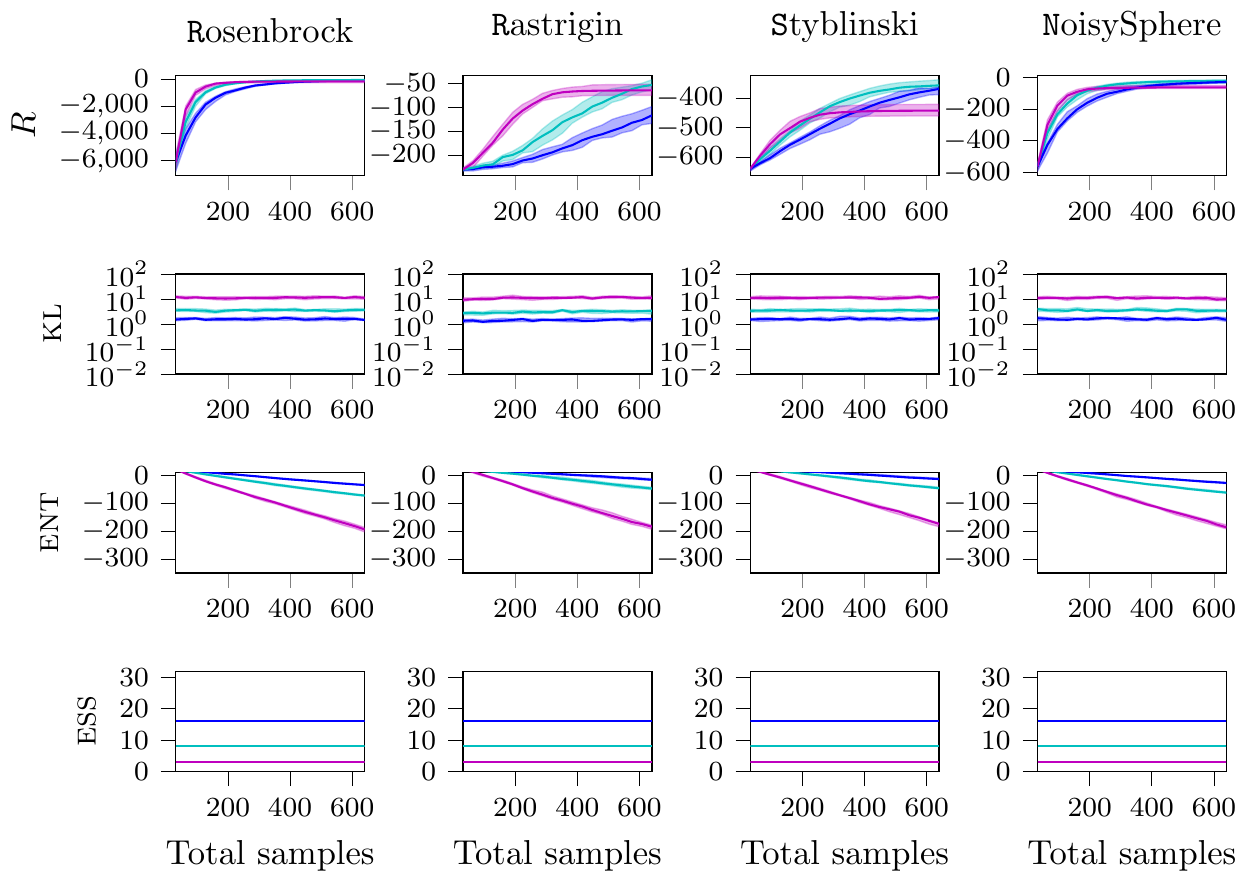}
	\caption{Black-box optimization with \cem and Monte Carlo sampling, with 32 samples over 20 episodes, displaying quartiles over 25 seeds.
	$k$ is the number of `elite' samples.
	}
	\label{fig:bbo1}
	\vspace{-1em}
\end{figure}

\begin{figure}[!h]
	\begin{minipage}{\textwidth}
	\centering
	\vspace{-1em}
	\include{fig/opt/essps_legend}
	\vspace{-1em}
	\end{minipage}
	\includegraphics{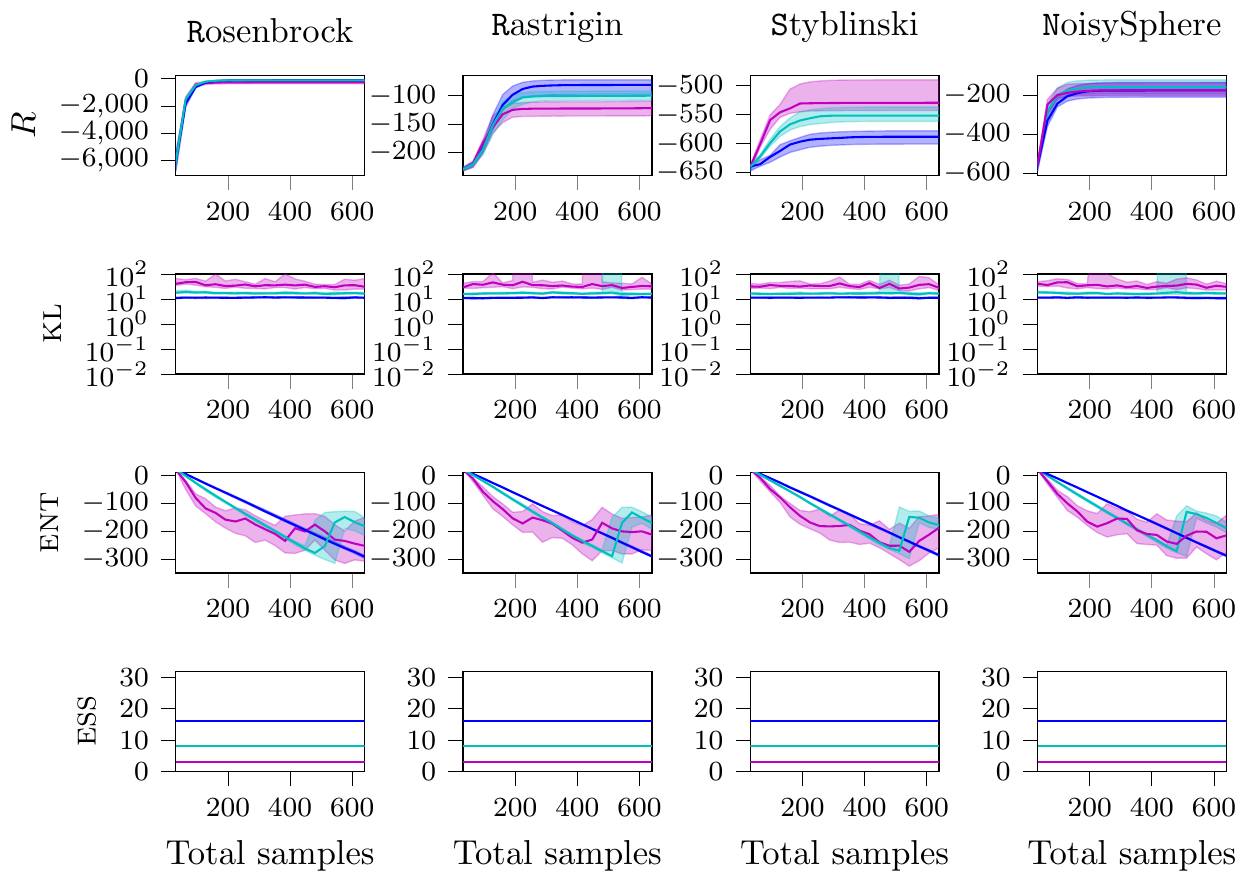}
	\caption{Black-box optimization with \essps and Monte Carlo sampling, with 32 samples over 20 episodes, displaying quartiles over 25 seeds. 
	$N^*$ is the desired effective sample size.
	}
	\label{fig:bbo_essps}
\end{figure}

\newpage

\begin{figure}[!h]
	\begin{minipage}{\textwidth}
	\centering
	\vspace{-1em}
	\begin{tikzpicture}
	
	\begin{axis}[
		height=2cm,
		hide axis,
		xmin=10, xmax=50,
		ymin=0, ymax=1.0,
		legend cell align={center},
		legend columns=3,
		legend style={/tikz/every even column/.append style={column sep=0.3cm}, draw=none},
		]
		]
		
		\addlegendimage{semithick, blue, mark=-}
		\addlegendentry{$\epsilon=0.1$};
		\addlegendimage{semithick, darkviolet1910191, mark=-}
		\addlegendentry{$\epsilon=1$};
		\addlegendimage{semithick, darkturquoise0191191, mark=-}
		\addlegendentry{$\epsilon=10$};
	\end{axis}
	
\end{tikzpicture}
	\vspace{-1em}
	\end{minipage}
	\includegraphics{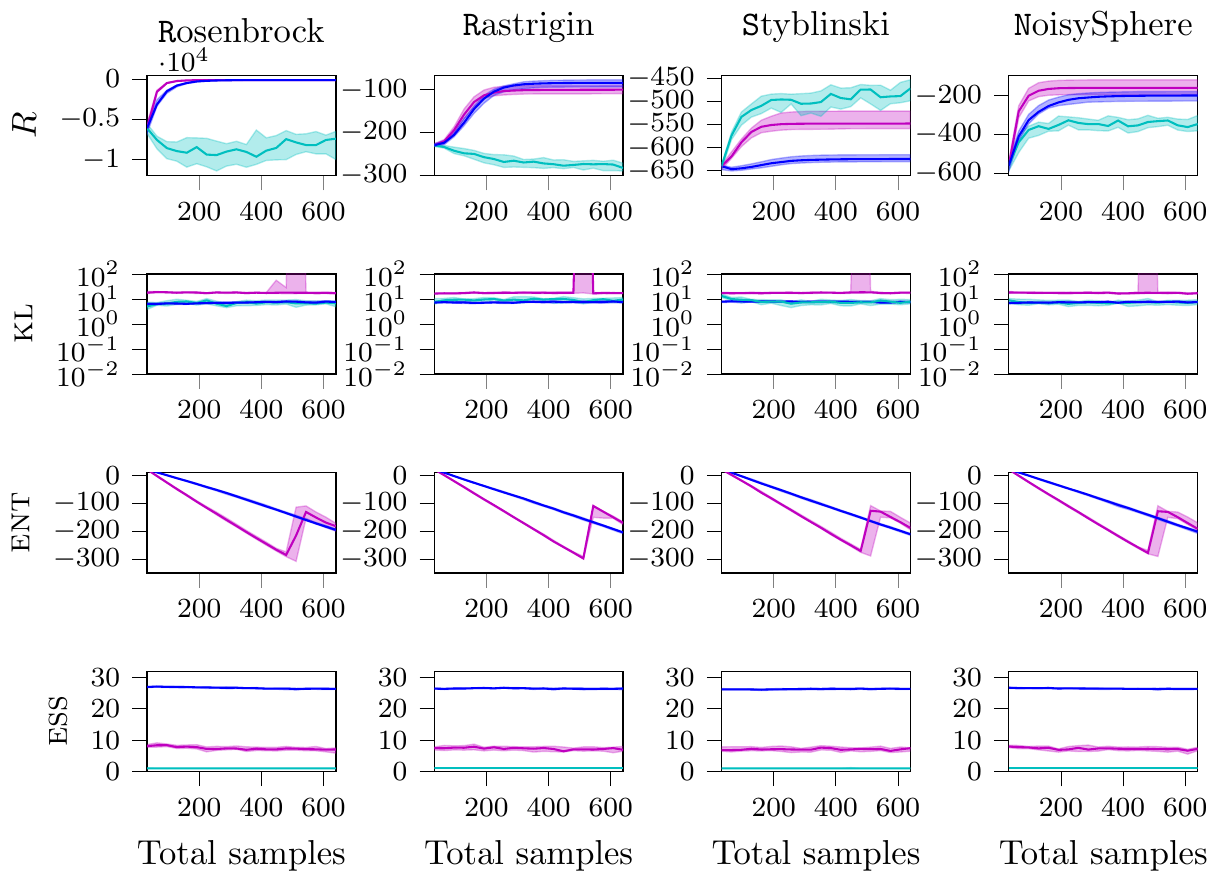}
	\caption{Black-box optimization with \reps and Monte Carlo sampling, with 32 samples over 20 episodes, displaying quartiles over 25 seeds.
	$\epsilon$ is the \kl bound.
	}
\end{figure}

\begin{figure}[!h]
	\begin{minipage}{\textwidth}
	\centering
	\vspace{-1em}
	\begin{tikzpicture}
	
	\begin{axis}[
		height=2cm,
		hide axis,
		xmin=10, xmax=50,
		ymin=0, ymax=1.0,
		legend cell align={center},
		legend columns=3,
		legend style={/tikz/every even column/.append style={column sep=0.3cm}, draw=none},
		]
		]
		
		\addlegendimage{semithick, blue, mark=-}
		\addlegendentry{$\delta=0.1$};
		\addlegendimage{semithick, darkviolet1910191, mark=-}
		\addlegendentry{$\delta=0.5$};
		\addlegendimage{semithick, darkturquoise0191191, mark=-}
		\addlegendentry{$\delta=0.9$};
	\end{axis}
	
\end{tikzpicture}
	\vspace{-1em}
	\end{minipage}
	\includegraphics{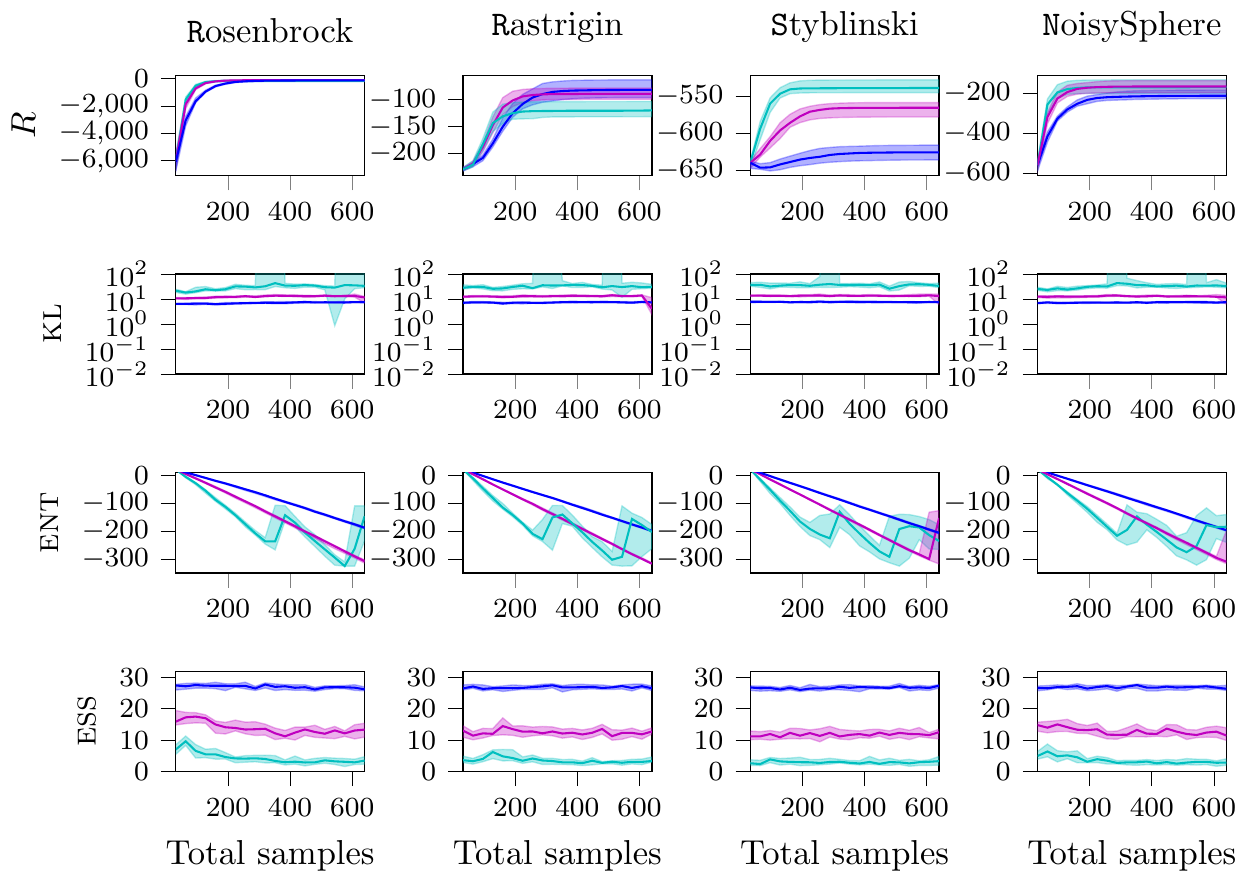}
	\caption{Black-box optimization with \lbps and Monte Carlo sampling, with 32 samples over 20 episodes, displaying quartiles over 25 seeds.
	$\delta$ is the probability of the lower bound.
	}
	\label{fig:bbo4}
\end{figure}

\newpage

\subsection{Policy Search}
\label{sec:ps_res}

\begin{figure}[!h]
	\begin{minipage}{\textwidth}
		\centering
		\begin{tikzpicture}
	
	\begin{axis}[
		height=2cm,
		hide axis,
		xmin=10, xmax=50,
		ymin=0, ymax=1.0,
		legend cell align={center},
		legend columns=5,
		legend style={/tikz/every even column/.append style={column sep=0.3cm}, draw=none},
		]
		]
		
		\addlegendimage{semithick, red, mark=*, mark size = 1}
		\addlegendentry{\reps};
		\addlegendimage{semithick, darkturquoise0191191, mark=*, mark size = 1}
		\addlegendentry{\lbps};
		\addlegendimage{semithick, blue, mark=*, mark size = 1}
		\addlegendentry{\essps};
	\end{axis}
	
\end{tikzpicture}
	\end{minipage}
	\includegraphics{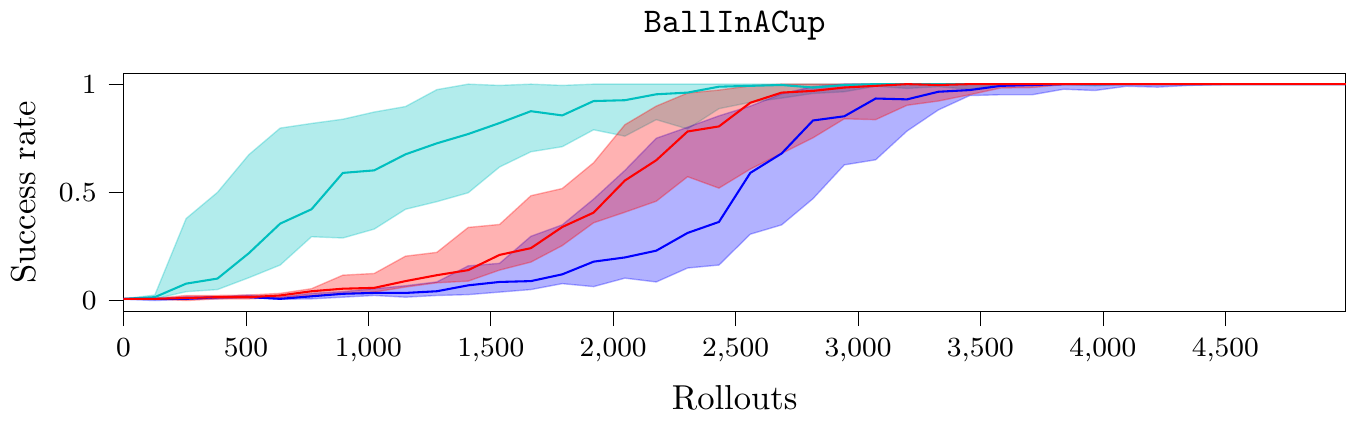}
	\caption{Success rate for policy search with \rbf features over 20 seeds, using 128 rollout samples per episode. Displaying uncertainty in quartiles}
	\label{fig:ps_rbf}
\end{figure}

\begin{figure}[!h]
	\begin{minipage}{\textwidth}
		\centering
		\begin{tikzpicture}
	
	\begin{axis}[
		height=2cm,
		hide axis,
		xmin=10, xmax=50,
		ymin=0, ymax=1.0,
		legend cell align={center},
		legend columns=5,
		legend style={/tikz/every even column/.append style={column sep=0.3cm}, draw=none},
		]
		]
		
		\addlegendimage{semithick, red, mark=*, mark size = 1}
		\addlegendentry{\reps};
		\addlegendimage{semithick, darkturquoise0191191, mark=*, mark size = 1}
		\addlegendentry{\lbps};
		\addlegendimage{semithick, blue, mark=*, mark size = 1}
		\addlegendentry{\essps};
	\end{axis}
	
\end{tikzpicture}
	\end{minipage}
	\includegraphics{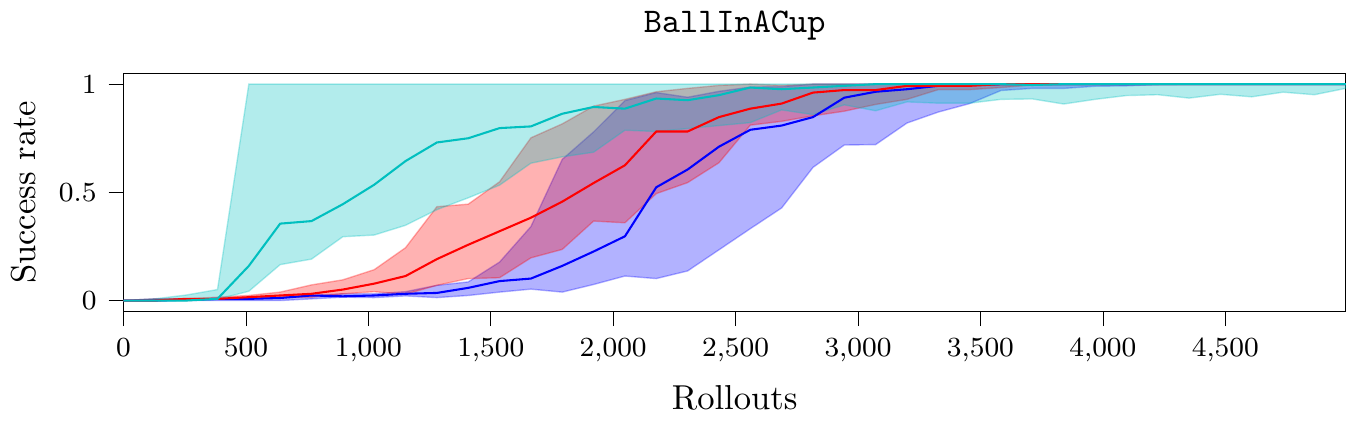}
	\caption{Success rate for policy search with \rff features over 20 seeds, using 128 rollout samples per episode. Displaying uncertainty in quartiles}
	\label{fig:ps_rff}
\end{figure}

\newpage 

\subsection{Model Predictive Control}

\begin{figure}[!h]
	\begin{minipage}{\textwidth}
	\centering
	\begin{tikzpicture}
	
	\begin{axis}[
		hide axis,
		height=2cm,
		xmin=10, xmax=50,
		ymin=0, ymax=1.0,
		legend cell align={center},
		legend columns=5,
		legend style={/tikz/every even column/.append style={column sep=0.3cm}, draw=none},
		]
		]
		
		\addlegendimage{semithick, red, mark=*, mark size = 1}
		\addlegendentry{\cem};
		\addlegendimage{semithick, darkturquoise0191191, mark=*, mark size = 1}
		\addlegendentry{\lbps};
		\addlegendimage{semithick, blue, mark=*, mark size = 1}
		\addlegendentry{\essps};
		\addlegendimage{semithick, goldenrod1911910, mark=*, mark size = 1}
		\addlegendentry{\mppi};
		\addlegendimage{semithick, green01270, mark=*, mark size = 1}
		\addlegendentry{\pitwo};
	\end{axis}
	
\end{tikzpicture}
	\vspace{-1em}
	\end{minipage}
	\includegraphics{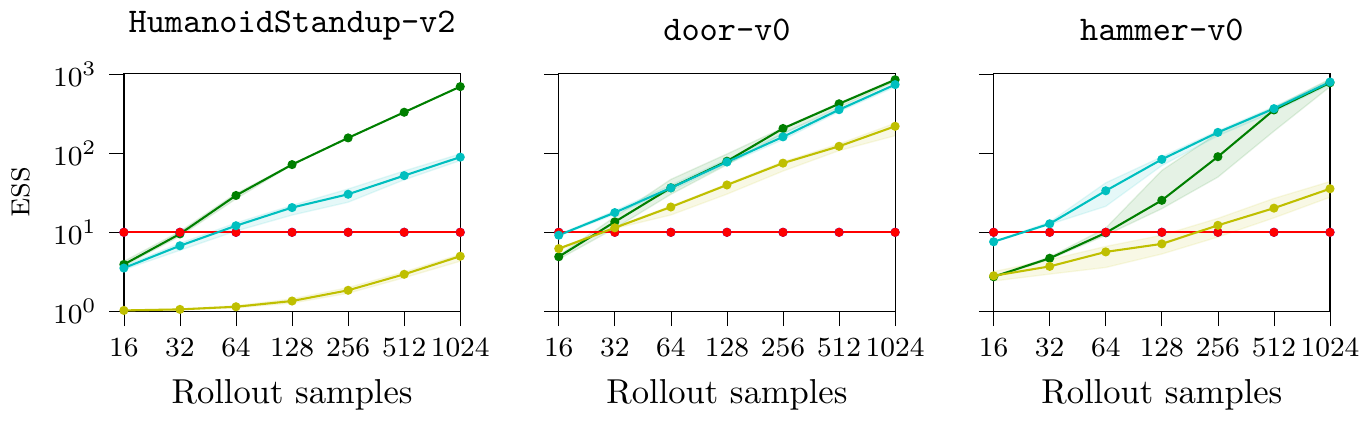}
	\caption{Average \ess for Monte Carlo \mpc with white noise priors. Displaying quartiles over 50 seeds.}
	\label{fig:ess_wn}
\end{figure}

\begin{figure}[!h]
	\begin{minipage}{\textwidth}
	\centering
	\begin{tikzpicture}
	
	\begin{axis}[
		height=2cm,
		hide axis,
		xmin=10, xmax=50,
		ymin=0, ymax=1.0,
		legend cell align={center},
		legend columns=3,
		legend style={/tikz/every even column/.append style={column sep=0.3cm}, draw=none},
		]
		]
		
		\addlegendimage{semithick, darkviolet1910191, mark=*, mark size = 1}
		\addlegendentry{\icem (coloured noise)};
		\addlegendimage{semithick, darkturquoise0191191, mark=*, mark size = 1}
		\addlegendentry{\lbps (\se kernel)};
		\addlegendimage{semithick, blue, mark=*, mark size = 1}
		\addlegendentry{\essps (\se kernel)};
		\addlegendimage{semithick, goldenrod1911910, mark=*, dashed, mark size = 1}
		\addlegendentry{\mppi (smooth actions)};
		\addlegendimage{semithick, goldenrod1911910, mark=*, dash pattern=on 1pt off 3pt on 3pt off 3pt, mark size = 1}
		\addlegendentry{\mppi (smooth noise)};
		\addlegendimage{semithick, goldenrod1911910, mark=*, mark size = 1}
		\addlegendentry{\mppi (\se kernel)};
	\end{axis}
	
\end{tikzpicture}
	\vspace{-1em}
	\end{minipage}
	\includegraphics{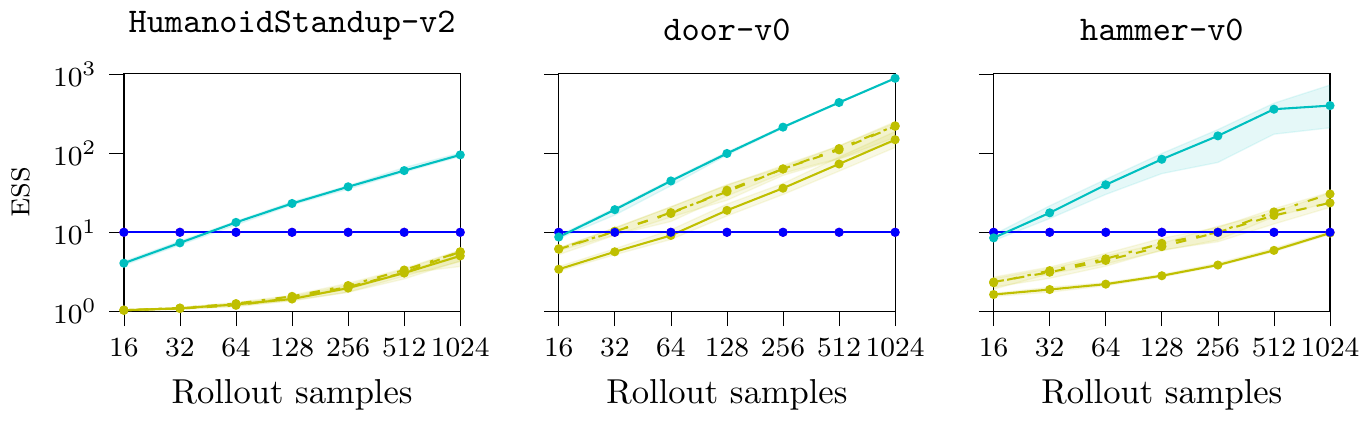}
	\caption{Average \ess for Monte Carlo \mpc with smooth noise priors. Displaying quartiles over 50 seeds.
	i\cem and \essps both have an \ess of 10, due to \icem having 10 elites.}
	\label{fig:ess_smooth}
\end{figure}

\newpage
\subsubsection{\mpc with finite feature approximations}

\begin{figure}[!h]
	\begin{minipage}{\textwidth}
		\centering
		\begin{tikzpicture}
	
	\begin{axis}[
		height=2cm,
		hide axis,
		xmin=10, xmax=50,
		ymin=0, ymax=1.0,
		legend cell align={center},
		legend columns=5,
		legend style={/tikz/every even column/.append style={column sep=0.3cm}, draw=none},
		]
		]
		\addlegendimage{semithick, darkturquoise0191191, mark=*, mark size = 1}
		\addlegendentry{\lbps};
		\addlegendimage{semithick, blue, mark=*, mark size = 1}
		\addlegendentry{\essps};
		\addlegendimage{semithick, black, dashed}
		\addlegendentry{\rff};
		\addlegendimage{semithick, black, dotted}
		\addlegendentry{\rbf};

	\end{axis}
	
\end{tikzpicture}
		\vspace{-1em}
	\end{minipage}
	\includegraphics{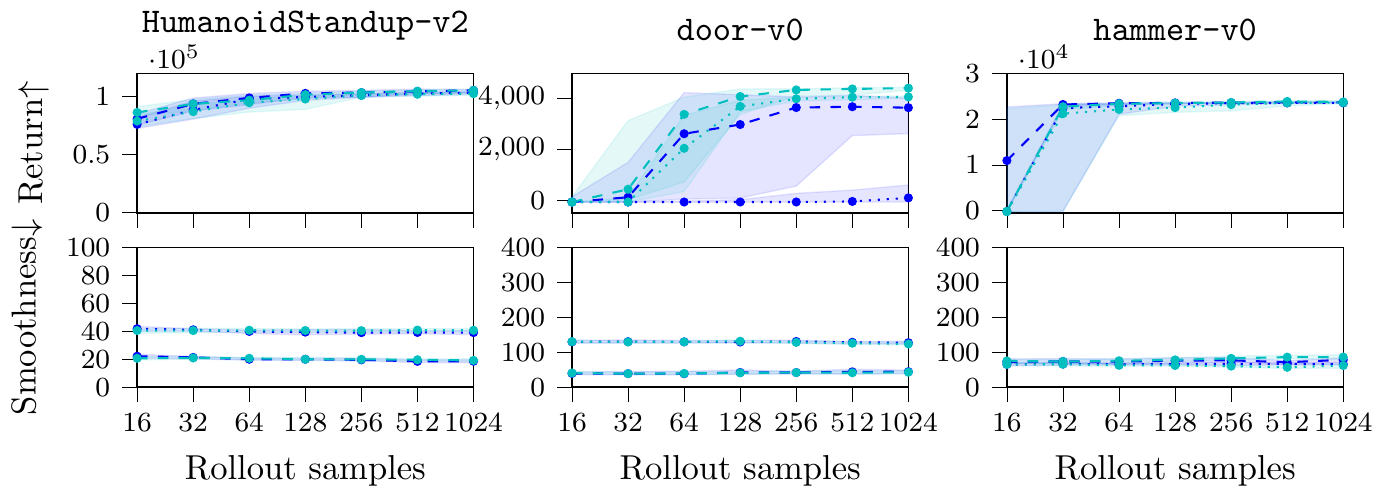}
	\caption{Monte Carlo \mpc with with \rbf and \rff policies. Displaying quartiles over 50 seeds.}
	\label{fig:mpc_ret_feat}
\end{figure}

\begin{figure}[!h]
	\includegraphics{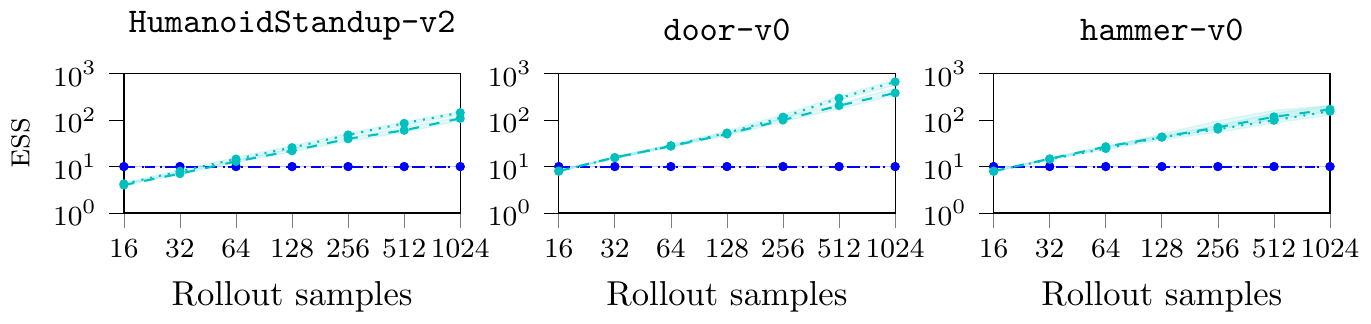}
	\caption{Average \ess for Monte Carlo \mpc with \rbf and \rff policies. Displaying quartiles over 50 seeds.}
	\label{fig:ess_feat}
\end{figure}

\begin{figure}[!h]
	\includegraphics{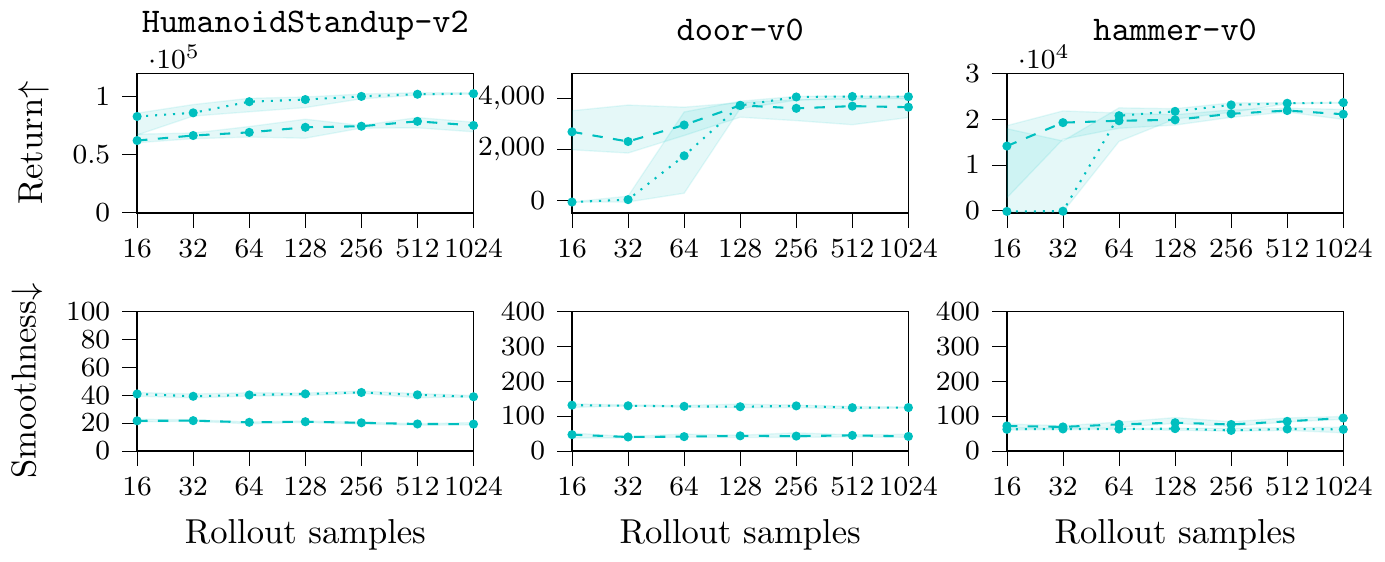}
	\caption{Monte Carlo \mpc with \rbf and \rff policies, using a (shrinking) planning horizon of $250$ rather than $30$, the full task duration. Displaying quartiles over 20 seeds.}
	\label{fig:ret_feat_long}
\end{figure}

\newpage

\subsubsection{Visualizing actuation profiles}
\label{sec:profiles}

To accompany the smoothness metric used in the \mpc evaluation,
we showed how the \se kernel with \ppi produces smoother and lower amplitude policies than alternative priors.
Due to the high-dimensional action spaces, we depict all actions overlapped as they are normalized.  

\begin{figure}[!h]
	\includegraphics{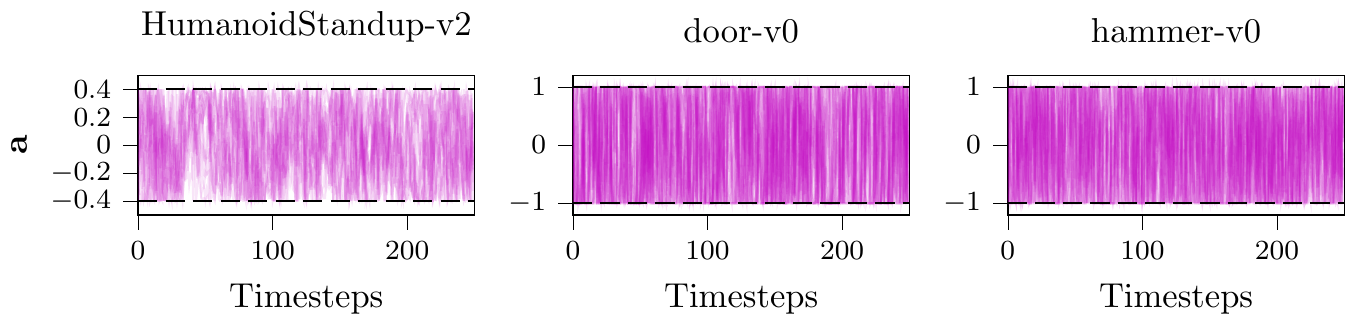}
	\caption{\icem \mpc, with coloured noise, action sequence using 16 rollouts.}
\end{figure}

\begin{figure}[!h]
	\includegraphics{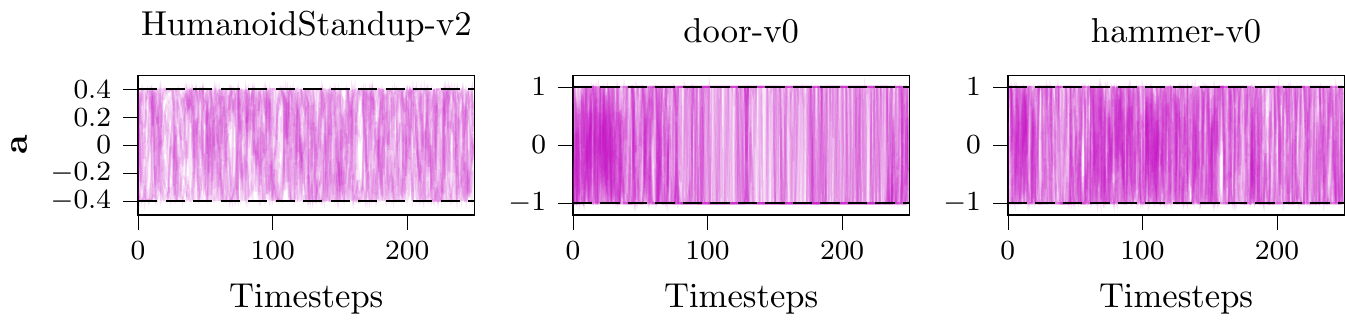}
	\caption{\icem \mpc, with coloured noise, action sequence for using 1024 rollouts.}
\end{figure}

\begin{figure}[!h]
	\includegraphics{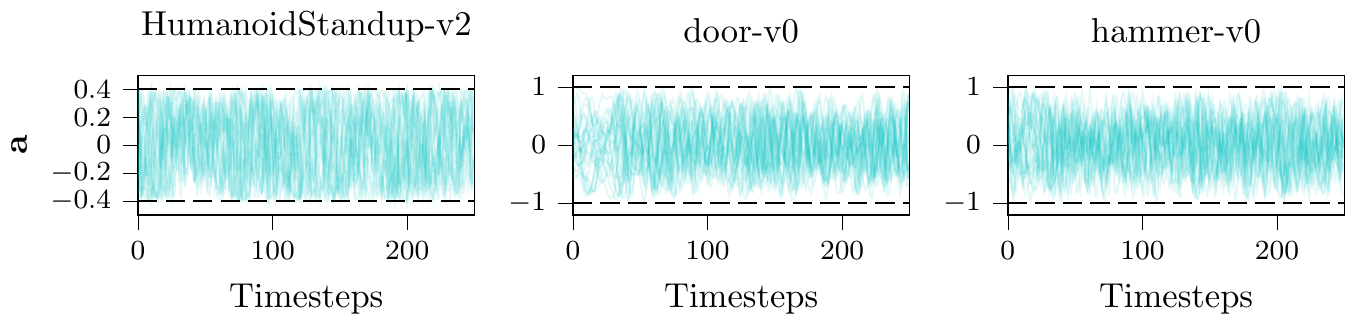}
	\caption{\lbps \mpc, using the \se kernel, action sequence using 16 rollouts.}
\end{figure}

\begin{figure}[!h]
	\includegraphics{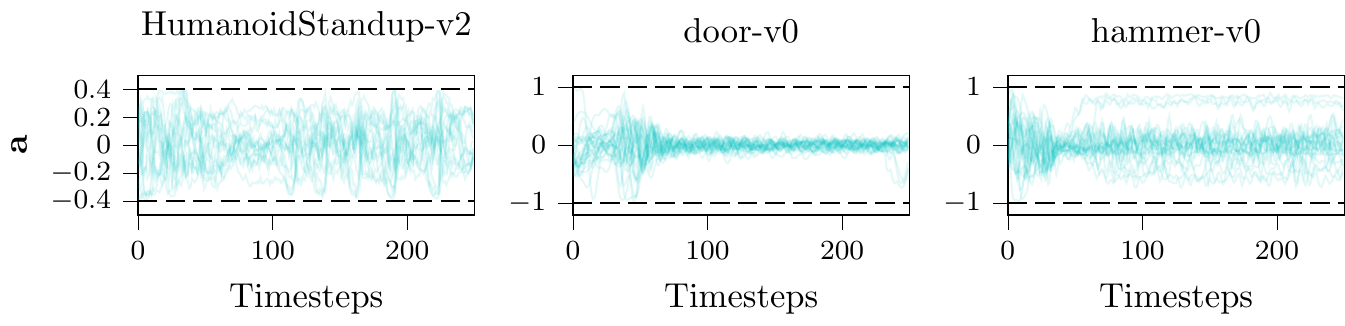}
	\caption{\lbps \mpc, using the \se kernel, action sequence using 1024 rollouts.}
\end{figure}

\begin{figure}[!h]
	\includegraphics{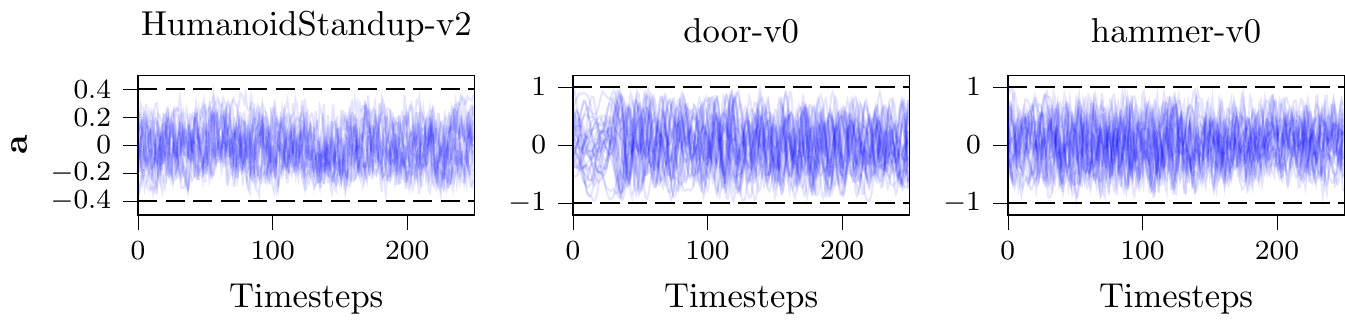}
	\caption{\essps \mpc, using the \se kernel, action sequence using 16 rollouts.}
\end{figure}

\begin{figure}[!h]
	\includegraphics{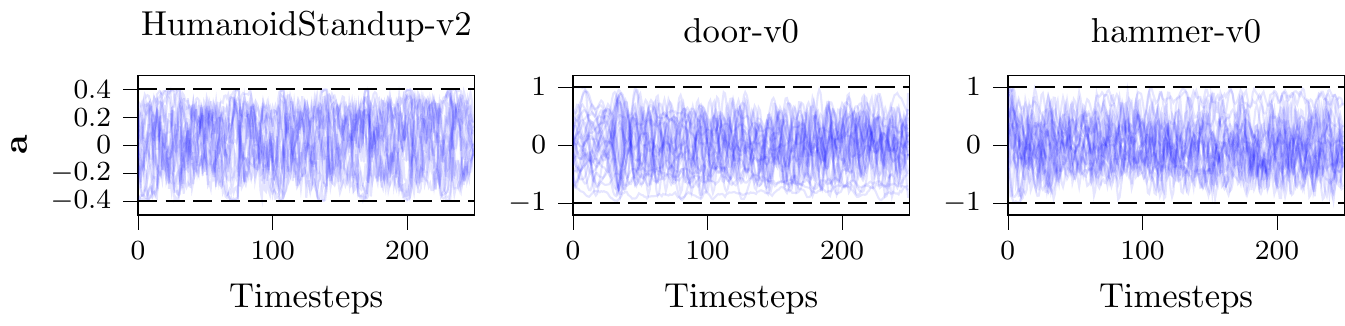}
	\caption{\essps \mpc, using the \se kernel, action sequence using 1024 rollouts.}
\end{figure}

\begin{figure}[!h]
	\includegraphics{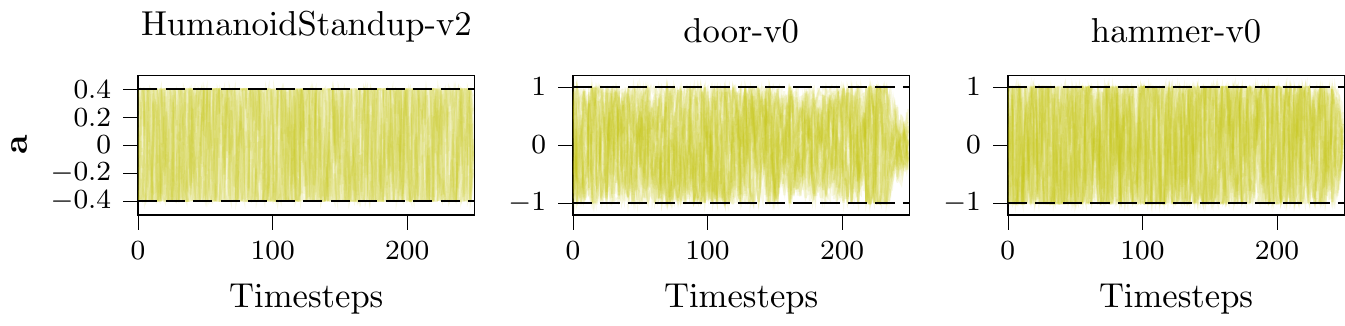}
	\caption{\mppi \mpc, using the \se kernel, action sequence using 16 rollouts.}
\end{figure}

\begin{figure}[!h]
	\includegraphics{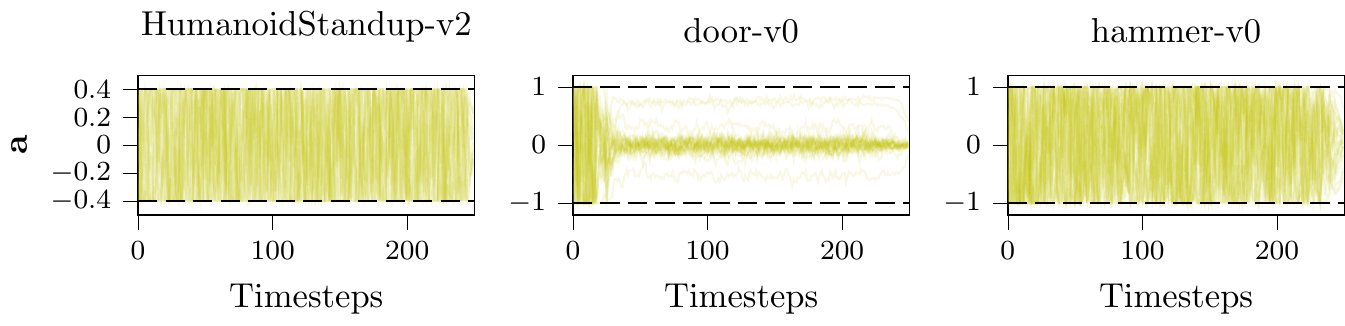}
	\caption{\mppi \mpc, using the \se kernel, action sequence using 1024 rollouts.}
\end{figure}

\clearpage\newpage

\begin{figure}[!h]
	\includegraphics{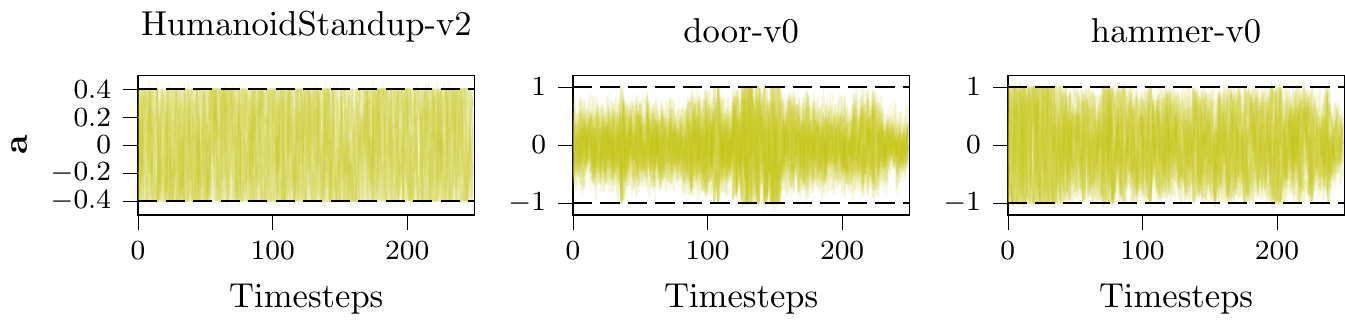}
	\caption{\mppi \mpc, using smooth action noise, action sequence using 16 rollouts.}
\end{figure}

\begin{figure}[!h]
	\includegraphics{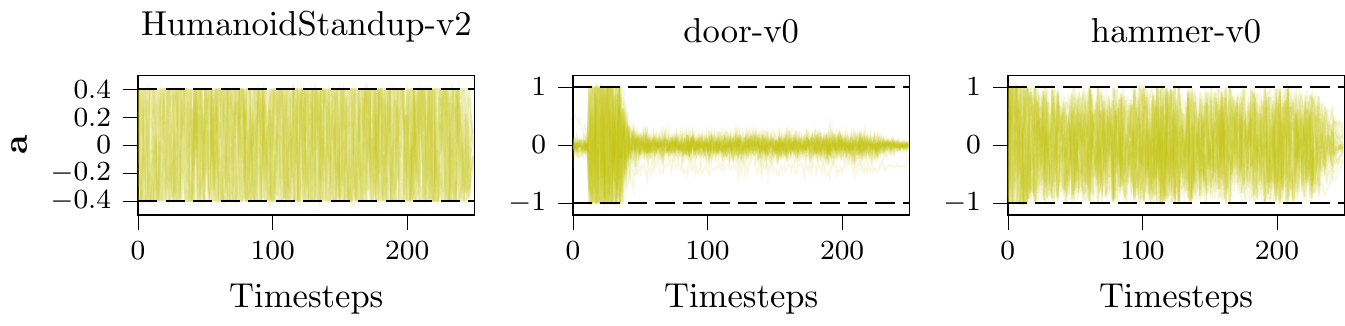}
	\caption{\mppi \mpc, using smooth action noise, action sequence using 1024 rollouts.}
\end{figure}

\begin{figure}[!h]
	\includegraphics{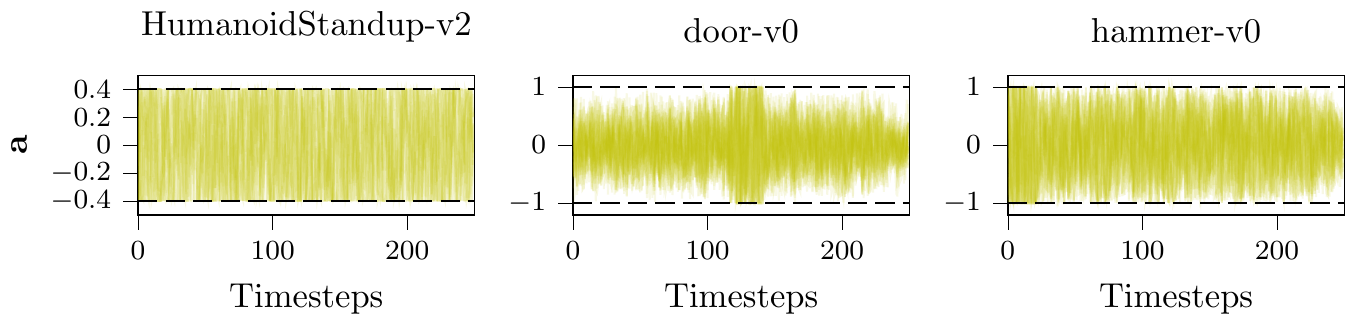}
	\caption{\mppi \mpc, using smooth exploration noise, action sequence using 16 rollouts.}
\end{figure}

\begin{figure}[!h]
	\includegraphics{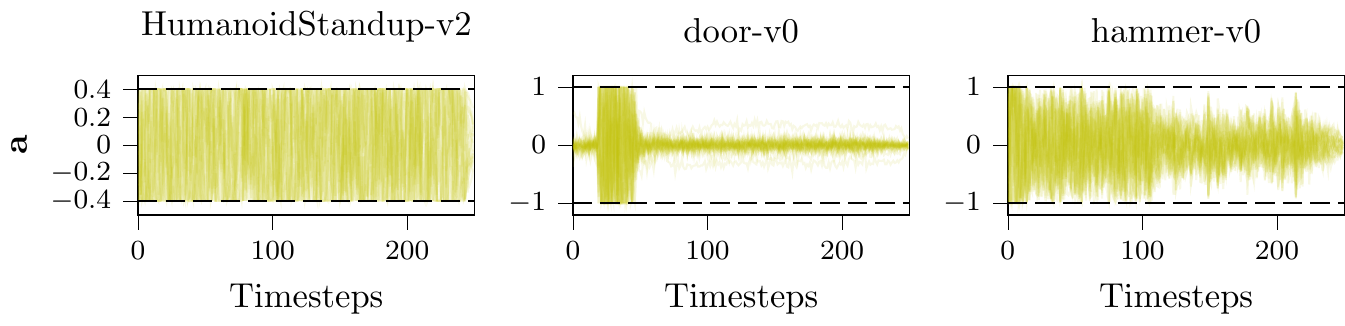}
	\caption{\mppi \mpc, using smooth exploration noise, action sequence using 1024 rollouts.}
\end{figure}

\clearpage\newpage 

\subsubsection{Learning priors from data}
\label{sec:priors_from_data}

A benefit of the Gaussian process view of policy priors is the ability to perform model selection from data, rather the hyperparameter tuning.
This data could be expert demonstrations or a partial solution, such as the `warm start' in \mpc. 

We investigate data driven priors by taking expert demonstrations from human and \rl agents, analyzing them using the matrix normal distribution.
The covariances of the matrix normal allow us to assess the stationarity of the temporal correlations, as well as the correlations between actions.
For example, in Figures \ref{fig:mavn_door}--\ref{fig:mavn_hammer} we can see non-stationarity of \texttt{door-v0} and \texttt{hammer-v0}, due to the motions before and after completing the task, whereas \texttt{HumanoidStandUp-v2} appears surprisingly stationary due to the task having `stand up' and `stabilize' components. 
However, we found the smoothness that benefited \mpc agents in Section \ref{sec:mpc} was not present in the demonstrations. 
For \texttt{door-v0} and \texttt{hammer-v0}, the action space is desired joint positions and the demonstrations were collected using motion capture technology \citep{Rajeswaran-RSS-18}.
With this in mind, the non-smoothness may be the result of motion capture artefacts or the inverse kinematics used.
While the video results of the demonstrations do not suggest rough motion, the action sequences in the dataset do appear rough (Figures \ref{fig:mavn_door}--\ref{fig:mavn_hammer}). 
Moroever, there may be unknown components of the control stack that smooth out the desired joint setpoints. 
However, the issue may lie in the matrix normal factorization, which assumes each action share the same temporal correlations. 
This coupling may result in missing smooth sequences if many dimensions have a rough actions. 

For \texttt{HumanoidStandUp-v2}, we train and use demonstrations from a \texttt{gac} (generative actor critic) agent \citep{peng2021generative}.
To our knowledge, this is the only model-free deep \rl algorithm that solves \texttt{HumanoidStandUp-v2}.
However, the policy learned by \textsc{gac} is a bang-bang controller that operates at the action limits. 
As a result, the smoothness measure introduced in Section \ref{sec:mpc} `breaks', as the norm of this action sequence is \emph{constant}, suggesting a smoothness of 0.  
Looking per action independently, the smoothness metric varies around $100$ to $300$. 
This result suggests that the IID noise used in deep \rl exploration may influence its optimal policy estimate towards rough behaviors such as bang-bang control, which limits their usefulness as expert demonstrators.  

The second quantity of interest is the action covariance $\mSigma$. 
The demonstrations of \texttt{door-v0} and \texttt{hammer-v0} suggest that the independence assumption of $\mSigma$ is a reasonable one. 
The action covariance of \texttt{HumanoidStandUp-v2} depicts much stronger correlations between actions, which could be used to improve exploration through coordination.

\begin{table}[!tb]
	\centering
	\label{tab:data_driven}
		\begin{tabular}{llll}
			Setting & Return & Smoothness & Lengthscale
			\\\toprule
			\texttt{door-v0} (expert) & [3301, 4001, 4028]
			 & [139 145 150] & $\num{1e-5}$\\
			\texttt{door-v0} (\lbps  \mpc) & [4326, 4370, 4388] & [50, 55, 58] & 0.05\\
			\texttt{hammer-v0} (expert) & [14042, 17054, 21272]
			& [142, 147, 151] & $\num{1e-5}$\\
			\texttt{hammer-v0} (\lbps  \mpc) & [23810, 23828, 23904] & [51, 53, 62] & 0.025\\
			\texttt{HumanoidStandup-v2} (\textsc{gac}) & [92429, 92972, 93300] & --$^\dagger$ & $\num{1e-5}$ \\
			\texttt{HumanoidStandup-v2} (\lbps \mpc) & [99902 100147 101337] & [20 22 23] & 0.05 \\
			\bottomrule     
		\end{tabular}
	\vspace{0.5em}
	\caption{
	Comparing expert demonstrations to \ppi \mpc performance w.r.t. return, smoothness and lengthscale. \lbps \mpc performance is reported for 1024 rollouts.  
	Uncertainty shows quartiles over dataset demonstrations or experiment seeds.
	See the discussion text for the omitted result $^\dagger$.
	}
\end{table}

\clearpage\newpage

\begin{figure}[!h]
	\centering
	\includegraphics{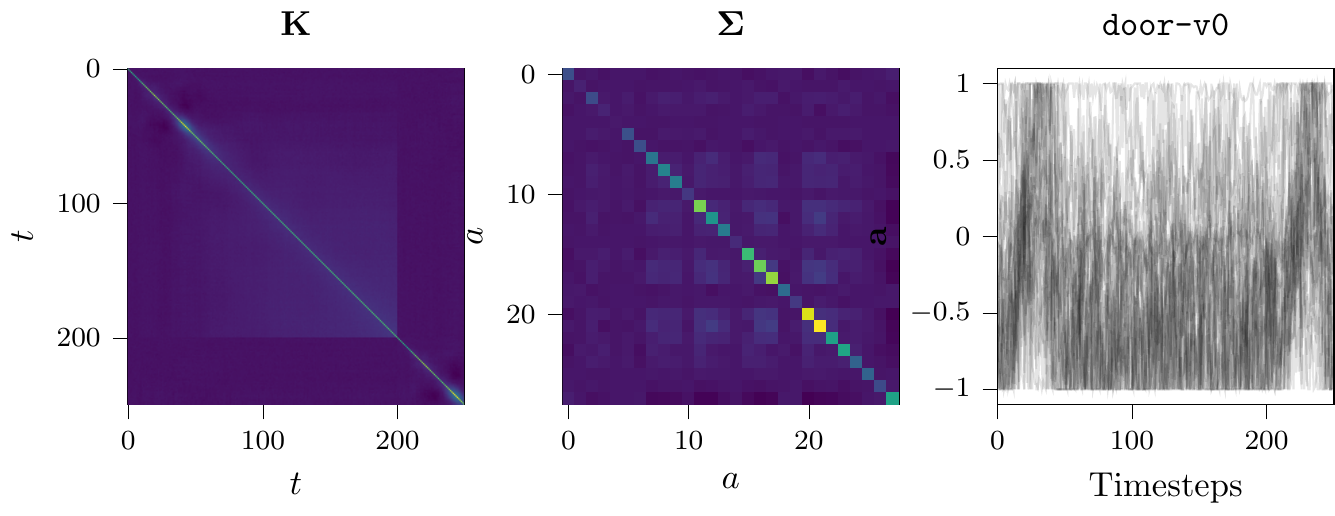}
	\caption{\texttt{door-v0} expert demonstrations, showing matrix normal covariance fit and action sequence. The \texttt{viridis} colourmap is used to express correlations.}
	\vspace{-2em}
	\label{fig:mavn_door}
\end{figure}

\begin{figure}[!h]
	\centering
	\includegraphics{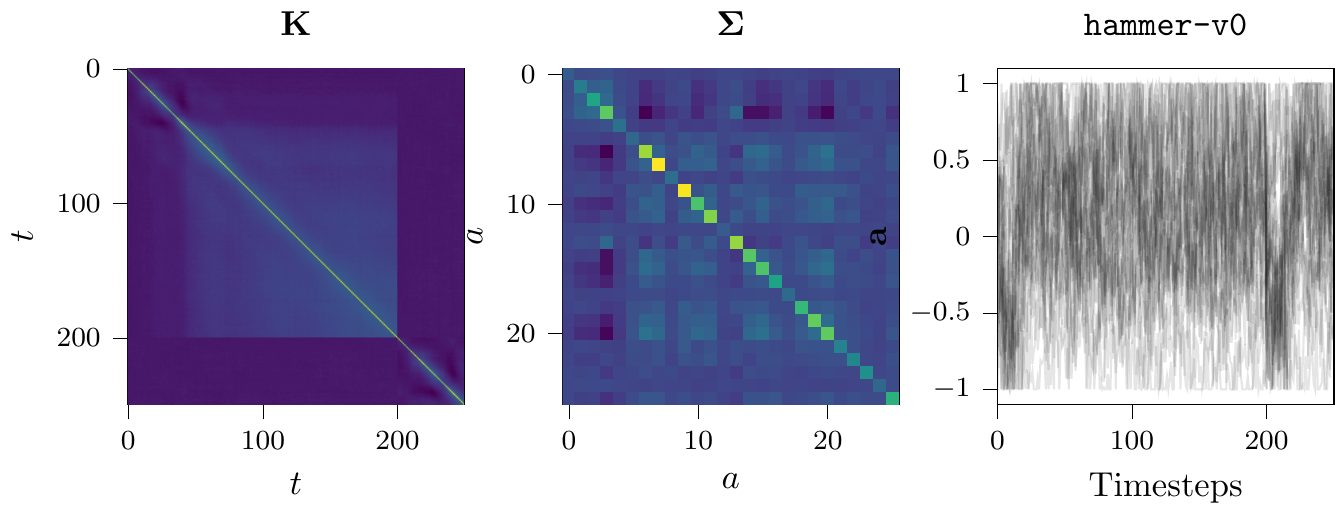}
	\caption{\texttt{hammer-v0} expert demonstrations, showing matrix normal covariance fit and action sequence. The \texttt{viridis} colourmap is used to express correlations.}
	\label{fig:mavn_hammer}
	\vspace{-2em}
\end{figure}

\begin{figure}[!h]
	\centering
	\includegraphics{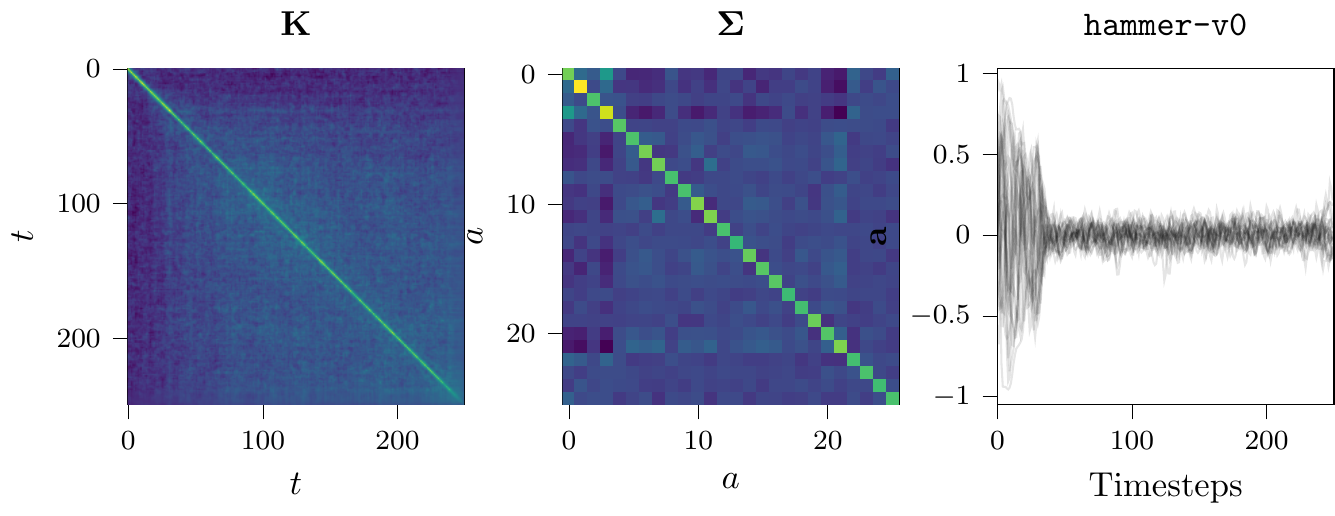}
	\caption{\texttt{hammer-v0} expert demonstrations from a \mpc solvers (\mppi, \lbps, \essps) using the \se kernel, showing matrix normal covariance fit and action sequence. The \texttt{viridis} colourmap is used to express correlations.}
	\label{fig:mavn_se_hammer}
\end{figure}

\newpage

\begin{figure}[!h]
	\centering
	\includegraphics{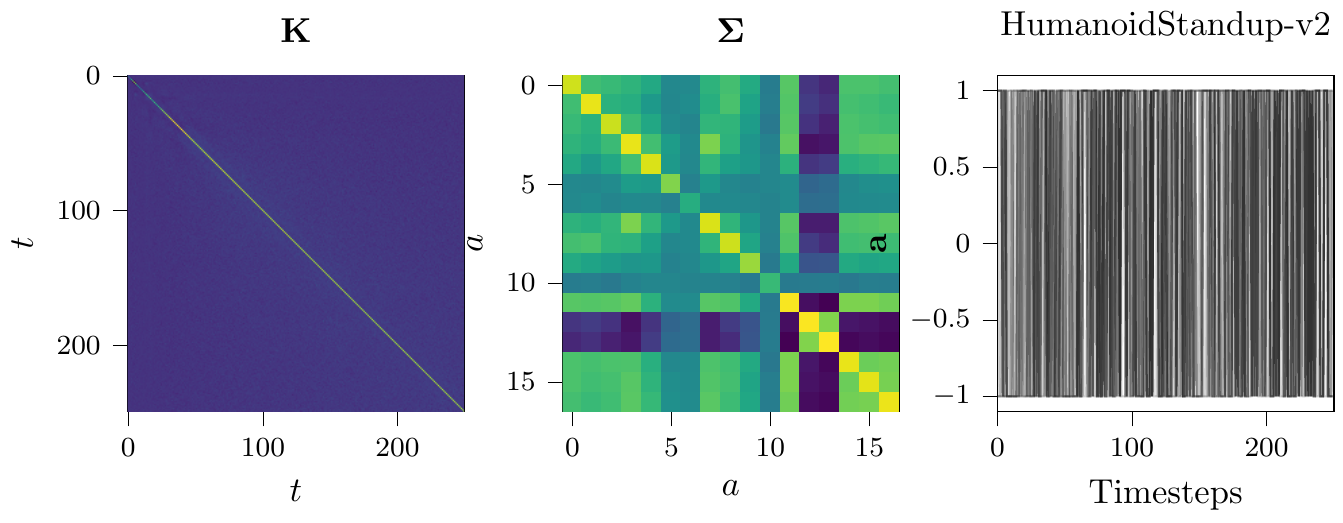}
	\caption{\texttt{HumanoidStandup-v2} expert demonstrations from a \textsc{gac} agent, showing matrix normal covariance fit and action sequence. The \texttt{viridis} colourmap is used to express correlations.}
	\label{fig:mavn_hsu}
\end{figure}

\begin{figure}[!h]
	\centering
	\includegraphics{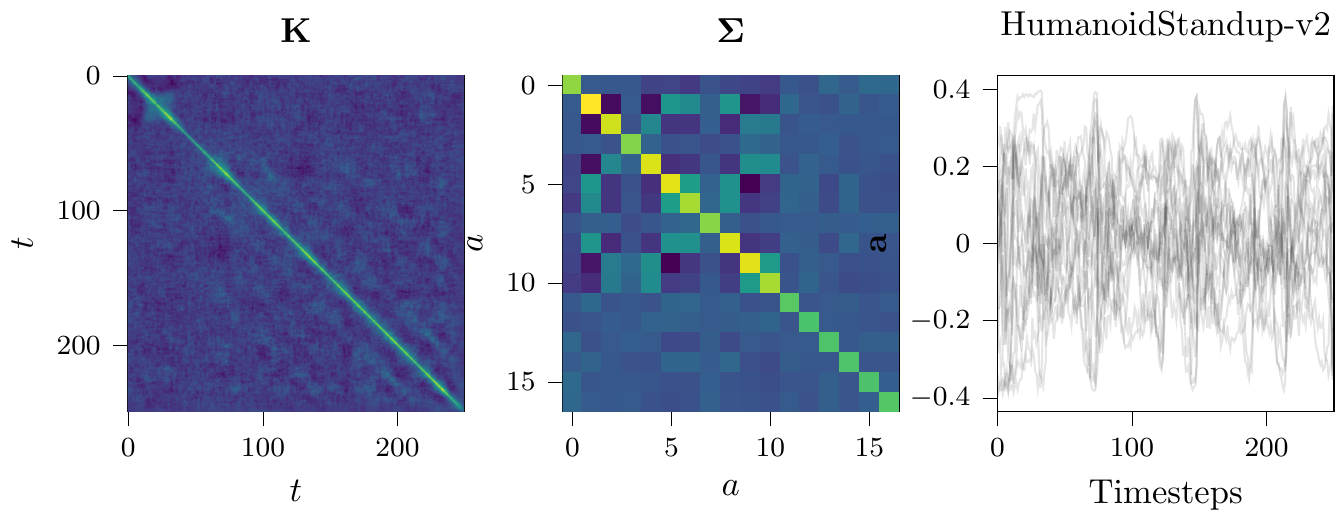}
	\caption{\texttt{HumanoidStandup-v2} expert demonstrations from a \mpc solvers (\mppi, \lbps, \essps) using the \se kernel, showing matrix normal covariance fit and action sequence. The \texttt{viridis} colourmap is used to express correlations.}
	\label{fig:mavn_se_hsu}
\end{figure}

\clearpage\newpage 

\subsubsection{Model predictive control ablation studies}
\label{sec:mpc_ablation}

The \mpc methods compared in Section \ref{sec:mpc} have subtle variations in their implementation.
\mppi has a constant covariance during optimization, while \cem \mpc resets the covariance each timestep. 
To understand the implications of these details, we provide ablations over these design decisions, in comparison to the results in Section \ref{sec:mpc}. 

\textbf{\lbps and \essps with constant covariances}

On the whole the performance drops compared to the updated covariance results (Figure \ref{fig:smooth_mpc}), which suggets that a fixed variance requires greedier optimization (i.e. \mppi) or more iterations per timestep for good performance. 

\begin{figure}[!h]
	\includegraphics{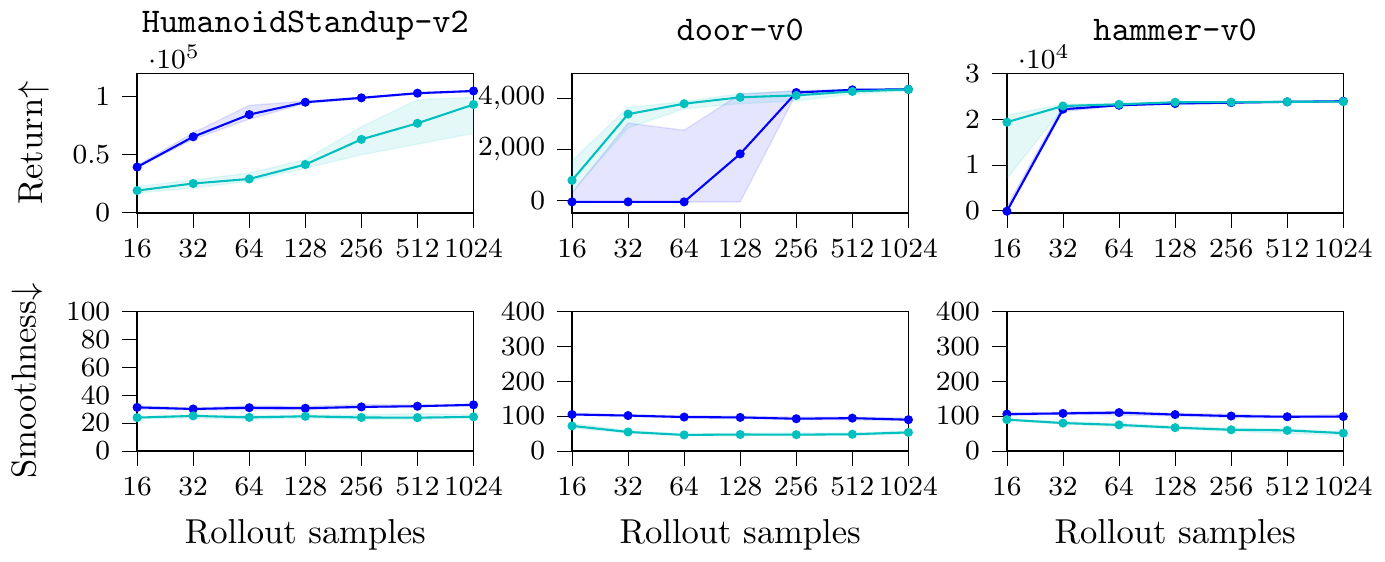}
	\caption{Monte Carlo \mpc with \lbps and \essps with fixed covariance, like \mppi, as opposed to updated each timestep. Displaying quartiles over 20 seeds.}
	\label{fig:ppi_anneal_albation}
\end{figure}

\textbf{\mppi with covariance updates}

Conversely to the result above, \mppi's greedier optimization does not work effectively with covariance updates, as entropy is quickly lost during the control. 

\begin{figure}[!h]
	\includegraphics{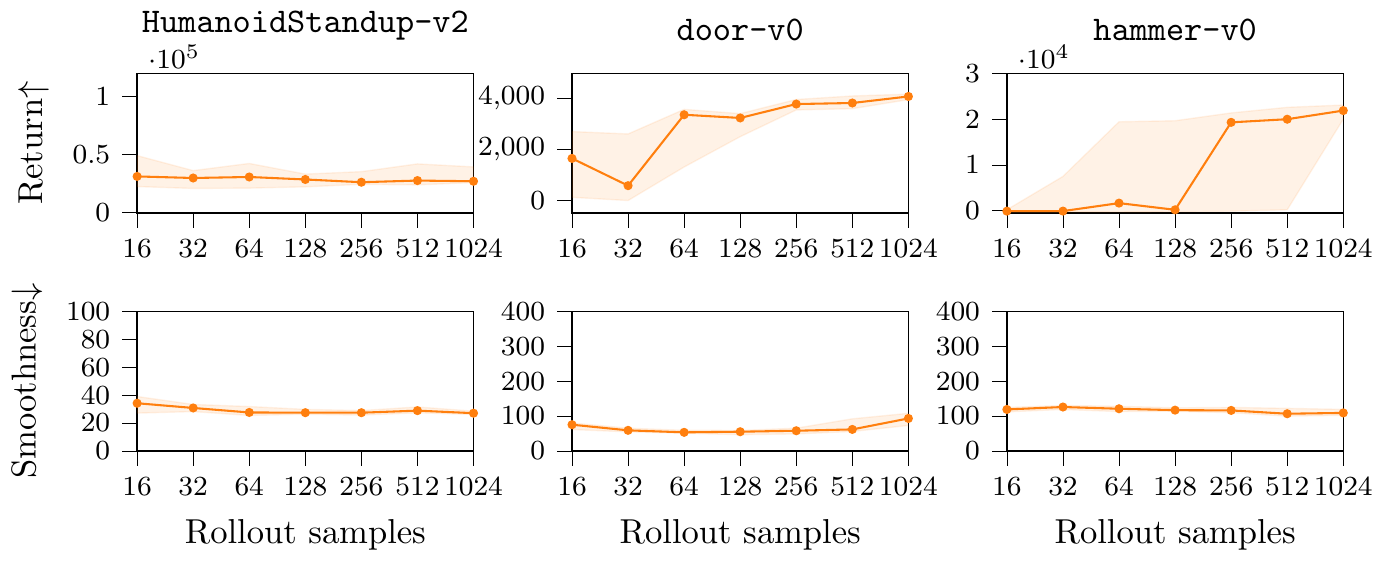}
	\caption{Monte Carlo \mpc using \mppi with covariance updates. Displaying quartiles over 20 seeds.}
	\label{fig:mppi_update}
\end{figure}

\clearpage\newpage

\textbf{\cem with a \se kernel prior}

Compared to the white noise prior in Figure \ref{fig:wn_mpc}, the \se kernel provides an improvement boost for \hsu and \hammer. 
There is a smoothness improvement, but slightly smaller than compared to Figure \ref{fig:smooth_mpc}.
Based on the results of Figure \ref{fig:ppi_anneal_albation}, we can attribute this to \cem \mpc resetting the policy covariance each timestep.  

\begin{figure}[!h]
	\includegraphics{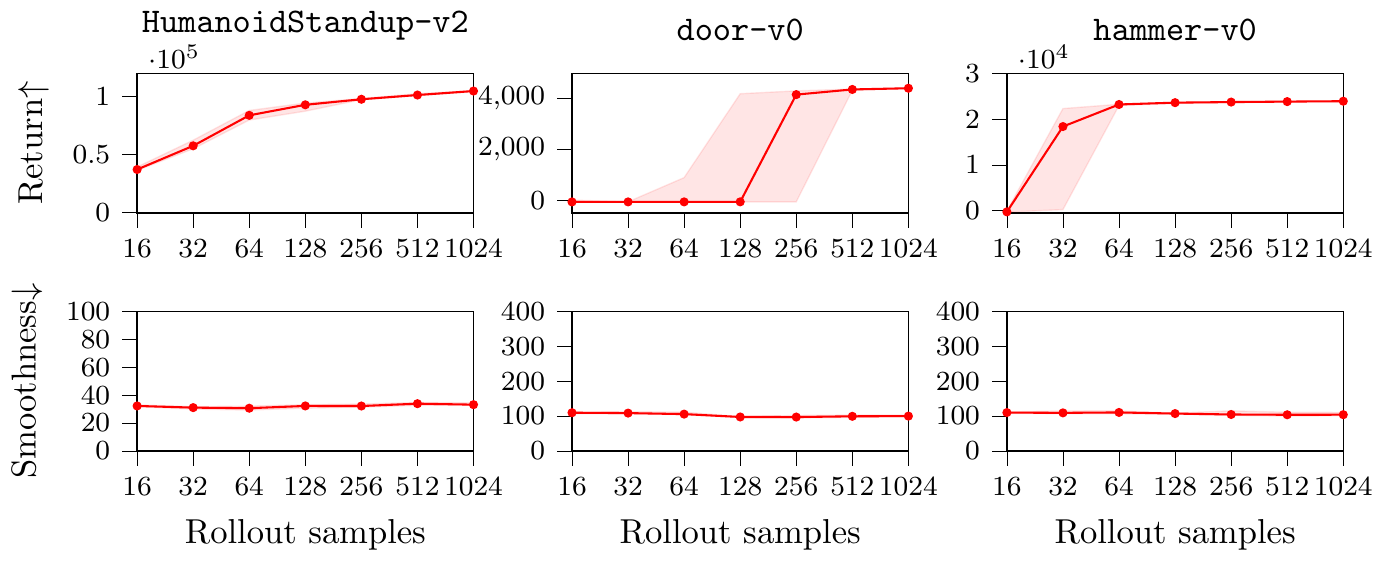}
	\caption{Monte Carlo \mpc using \cem with the \se. Displaying quartiles over 20 seeds. Results closely match \essps with the \se kernel and fixed covariance (Figure \ref{fig:ppi_anneal_albation}).}
	\label{fig:cem_se_update}
\end{figure}

\newpage

\section{Experimental Details}

\subsection{Hyperparameter Selection}
Parameters were swept with there respective solver for best average performance with 256 sample rollouts, averaged over 5 seeds.
Parameters were not swept if prior work had identified effective values.
\begin{table}[!h]
	\centering
	\caption{Hyperparameter selection for model predictive control methods}
	\label{tab:hypopt_sweep}
			\begin{tabular}{llllll}
					& $\alpha$ (\mppi) &  $\bar{\alpha}$ (\pitwo) \cite{theodorou2010generalized} & $\delta$ (\lbps) & $k$, $N^*$ (\cem, \icem, \essps) \cite{PinneriiCEM}\\
					\midrule
					Considered & [0.1, 1, 10] & - & [0.1, 0.5, 0.9] & - \\
					\hsu & 10 & 10 & 0.9 & 10 \\
					\door & 10 & 10 & 0.5 & 10 \\
					\hammer & 10 & 10 & 0.5 & 10 \\
					\bottomrule     
				\end{tabular}
\end{table}

\begin{table}[!h]
	\centering
	\caption{Hyperparameter selection for model predictive control policies}
	\label{tab:hypopt_policy_sweep}
		\begin{tabular}{lcccc}
			& $\beta$ (smooth noise) &  $\beta$ (smooth action) & $\beta$ (coloured noise) \cite{PinneriiCEM} & $l$ (\se kernel) \\
			\midrule
			Considered & [0.5, 0.7, 0.9] & [0.5, 0.7, 0.9]& - & [0.01, 0.025, 0.05, 0.1] \\
			\hsu & 0.5 & 0.7 & 2.0 & 0.05 \\
			\door & 0.9 & 0.9 & 2.5 & 0.05 \\
			\hammer  & 0.7 & 0.7 & 2.5 & 0.025\\
			\bottomrule     
		\end{tabular}
\end{table}

\subsection{Experiment Hyperparameters}
\label{sec:experiments}

\textbf{Black-box Optimization.}
We evaluated the solvers on the $20$-dimensional version of the test functions.
The initial search distribution was $\gN(\vone, 0.5\mI)$ for all methods.
32 samples were used over 20 iterations.

\textbf{Policy Search.}
The experimental setting of prior work \citep{lutter2020differentiable} was reproduced, with the same \mujoco simulation and episodic reward function.

\begin{table}[!h]
	\centering
	\caption{Hyperparameters for policy search tasks.}
	\label{tab:hypopt_ps}
		\begin{tabular}{lccccccc}
			& $T$ & $d_a$ & \texttt{n\_iter} & \texttt{n\_samples} & \rbf features & \rff order & lengthscale, $l$ \\
			\midrule
			\texttt{BallInACup} & 1000 & 4 & 40 & 128 & 20 & 10 & $\sqrt{0.03}$\\
			\bottomrule     
		\end{tabular}
\end{table}

\textbf{Model Predictive Control.}
For the \mpc tasks, the action mean and variance for each dimension $i$ was set using
\begin{align*}
\mu_i &= \frac{a_{\text{max}, i} + a_{\text{min}, i}}{2},
\quad
\sigma_i^2 = \frac{(a_{\text{max}, i} - a_{\text{min}, i})^2}{4}.
\end{align*}
This specification means that $\mu_i\pm\sigma_i$ reaches the actuator limits, ensuring coverage across the actuation range when sampling. 
$\mu_i$ defines a `mean function' applied as a bias to the policy, which had nominally zero mean.
$\sigma_i^2$ defines the diagonal of the matrix normal action covariance $\mSigma$.
To match the covariance size of the kernel, 30 features were used for the \rbf and \rff approximations, i.e. order $\nu = 15$. 

\begin{table}[!h]
	\centering
	\vspace{0em}
	\caption{Hyperparameters for model predictive control tasks.}
	\label{tab:hypopt_mpc}
		\begin{tabular}{lccccc}
			& $T$ & $H$ & $d_a$ & \texttt{n\_iters} & \texttt{n\_warmstart\_iters}\\
			\midrule
			\texttt{HumanoidStandup-v2} & 250 & 30 & 17 & 2 & 50 \\
			\texttt{door-v0} & 250 & 30 & 25 & 1 & 50 \\
			\texttt{hammer-v0} & 250 & 30 & 25& 1 & 50 \\
			\bottomrule     
		\end{tabular}
		\vspace{-2em}
\end{table}

\clearpage\newpage

\section{Characterizing the Lower-bound Objective}
\label{sec:lower_bound}
The temperature objective of \reps is the dual function, which is convex (Definition \ref{def:ereps}). 
For the lower bound introduced Section \ref{sec:ppi}, we characterize its nature to understand its suitability for optimization. 

First, we express objective in terms of functions $f$, $g$ and $h$ of $\alpha$, 
\begin{align*}
    \max_\alpha \gL(\alpha) &=
    \frac{\sum_n\exp(\alpha r_n) r_n}{\sum_n\exp(\alpha r_n)} - \lambda\sqrt{\frac{1}{\hat{N}(\alpha)}}
    ,
    \quad
    \hat{N}(\alpha) = \frac{(\sum_n \exp(\alpha r_n))^2}{\sum_n \exp(2\alpha r_n)}\\
    &=
    \frac{\sum_n\exp(\alpha r_n) r_n - \lambda\sqrt{\sum_n \exp(2\alpha r_n)}}{\sum_n\exp(\alpha r_n)}
    = \frac{f(\alpha) - \lambda g(\alpha)}{h(\alpha)}.
    \intertext{The gradient is available in closed form}
    \frac{\rd}{\rd\alpha}\gL(\alpha)
    &=
    \frac{(\sum_n w_n)(\textstyle\sum_n w_n r_n^2)
    - \lambda(\sum_n w_n)\sum_n r_n w_n^2/\sqrt{\sum_n w_n^2}
    - (\sum_n w_n r_n)^2
    + (\sum_n r_n w_n)(\lambda\sqrt{\sum_n w_n^2})}{(\sum_n w_n)^2}.
\end{align*}

This objective is not concave in $\alpha$, but is quasi-concave.
With $\alpha \in \sR_+$,
we enforce $r_n \leq 0$ and $\exp(\alpha r_n)$ is convex in $\alpha$ for $r_n\in\sR$.
We rewrite the objective using in dot product form 
\begin{align*}
    \min_\alpha \gL(\alpha) &=
    \frac{\sum_n\exp(\alpha r_n) r_n + \lambda\sqrt{\sum_n \exp(2\alpha r_n)}}{\sum_n\exp(\alpha r_n)} 
    = 
    \frac{\vw_\alpha\tran\vr - \lambda\sqrt{\vw_\alpha\tran\vw_\alpha}}{\vw_\alpha\tran\bm{1}},
\end{align*}
where
$\vw_\alpha = [\exp(\alpha r_1), \dots, \exp(\alpha r_N)]\tran$.
The term $\vw_\alpha\tran\vr$ is concave as it is a negative weighted sum of convex functions. 
The term $\vw_\alpha\tran\bm{1}$ is convex as it is a positive sum of weighted convex functions.
The remaining term $g(\alpha)$ can be shown to be convex in $\alpha$ (Lemma 3) using standard technques \citep{boyd2004convex}.
\begin{lemma}
The function $g(\alpha) = \sqrt{\sum_n \exp(2\alpha r_n)}$ is convex in $\alpha$ for $r_n\leq 0$ $\forall n$ .
\end{lemma}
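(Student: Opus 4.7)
I would establish convexity of $g$ by the log-convexity route: showing that $\log g$ is convex on $\sR$, which in turn implies $g$ itself is convex. Writing
\begin{equation*}
\log g(\alpha) \;=\; \tfrac{1}{2}\log\sum_n \exp(2\alpha r_n),
\end{equation*}
I observe that the right-hand side is (up to the factor $1/2$) a log-sum-exp evaluated at affine-in-$\alpha$ arguments, namely $\alpha \mapsto 2\alpha r_n$ with the $r_n$ held fixed. Log-sum-exp is a textbook convex function of its arguments, and convexity is preserved under precomposition with an affine map, so $\log g$ is convex on all of $\sR$.

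For the second step, write $g = \exp \circ \log g$. Because $\exp$ is convex and non-decreasing and $\log g$ is convex, the standard composition rule gives convexity of $g$. This completes the argument, and nowhere did I need the sign of $r_n$ or $\alpha$.

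If a direct calculation were preferred, one could differentiate twice: setting $S(\alpha) := g(\alpha)^2 = \sum_n \exp(2\alpha r_n)$, one gets $g'' = S''/(2g) - (S')^2/(4 g^3)$, and showing $g'' \geq 0$ reduces to the inequality $2 S \, S'' \geq (S')^2$, which follows immediately from Cauchy--Schwarz applied to the vectors $(e^{\alpha r_n})_n$ and $(2 r_n e^{\alpha r_n})_n$. The main ``obstacle'' here is really aesthetic rather than technical: the log-sum-exp viewpoint simply bypasses this bookkeeping. I would also remark that the hypothesis $r_n \leq 0$ is not required for convexity of $g$ in $\alpha$, so the sign restriction must be in service of properties of the other terms in $\gL(\alpha)$ (such as sign of $\vw_\alpha\tran\vr$ and its monotonicity) rather than of $g$ itself.
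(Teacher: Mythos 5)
Your proof is correct, and it takes a genuinely different route from the paper's. The paper verifies the defining inequality $g(\theta\alpha+(1-\theta)\beta)\leq\theta g(\alpha)+(1-\theta)g(\beta)$ head-on, by rewriting $\theta g(\alpha)$ as $\sqrt{\sum_n(\theta\exp(\alpha r_n))^2}$ and then chaining Jensen's and Minkowski's inequalities; this is where the hypotheses $r_n\leq 0$ and $\alpha\in\sR_+$ enter, since the argument leans on pointwise comparisons of the summands. Your argument instead goes through log-convexity: $\log g(\alpha)=\tfrac12\log\sum_n\exp(2\alpha r_n)$ is log-sum-exp precomposed with the affine map $\alpha\mapsto(2\alpha r_1,\dots,2\alpha r_N)$, hence convex, and $g=\exp\circ\log g$ is then convex by the standard composition rule. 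This buys you two things: the proof is shorter and relies only on textbook facts, and it shows the sign restriction on $r_n$ is genuinely unnecessary for this lemma — your closing remark that the restriction must be serving the other terms of $\gL(\alpha)$ is correct and worth keeping. It is also the more robust argument: the paper's chain invokes the inequality $\exp(\theta y)\leq\theta\exp(y)$, which as stated fails at $y=0$ (Jensen actually gives $\exp(\theta y)\leq\theta\exp(y)+(1-\theta)$), so the direct manipulation is at best delicate where your composition argument is airtight. Your alternative second-derivative route via Cauchy--Schwarz, $(S')^2\leq S\,S''$ for $S=g^2$, is also valid and is essentially the same log-convexity fact in differential form.
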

\begin{proof}
    For convexity, where $\theta\in [0,1]$, $\alpha \in \sR_+$, $\beta \in \sR_+$, %
\begin{align*}
    \textstyle
    \sqrt{\sum_n \exp(2(\theta\alpha+(1-\theta)\beta) r_n)}
    &\leq
    \textstyle
    \theta\sqrt{\sum_n \exp(2\alpha r_n)}
    +
    (1-\theta)\sqrt{\sum_n \exp(2\beta r_n)}
\intertext{Starting with the right-hand term, we take interpolation term $\theta$ inside}
    \textstyle
    \theta\sqrt{\sum_n \exp(2\alpha r_n)}
    +
    (1-\theta)\sqrt{\sum_n \exp(2\beta r_n)}
    &=
    \textstyle
    \sqrt{\sum_n (\theta\exp(\alpha r_n))^2}
    +
    \sqrt{\sum_n ((1-\theta)\exp(\beta r_n))^2}
\intertext{Apply Jensen's inequality to both exponential terms, where $\exp(xy)\leq x\exp(y)$, as they are inside Euclidean norms so we can use $\sqrt{\sum_i p_i^2}\leq \sqrt{\sum_i q_i^2}$ if $p_i < q_i$,}
    \textstyle
    \sqrt{\sum_n \exp(\theta\alpha r_n)^2}
    +
    \sqrt{\sum_n \exp((1-\theta)\beta r_n)^2}
    &\leq
    \textstyle
    \sqrt{\sum_n (\theta\exp(\alpha r_n))^2}
    +
    \sqrt{\sum_n ((1-\theta)\exp(\beta r_n))^2}.
\intertext{Apply Minkowski's inequality to the LHS, where
$(\sum_i|x_i + y_i|^p)^{1/p}\leq (\sum_i |x_i|^p)^{1/p} + (\sum_i|y_i|^p)^{1/p}$}
    \textstyle
    \sqrt{\sum_n (\exp(\theta\alpha r_n)+\exp((1-\theta)\beta r_n))^2}
    &\leq
    \textstyle
    \sqrt{\sum_n \exp(\theta\alpha r_n)^2}
    +
    \sqrt{\sum_n \exp((1-\theta)\beta r_n)^2}
\intertext{Expand the terms of the LHS and remove the (non-negative) squared terms}
    \textstyle
    \sqrt{\sum_n 2\exp((\theta\alpha+(1{-}\theta)\beta) r_n)}
    &\leq
    \textstyle
    \sqrt{\sum_n \exp(2\theta\alpha r_n)+\exp(2(1{-}\theta)\beta r_n) + 2\exp((\theta\alpha+(1{-}\theta)\beta) r_n)}
\intertext{Apply Jensen's inequality again to recover the initial left hand term}
    \textstyle
    \sqrt{\sum_n \exp(2(\theta\alpha+(1-\theta)\beta) r_n)}
    &\leq
    \textstyle
    \sqrt{\sum_n 2\exp((\theta\alpha+(1-\theta)\beta) r_n)}.
\end{align*}
    
\end{proof}

With the lemma, the negative penalty term is concave as $\lambda \geq 0$.

For quasi-convexity, we require the $t$-level sets to be convex 
$\{\alpha\in\sR_+ \mid \gL(\alpha) \leq t\}, t\in\sR$.
Due to the structure of the objective, we require $f(\alpha) - \lambda g(\alpha) \leq t\,h(\alpha)$.
As $f(\alpha) \leq 0$, $-\lambda g(\alpha) \leq 0$ and $h(\alpha) \geq 0$, for $t>0$ we have the empty set, which is convex.
For $t\leq 0$, we have the sum of two concave function which are both less or equal to zero, so the set is also convex.

\section{Stochastic processes, Gaussian processes and coloured noise}
\label{sec:stoch_processes}

This section summarizes the connections between stochastic processes, smoothed noise, coloured noise and Gaussian processes to motivate the use of kernels in action priors.
For further details, we refer to Section 12.3 of S\"arkk\"a et al. \citep{sarkka_solin_2019} and Chapters 4 and Appendix B of Rasmussen et al. \citep{gpml}.  

Section \ref{sec:ppi_mpc} introduced smooth Gaussian noise processes of the form 
\begin{align*}
	n^{(n)}_t = \textstyle\sum_{i=1}^p a_i\, n^{(n)}_{t-i} + b_0\,v^{(n)},
	\quad
	v^{(n)} \sim \gN(0, 1).
\end{align*}
to sample action sequences.
This is known as a discrete-time autoregressive AR($p$) process.
The ARMA($p$, $q$) process introduces a noise history,
such that ARMA($p$, $0$) is a AR($p$) process
\begin{align*}
	n^{(n)}_t = \textstyle\sum_{i=1}^p a_i\, n^{(n)}_{t-i} + \textstyle\sum_{j=1}^q b_j\,v_{t-j}^{(n)},
	\quad
	v_t^{(n)} \sim \gN(0, 1).
\end{align*}
In continuous-time, the AR($1$) is analogous to the Ornstein–Uhlenbeck (OU) process
$$\rd n(t) = a_0\, n(t)\,\rd t + v(t)\,\rd t.$$
The OU covariance function is 
$\text{Cov}(t, t') = \sigma^2\exp(-a_0|t-t'|)$, which is also known as the exponential kernel. 
For additional smoothness we can consider higher-order derivatives, which results in the Mat\'ern family of kernels.
In stochastic differential equation form, they are written as 
\begin{align*}
	n(t) &= \mH \vf(t),
	\quad
	\rd\vf(t) = \mA\,\vf(t)\,\rd t + \mL\,v(t)\,\rd t,
\end{align*}
where $\vf$ contains $n(t)$ and its derivatives, describing the state.
The order $\nu$ of the Mat\'ern defines the size of the state space and $\mA$. 
A first-order kernel reduces to the exponential kernel.
These Mat\'ern kernels are Markovian in their state space, following a linear Gaussian dynamical system (\lgds). 
Therefore, they can be compared to the Gaussian processes used in \stomp \citep{stomp} and \gpmp \citep{Mukadam_mp}, which are also \lgds{}s but with different state space models that perform Euler integration, producing priors with Gaussian noise on the velocity, acceleration or jerk.
Extending the derivatives for the Mat\'ern kernel, as the order $\nu\rightarrow\infty$, we arrive at the squared exponential kernel ${\text{Cov}(t, t') {\,=\,} \sigma^2\exp\left(-|t-t'|^2/2l^2\right)}$. 
Comparing to the exponential kernel earlier, we observe the $a_0$ in the OU process defines the lengthscale of the covariance function.

Considering stationary covariance functions, where $\text{Cov}(t,t{+}\tau)=\text{Cov}(\tau)$, the power spectral density is defined as the Fourier transform of the covariance function 
\begin{align*}
	S(\omega) = \int \text{Cov}(\tau)\exp(i\omega\tau)\,\rd\tau.
\end{align*}
From the linear systems perspective, the parameters $a$ and $b$ of an ARMA process describe a linear filter $F(\omega)$ where
$N(\omega) = F(\omega)V(\omega)$.
In the frequency domain, such is realized as a filter
$$
F(\omega) = \frac{|B(\exp i\omega)|^2}{|A(\exp i\omega)|^2},
\quad
\text{where}
\quad
A(\omega) = \textstyle\sum_{k=1}^p a_i \exp(ki\omega).
$$
Conversely, \emph{coloured} noise with parameter $\beta$ applies a filter 
$\propto 1/\omega^{\beta}$ to Gaussian noise. 
The power spectrum of the $\nu$-order Mat\'ern kernel is $S(\omega) \propto (l^2 + \omega^2)^{\nu/2 + 1}$ and the squared exponential is 
$S(\omega) \propto \exp(-l^2\omega^2/2)$.
While there is no explicit connection between coloured noise and Gaussian processes, by reasoning about their power spectrums it can be seen that they can produce similar sampled paths.

\newpage

\end{document}